\documentclass[twoside,11pt]{article}

\usepackage{jmlr2e}

\jmlrheading{24}{2023}{1-37}{09/22; Revised 02/23}{04/23}{22-1043}{Riccardo Grazzi, Massimiliano Pontil and Saverio Salzo}

\ShortHeadings{Bilevel Optimization with a Lower-level contraction}{Grazzi, Pontil and Salzo}
\firstpageno{1}

\usepackage[utf8]{inputenc}

\usepackage[utf8]{inputenc}
\usepackage{natbib}
\usepackage{algorithmic}
\usepackage{algorithm}
\usepackage{amsmath}
\usepackage{amssymb}
\usepackage{amsthm}

\usepackage{appendix}
\usepackage{xcolor}
\usepackage{enumitem}
\usepackage{color}
\usepackage{graphicx}
\usepackage{subfigure}
\usepackage{booktabs} %

\setlength {\marginparwidth }{2cm} %
\usepackage[colorinlistoftodos,prependcaption,textsize=footnotesize]{todonotes}

\usepackage{cleveref}
\usepackage{hyperref}

\crefname{appsec}{appendix}{appendices}

\newcommand{\BlackBox}{\rule{1.5ex}{1.5ex}}  %
\ifdefined\proof
    \renewenvironment{proof}{\par\noindent{\bf Proof\ }}{\hfill\BlackBox\\[2mm]}
\else
    \newenvironment{proof}{\par\noindent{\bf Proof\ }}{\hfill\BlackBox\\[2mm]}
\fi
 
\newtheorem{theorem}{Theorem}
\newtheorem{lemma}[theorem]{Lemma} 
 
\newtheorem{remark}[theorem]{Remark}
\newtheorem{corollary}[theorem]{Corollary}
\newtheorem{definition}[theorem]{Definition}

\newtheorem{assumption}{Assumption}

\newif\ifstandardproofs
\standardproofsfalse

\newcommand{\R}{\mathbb{R}}
\newcommand{\N}{\mathbb{N}}
\newcommand{\Exp}[2][]{\mathbb{E}#1[{#2}#1]}
\newcommand{\Exptilde}[2][]{\tilde{\mathbb{E}}#1[{#2}#1]}
\newcommand{\PP}{\mathbb{P}}
\newcommand{\EE}{\mathbb{E}}
\newcommand{\VV}{\mathbb{V}}
\newcommand{\Var}[2][]{\mathbb{V}#1[{#2}#1]}
\newcommand{\Vartilde}[2][]{\tilde{\mathbb{V}}#1[{#2}#1]}
\newcommand\given[1][]{\:#1\vert\:}
\newcommand{\indep}{\perp \!\!\! \perp}
\newcommand{\norm}[2][]{#1\lVert{#2}#1\rVert}
\newcommand{\CZ}{\mathcal{Z}}
\newcommand{\vopt}[1]{v({#1},\lambda)}
\newcommand{\voptj}[1]{{\bar v({#1},\lambda)}}
\newcommand{\wstoc}[2][t]{w_{#1}(\lambda_{#2})}
\newcommand{\vstocj}[3][k]{v_{{#1}}({#2},\lambda_{{#3}})}
\newcommand{\csid}{c_3}

\newcommand{\mtwo}{\sigma'_{2}}
\newcommand{\mtwoe}{\sigma_{2,\fo}}
\newcommand{\mone}{\sigma'_{1}}
\newcommand{\monee}{\sigma_{1,\fo}}
\newcommand{\boundw}{B}
\newcommand{\sz}[1]{\eta_{{#1}}}
\newcommand{\sigone}{\sigma_{1}}
\newcommand{\sigtwo}{\sigma_{2}}

\newcommand{\grad}{\nabla}
\newcommand{\jac}{\partial}

\newcommand{\fo}{E} %

\newcommand{\Low}{\mu_{1}}
\newcommand{\Lol}{\mu_{2}}
\newcommand{\Bo}{L_{E}}
\newcommand{\Lola}{\bar{\mu}_{1}}
\newcommand{\Lolb}{\bar{\mu}_{2}}

\newcommand{\boundjac}{L_{\Phi}}
\newcommand{\lipw}{\bar{\nu}_{1}}
\newcommand{\lipl}{\bar{\nu}_{2}}
\newcommand{\g}[1]{\hat \nabla f(\lambda_{#1})}
\newcommand{\yc}{c'}
\newcommand{\yd}{c}

\newcommand{\lipwprime}{L_{w'}}
\newcommand{\ceil}[1]{\lceil{#1}\rceil}

\newcommand{\q}{q}
\newcommand{\rhol}{\nu_{2}}
\newcommand{\rhow}{\nu_{1}}
\newcommand{\LPhi}{L_{\Phi}}

\newcommand{\rf}{\rho}
\newcommand{\hrf}{\sigma}

\newcommand{\s}{s}
\newcommand{\stot}{S}
\newcommand{\J}{J}
\newcommand{\D}{\Lambda}

\newcommand{\proj}{P_{\D}}

\newcommand{\MSE}{\text{MSE}_{\hat \nabla f(\lambda)}}

\newcommand{\MSEC}[1]{\text{MSE}_{#1}}

\DeclareMathOperator*{\argmin}{arg\,min}

 \newcommand{\algosize}{\footnotesize}

\newcommand{\CE}{\text{CE}}
\newcommand{\ball}{\mathcal{B}}

\newcommand{\Ic}{\mathcal{I}}

\newcommand{\SID}{SID}

\begin{document}

\title{Bilevel Optimization with a Lower-level Contraction: Optimal Sample Complexity without Warm-start}

\author{Riccardo Grazzi}
\author{\name Riccardo Grazzi \email riccardo.grazzi@iit.it \\
    \addr Computational Statistics and Machine Learning, \\ Istituto Italiano di Tecnologia, Genoa, Italy and \\ University College of London, UK
       \AND
      \name Massimiliano Pontil \email massimiliano.pontil@iit.it \\
    \addr Computational Statistics and Machine Learning, \\ Istituto Italiano di Tecnologia, Genoa, Italy and \\ University College of London, UK
       \AND
      \name Saverio Salzo \email saverio.salzo@iit.it \\
      \addr Universit\'a la Sapienza di Roma, Italy and \\
       Computational Statistics and Machine Learning, \\ Istituto Italiano di Tecnologia, Genoa, Italy 
}

\editor{Francis Bach}

\maketitle

\begin{abstract}%
We analyse a general class of bilevel problems, in which the upper-level problem consists in the minimization of a smooth objective function and the lower-level problem is to find the fixed point of a smooth contraction map. This type of problems include instances of meta-learning, equilibrium models, hyperparameter optimization and data poisoning adversarial attacks.
Several recent works have proposed algorithms which warm-start the lower-level problem, i.e.~they use the previous lower-level approximate solution as a staring point for the lower-level solver.
This warm-start procedure allows one to improve the sample complexity in both the stochastic and deterministic settings, achieving in some cases the order-wise optimal sample complexity.
However, there are situations,
e.g., meta learning and equilibrium models, in which 
the warm-start procedure
is not well-suited or  ineffective.
In this work
we show that without warm-start, it is still possible to achieve order-wise (near) optimal  sample complexity.
In particular, we propose a simple method which uses (stochastic) fixed point iterations at the lower-level and projected inexact gradient descent at the upper-level, that reaches an $\epsilon$-stationary point using $O(\epsilon^{-2})$ and $\tilde{O}(\epsilon^{-1})$ samples for the stochastic and the deterministic setting, respectively. 
Finally,
compared to methods using warm-start, our approach 
yields a simpler analysis that does not need to study the coupled interactions between the upper-level and lower-level iterates.

\end{abstract}
\begin{keywords}
bilevel optimization; warm-start; non-convex optimization; implicit differentiation; hypergradient; sample complexity. 
\end{keywords}

\section{Introduction}
This paper studies bilevel optimization in the context of machine learning and the design of efficient and principled optimization schemes. More specifically, we consider the following general problem
\begin{equation}
\begin{aligned}
\label{mainprobstoch}
&\min_{\lambda \in \Lambda} f(\lambda) := \Exp{\hat\fo(w(\lambda), \lambda, \xi)}\\
&\text{\ subject~to ~}w(\lambda) = \Exp{\hat\Phi(w(\lambda), \lambda, \zeta)},
\end{aligned}
\end{equation}
where $\Lambda \subseteq \R^m$ is closed and convex, 
$\hat \fo\colon \R^d \times \Lambda \times \Xi  \to \R$ and $\hat\Phi\colon \R^d \times \Lambda \times Z  \to \R^d$, $\xi$ and $\zeta$ are two independent random variables with values in $\Xi$ and $Z$, respectively.
In the following we refer to the problem of finding the fixed point $w(\lambda)$ of \eqref{mainprobstoch} as  the \emph{lower-level} (LL) problem, whereas we call the \emph{upper-level} (UL) problem, that of  minimizing $f$.

Many machine learning problems can be naturally cast in the form 
\eqref{mainprobstoch}. %
Important examples are instances of 
hyperparameter optimization~\citep{maclaurin2015gradient,franceschi2017forward,liu2018darts,lorraine2019optimizing,elsken2019neural}, meta-learning~\citep{andrychowicz2016learning,finn2017model,franceschi2018bilevel},  equilibrium models \citep{bai2019deep}, data poisoning attacks \citep{mei2015using,munoz2017towards}, and graph and recurrent neural networks~\citep{almeida1987learning,pineda1987generalization,scarselli2008graph}. 
In the following we define
\begin{equation*}
   E(w,\lambda) : = \Exp{\hat\fo(w, \lambda, \xi)}, \quad \Phi(w, \lambda) : = \Exp{\hat\Phi(w, \lambda, \zeta)}, 
\end{equation*}
and we assume that $\Phi(\cdot, \lambda)$ is a contraction, 
i.e.~Lipschitz continuous with Lipschitz constant less than one. 
An important special case of the LL problem~in~\eqref{mainprobstoch}, which is the one usually considered in the related literature,
is when
\begin{equation}
\label{minminstochprob}
w(\lambda) = \argmin_{w\in \R^d} \Exp{\hat{\mathcal{L}}(w, \lambda, \zeta)}.
\end{equation}
In this case, provided that the objective 
$\mathcal{L}(w, \lambda) := \Exp{\hat{\mathcal{L}}(w, \lambda, \zeta)}$ 
is strongly convex and Lipschitz smooth,
there always exists a sufficiently small $\eta > 0$ such that the gradient descent map 
\begin{equation}\label{eq:gradmap}
\Phi(w,\lambda) : = w - \eta \nabla_1 \mathcal{L}(w, \lambda),
\end{equation}
is a contraction with respect to $w$.

In dealing with Problem~\eqref{mainprobstoch},
we analyse gradient-based methods which exploit approximations of the hypergradient, i.e.~the gradient of $f$ in \eqref{mainprobstoch}.
As shown in \cite{grazzi2020iteration}, the contraction assumption guarantees that $\Phi(\cdot, \lambda)$ has a unique fixed point $w(\lambda)$ and the hypergradient, thanks to the implicit function theorem \citep[Theorem 5.9]{lang2012fundamentals}, always exists and is given by
\begin{equation}
\label{eq:hypergrad}
    \grad f (\lambda) = \grad_2 \fo(w(\lambda), \lambda)  + \jac_2 \Phi (w(\lambda), \lambda)^\top \vopt{w(\lambda)},
\end{equation}
where $\nabla_i E$ and $\partial \Phi_i$
are the gradient and the Jacobian matrix with respect to the $i$-th component of
$E$ and $\Phi$ respectively,  
and $\vopt{w}$ is the solution of the linear system 
\begin{equation}
\label{eq:LS}\tag{LS}
(I - \jac_1 \Phi(w,\lambda)^\top) v = \nabla_1\fo(w,\lambda),
\end{equation}
which is given by $\vopt{w} : = \big(I - \jac_1 \Phi (w, \lambda)^\top \big)^{-1} \grad_1 \fo(w,\lambda)$.

Computing the hypergradient exactly can be impossible or very expensive since it requires to compute the LL and LS solutions $w(\lambda)$ and $\vopt{w(\lambda)}$. 
This is especially true in large-scale machine learning applications where the number of UL and LL parameters $m$ and $d$ can be very large. 
Furthermore, in cases such as
hyperparameter optimization, where $\fo$ is the average loss over the validation set while $\Phi$ is defined in \eqref{eq:gradmap} with $\mathcal{L}$ being the loss over the training set, if the data set is large, $\fo$, $\Phi$ and their derivatives can become very expensive to compute. For this reason, 
relying on stochastic estimators ($\hat \fo$ and $\hat \Phi$) using only a mini-batch of examples becomes crucial for devising scalable methods. 

To address these issues, \textit{approximate implicit differentiation} (AID) methods \citep{pedregosa2016hyperparameter, rajeswaran2019meta, lorraine2019optimizing}, compute the hypergradient by using approximate solutions for the LL and LS problems. \textit{Iterative differentiation methods} (ITD) \citep{maclaurin2015gradient, franceschi2017forward, franceschi2018bilevel, finn2017model} instead directly differentiate the lower-level solver. 
The convergence of those methods to the true hypergradient has been studied in  \citep{grazzi2020iteration} for AID and ITD methods in the deterministic case and in \citep{grazzi2021convergence} for stochastic AID methods. 

By contrast, here we study the convergence rate of a full bilevel procedure
to solve  Problem~\eqref{mainprobstoch},
based on an extension of the AID method presented in \citep{grazzi2021convergence}. Such type of study was started by \cite{ghadimi2018approximation} and was later followed by several works which we discuss in \Cref{se:related}. Concerning ITD-based methods, we note that similar results were proved only in the deterministic setting \citep{ji2021bilevel,ji2022will}.

\textit{Warm-start.} A common procedure to improve the overall performance of bilevel algorithms is that of using as a starting point for the LL (or LS) solver at the current UL iteration, the  LL (or LS) approximate solution found at the previous UL iteration \citep{hong2020two,guo2021randomized,feihu2021biadam,chen2021tighter}. This  strategy, 
which is called \emph{warm-start}, reduces the number of LL (or LS) iterations needed by the bilevel procedure and is thought to be fundamental to achieve the optimal sample complexity \citep{arbel2021amortized}. 
Moreover, warm-start is sometimes accompanied by the use of \textit{large mini-batches} \citep{ji2021bilevel,arbel2021amortized}, i.e.~averages of many samples, to estimate gradients or Jacobians. 
Large mini-batches allow to reduce the number of UL iteration but increase the cost per iteration and ultimately achieve the same sample complexity up to log terms.

\newcommand{\Dtr}{D_i^{\text{tr}}}
\newcommand{\Dval}{D_i^{\text{val}}}

In spite of the above advantages, warm-start presents a major downside: 
\textit{it is not suitable in applications where it is expensive to store the whole LL solution, such as meta-learning}. 
Indeed, meta-learning  consists in leveraging
``common  properties'' between a set of learning tasks  in order to facilitate the learning process.
 We consider a \textit{meta-training} set of $T$ tasks. Each task $i\in \{1,\dots,T\}$ relies on a training and a validation set which we denote by $\Dtr$ and $\Dval$, respectively. The meta-learning optimization problem is a bilevel problem where the UL objective has the form $f(\lambda) = \sum_{i=1}^{T} f_i(\lambda)$ with $f_i(\lambda) := \mathcal{L}(w^i(\lambda), \lambda; \Dval)$ and the LL solution can be written as
\begin{equation}\label{eq:LLmeta}
    w(\lambda) = \argmin_{w \in \R^{T \times d}} \sum_{i=1}^{T} \mathcal{L}(w^i, \lambda; \Dtr),
\end{equation}
where $\mathcal{L}$, $\lambda$ and $w^i$ (the $i$-th row of $w$) are the loss function, the meta-parameters, and task-specific parameters of the $i$-th task, respectively. 
For example, in \citep{franceschi2018bilevel} $w^i$ and $\lambda$  are the parameters of the last linear layer and the representation part of a neural network, respectively.
Note that the minimization in \eqref{eq:LLmeta} can be performed separately for each task. 
Therefore, when $T$ is large, a common strategy is that of solving, at each UL iteration only a small random subset of tasks.

In this context using warm-start  is problematic.
Indeed, if task $j$ is sampled at iteration $s$, applying warm-start consistently would require using, as a starting point for the LL optimization, the solution for that same task $j$ at iteration $s-1$. However, the task $j$ might not be among the sampled tasks at iteration $s-1$.
A possible remedy would be to warm-start by using the last available approximate solution of the LL problem for task $j$. However, this solution might have been computed too many iterations before the current one, ultimately making the warm-start procedure ineffective (see experiments in \Cref{se:metalearningexp}). In addition, the above strategy would need to keep the approximate solutions for all the previous tasks in memory and eventually for all the $T$ tasks, which might be too costly when $T$ and $d$ are large. Indeed, in \Cref{se:metalearningexp} we consider a problem in which the variable $w$ occupies $122$ GB of memory.  Finally, from the theoretical point of view, this requires a novel analysis to handle the related delays.
This discussion suggests that the warm-start strategy currently considered in literature is not well suited for meta-learning, and indeed is seldom used in meta-learning experiments.

We note that similar issues arise also for equilibrium models when dealing with large data sets. Indeed, in the bilevel formulation of equilibrium models (see e.g.\@ \cite{grazzi2020iteration}) the LL problem consists in finding a fixed point representation for each training example and ultimately yields a separable structure as in meta-learning.

\textit{Contributions.} 
In this work we show for the first time that a bilevel procedure that does not rely on warm-start can achieve optimal sample complexity, improving that by \citet{ghadimi2018approximation}. 
Specifically, we make the following contributions.

\begin{itemize}
    \item{\em We extend the SID estimator proposed in \citep{grazzi2021convergence} by using large mini-batches to estimate  $\nabla E$ and $\jac_2 \Phi$. }
    We prove that this improved \SID{} (\Cref{algo1}) has a $O(1/t)$ convergence rate on the \textit{mean squared error} (MSE), where $t$ is the number of iterations of the LL and LS solvers and the mini-batch size. 
    
        \item {\em We analyse the sample complexity of the bilevel procedure in \Cref{algo2} (BSGM) which combines projected inexact gradient descent with the hypergradient estimator computed via \SID{}}. In particular, we prove, without any convexity assumptions on $f$, that BSGM achieves the optimal and near-optimal sample complexities of $O(\epsilon^{-2})$ (with a finite horizon) and $\tilde{O}(\epsilon^{-2})$, to reach an $\epsilon$-stationary point of Problem~\eqref{mainprobstoch}. In addition, it obtains near-optimal complexity of $\tilde{O}(\epsilon^{-1})$ for the deterministic case.
        We stress that these results are achieved without warm-start, although with a reasonable additional assumption (see \Cref{rem:assumptions}\ref{rem:wsass} and \Cref{rm:advws}).

    \item {\em We provide a simple and modular theoretical analysis which also extends previous ones by  considering the more general case where the LL problem is a fixed-point equation instead of a minimization problem and by relaxing some of the assumptions}. In particular, we cover the case where $\lambda$ is subject to constraints (i.e.~when $\Lambda \neq \R^m$), which are often needed to satisfy the other assumptions of the analysis, but neglected by some previous works. We also extend the scope of applicability of the method  by including e.g.~non-Lipschitz LL losses, like the square loss, in problems of type~\eqref{minminstochprob}.
    \item \textit{We evaluate the empirical performance of our method against other methods using warm-start on three instances of the bi-level problem \eqref{mainprobstoch}.} Specifically, we provide experiments on equilibrium models and meta-learning showing that warm-start is ineffective and increases the memory cost. We also perform a data poisoning experiment which shows that warm-start can be beneficial, although our method remains competitive. We provide the code at \url{https://github.com/CSML-IIT-UCL/bioptexps}
\end{itemize}
\vspace{-.5truecm}
\textit{Notation.}
We denote by $\norm{\cdot}$ either the Euclidean norm or the spectral norm (when applied to matrices). 
The transpose
and the inverse of a given matrix $A$, is denoted by $A^\top$ and $A^{-1}$, respectively.
For a real-valued function $g\colon \R^n\times\R^m\to \R$, we denote by $\nabla_1 g(x,y) \in \R^n$ and $\nabla_2 g(x,y) \in \R^m$, the partial derivatives w.r.t.~the first and second variable, respectively.  For a vector-valued function $h\colon \R^n\times \R^m \to \R^k$ we denote by
$\partial_1 h(x,y) \in \R^{k\times n}$
and $\partial_2 h(x,y) \in \R^{k\times m}$ the partial Jacobians w.r.t.~the first and second variables respectively.
For a (matrix or vector) random variable $X$  we denote by $\EE[X]$ and $\VV[X] := \Exp{\norm{X -\Exp{X}}^2}$ its expectation and variance respectively.
Finally, given two random variables $X$ and $Y$,
the conditional variance of $X$ given $Y$ is $\Var{X \given Y} := \Exp{\norm{X -\Exp{X \given Y}}^2 \given Y}$.
We use the shorthand $\jac \Phi^\top v$ to denote $\jac \Phi(w,\lambda)^\top v$ for some $w,\lambda$.

\textit{Organization.} 
In \Cref{se:BSGM} we describe the bilevel procedure.
We discuss closely related works in \Cref{se:related}. 
In \Cref{se:preliminaries} we state our assumptions and some properties of the bilevel problem. In \Cref{se:sid} we analyse the convergence of SID. In \Cref{se:bilevelconv} we first study the convergence of the projected inexact gradient method with controllable mean square error on the gradient, and then combine this analysis with the one in \Cref{se:sid} to derive the desired complexity results for BSGM. We present the experiments in \Cref{se:experiments}.

\section{Bilevel Stochastic Gradient Method}\label{se:BSGM}
We study the simple double-loop procedure in Algorithm~\ref{algo2} (BSGM). BSGM uses projected inexact gradient updates for the UL problem, where the (biased) hypergradient estimator is provided by Algorithm~\ref{algo1} (\SID{}). 
\SID{} computes the hypergradient by first solving the LL problem (Step 1), then it computes the estimator of the partial gradients of the UL function $E$ using mini-batches of size $\J$ (Step 2). After this it computes an approximate solution to the LS (Step 3). Finally, it combines the LL and LS solutions together with min-batch estimators of $\nabla_2 E$ and $\jac_2 \Phi$, both computed using a mini-batch of size $J$, to give the final hypergradient estimator (Step 4). 
We remark that the samplings performed at all the four steps have to be mutually independent. 
Moreover, to solve the LL and LS problems we use simple stochastic fixed-point iterations which reduce to stochastic gradient descent in LL problems of type \eqref{minminstochprob}. 
We use the same sequence of step sizes $\eta_i$ for both the LL and LS solvers and the same batch size $J$ for both $\nabla E$ and $\jac_2 \Phi$ to simplify the analysis and to reduce the number of configuration parameters of the method. While this choice still achieves the optimal sample complexity dependency on $\epsilon$, it may be suboptimal in practice and does not achieve the optimal dependency on the contraction constant (see \Cref{rem:cc}).

\SID{} is an extension of Algorithm~1 in \cite{grazzi2021convergence} which additionally takes mini-batches of size $J$ to reduce the variance in the estimation of $\nabla E$ and $\jac_2 \Phi$. 
Note that while we specify the LL and LS solvers, the analysis of Algorithm 2 in \Cref{se:sid} works for any converging solver, similarly to \cite{grazzi2021convergence}. In particular, one could use variance reduction or acceleration methods to further improve convergence whenever possible.

\begin{figure}[ht]
    \centering

\begin{algorithm}[H]
\caption{Stochastic Implicit Differentiation (\SID{})}
\textbf{Requires:} $t, k, \J, \lambda, w_0, (\eta_i)_{i=0}^\infty$.
\label{algo1}
\begingroup
\renewcommand{\wstoc}[1][t]{w_{#1}(\lambda)}
\renewcommand{\vstocj}[2][k]{ v_{{#1}}({#2},\lambda)}
\begin{enumerate}
\item \textbf{LL Solver:} 
\begin{equation}\label{eq:llsolver}
\begin{array}{l}
\text{for}\;i=0,1,\ldots t-1\\[0.4ex]
    \left\lfloor
    \begin{array}{l}
    \wstoc[i+1] = \wstoc[i] + \sz{i} (\hat \Phi(\wstoc[i], \lambda, \zeta_{i}) - \wstoc[i])
    \end{array}
    \right.
\end{array}
\end{equation}
where $(\zeta_i)_{0 \leq i \leq t -1}$ are i.i.d.~copies of $\zeta$.

\item Compute $ \nabla_{i} \bar \fo_{\J}(\wstoc{}, \lambda) = \frac{1}{\J}\sum_{j=1}^{{\J}} \nabla_i \hat\fo(\wstoc{}, \lambda, \xi_j) $,
where $(\xi_j)_{1 \leq j \leq \J}$
are i.i.d.~copies of $\xi$ and $i \in \{1,2\}$.
\item \textbf{LS Solver:} %
\begin{equation}\label{eq:lssolver}
\begin{array}{l}
\text{for}\;i=0,1,\ldots k-1\\[0.4ex]
    \left\lfloor
    \begin{array}{l}
    \vstocj[i+1]{\wstoc[t]} = \vstocj[i]{\wstoc[t]}  + \sz{i} (\hat \Psi_{\wstoc[t]}(\vstocj[i]{\wstoc[t]}, \lambda, \hat \zeta_{i}) - \vstocj[i]{\wstoc[t]})
    \end{array}
    \right.
\end{array}
\end{equation}
where $\hat \Psi_w(v, \lambda, z) : = \jac_1 \hat \Phi (w, \lambda, z)^\top v + \grad_1 \bar \fo_\J (w, \lambda)$, $(\hat{\zeta}_i)_{0 \leq i \leq k-1}$ are i.i.d.~copies of $\zeta$.
\item Compute the approximate hypergradient as
\begin{align*}
  \hat \grad f(\lambda) : =& \grad_2 \bar \fo_{\J}(\wstoc{}, \lambda) 
    + \jac_2  \bar\Phi_{\J}(\wstoc{},\lambda)^\top \vstocj{\wstoc{}}{}.
      \vspace{-.25truecm}
\end{align*}
where $ \jac_2 \bar\Phi_{\J}(\wstoc{},\lambda) = \frac{1}{\J}\sum_{j=1}^{{\J}} \jac_2 \hat\Phi(\wstoc{}, \lambda, \zeta'_j) $
and $(\zeta'_j)_{1 \leq j \leq \J}$
are i.i.d.~copies of $\zeta$.

\end{enumerate}
\endgroup
\end{algorithm}

\begin{algorithm}[H]
\caption{Bilevel Stochastic Gradient Method (BSGM)}\label{algo2}
\textbf{Requires:} $\lambda_0, w_0, \alpha, \{\eta_j\}, \{t_\s\}, \{\J_\s\}$.

\textbf{for} $\s = 0,1, \dots, S-1$

\begin{enumerate}

\item 
Compute $\hat \nabla f(\lambda_\s)$ using  Algorithm~\ref{algo1} (\SID{}) with $t=t_\s, k=t_\s, \J=\J_\s, \lambda=\lambda_\s$, $\{\eta_i\} = \{\eta_j\}$,  and $w_0=w_0$, $v_0=0$ (no warm-start).

\item 
 $\lambda_{\s+1} = \proj(\lambda_\s - \alpha\hat \nabla f(\lambda_\s))$  
\end{enumerate}

\end{algorithm}

\end{figure}

\section{Comparison with Related Work}\label{se:related}

\begin{table*}[ht]
  \centering

  \resizebox{\textwidth}{!}{
  \begin{tabular}{c|c|c|c|c|c|c|c}
 \toprule
 \textbf{Algorithm}                    &  \textbf{SC}          & \textbf{BS-LL}             & \textbf{WS} & $t_\s$            & $k_\s$              & $\alpha_\s$         & $\sz{t,\s}$             \\ \hline
  {\algosize BSA \citep{ghadimi2018approximation}} & $O(\epsilon^{-3})$            & $\Theta(1)$             & N, N            & $\Theta(\sqrt{\s})$ & $ \Theta(\log(\sqrt{\s}))$ & $\Theta(1/\sqrt{\stot})$ & $\Theta(1/t)$             \\ \hline
  {\algosize TTSA \citep{hong2020two}}             & $\tilde{O}(\epsilon^{-2.5})$  & $\Theta(1)$             & Y, N            & $1$                 & $ \Theta(\log(\sqrt{\s}))$ & $\Theta(\stot^{-2/5})$   & $\Theta(\stot^{-3/5})$    \\ \hline
  {\algosize stocBiO \citep{ji2021bilevel}}        & $\tilde{O}(\epsilon^{-2})$    & $\Theta(\stot)$         & Y, N            & $\Theta(1)$         & $ \Theta(\log(\sqrt{\s}))$ & $ \leq 1/4L_f$      & $\Theta(1)$               \\ \hline
  SMB \citep{guo2021stochastic}        & $\tilde{O}(\epsilon^{-2})$    & $\Theta(1)$             & Y, N            & $1$                 & $ \Theta(\log(\sqrt{\s}))$ & $\Theta(1/\sqrt{\stot})$ & $\Theta(1/\sqrt{\stot})$  \\
  \hline
  {\algosize saBiAdam \citep{feihu2021biadam}}     & $\tilde{O}(\epsilon^{-2})$    & $\Theta(1)$             & Y, N            & $1$         & $\Theta(\log(\sqrt{\s}))$   & $\Theta(1/\sqrt{\s})$     & $\Theta(1/\sqrt{\s})$      \\
  \hline
  {\algosize ALSET \citep{chen2021tighter}}     & $\tilde{O}(\epsilon^{-2})$    & $\Theta(1)$             & Y, N            & $1$         & $\Theta(\log(\sqrt{\stot}))$   & $\Theta(1/\sqrt{\stot})$     & $\Theta(1/\sqrt{\stot})$      \\
  \hline
  {\algosize Amigo \citep{arbel2021amortized}}        & $O(\epsilon^{-2})$    & $\Theta(\stot)$         & Y, Y            & $\Theta(1)$         & $ \Theta(1)$ & $ \leq 1/L_f$      & $\Theta(1)$               \\
  \hline
  {\algosize \textbf{BSGM \Cref{thm:finalbound}\ref{resone}}}              & $\tilde{O}(\epsilon^{-2})$    & $\Theta(1)$                     & N, N            & $\Theta(\s)$        & $ \Theta(\s)$              & $ \leq 1/L_f$       & $\Theta(1/t)$             \\ \hline
  {\algosize \textbf{BSGM \Cref{thm:finalbound}\ref{restwo}}}              & $O(\epsilon^{-2})$            & $\Theta(1)$                     & N, N            & $\Theta(\stot)$     & $ \Theta(\stot)$           & $ \leq 1/L_f$       & $\Theta(1/t)$             \\
  \bottomrule
  {\algosize STABLE \citep{chen2021single}}        & $O(\epsilon^{-2})$            & $\Theta(1)$                     & Y, N            & $1$                 & ESI & $\Theta(1/\sqrt{\stot})$ & $\Theta(1/\sqrt{\stot})$  \\
  \hline
  {\algosize FSLA \citep{li2021fully}}        & $O(\epsilon^{-2})$    & $\Theta(1)$         & Y, Y            & $1$         & $1$ & $ \Theta(1/\sqrt{s}) $      & $\Theta(1/\sqrt{s})$               \\
  \hline
  {\algosize STABLE-VR \citep{guo2021randomized}}  & $\tilde{O}(\epsilon^{-1.5})$  & $\Theta(1)$             & Y, N            & $1$                 & ESI & $\Theta(\s^{-1/3})$      & $\Theta(\s^{-1/3})$       \\ \hline
  {\algosize SUSTAIN \citep{khanduri2021near}}     & $\tilde{O}(\epsilon^{-1.5})$  & $\Theta(1)$             & Y, N            & $1$                 & $ \Theta(\log(\sqrt{\s}))$ & $\Theta(\s^{-1/3})$      & $\Theta(\s^{-1/3})$       \\  \hline
  \algosize VR-saBiAdam \citep{feihu2021biadam}  & $\tilde{O}(\epsilon^{-1.5})$  & $\Theta(1)$             & Y, N            & $1$         & $\Theta(\log(\sqrt{\s}))$   & $\Theta(\s^{-1/3})$       & $\Theta(\s^{-1/3})$       
  \\  \hline
  
  {\algosize MRBO \citep{yang2021provably}}  & $\tilde{O}(\epsilon^{-1.5})$  & $\Theta(1)$             & Y, N            & $1$         & $\Theta(\log(S))$   & $\Theta(\s^{-1/3})$       & $\Theta(\s^{-1/3})$      
  \\  \hline
  {\algosize VRBO \citep{yang2021provably}}  & $\tilde{O}(\epsilon^{-1.5})$  & $\Theta(\sqrt{S})$             & Y, N            & $\Theta(1)$         & $\Theta(\log(\sqrt{S}))$   & $\Theta(1)$       & $\Theta(1)$     
  \\ 
  \bottomrule
  \end{tabular}
}
\caption{Sample complexity (\textbf{SC}) of stochastic bilevel optimization methods for finding an $\epsilon$-stationary point of Problem~\eqref{mainprobstoch} with LL of type~\eqref{minminstochprob}. \textbf{BS-LL} is the LL mini-batch size, i.e. the one used to approximate $\Phi$ in the LL solver. \textbf{WS} indicates the use of warm-start, e.g. Y, N means that warm-start is used for the LL problem but not for the LS. $t_\s$ and  $k_\s$ denote the number of iterations for the LL and LS problems respectively, while $\alpha_\s$ and $\sz{t,\s}$ are the stepsize respectively for the UL and LL problems at the $\s$-th UL iteration and $t$-th LL iteration. $L_f$ is the Lipschitz constant of $\nabla f$, $\stot$ is the total number of UL iteration and ESI means that the LS estimator is given by an exact single sample LL hessian inverse. The last 7 results are obtained under additional expected smoothness assumptions \citep{arjevani2019lower}.
}
    \label{tab:1}
\end{table*}
Bilevel optimization has a long history, see  \citep{dempe2020bilevel} for a comprehensive review. In this section we only present results which are closely related to ours.

Several gradient-based algorithms, together with sample complexity rates have been recently introduced for stochastic bilevel problems with LL of type~\eqref{minminstochprob}. They all follow a structure similar to \Cref{algo2}, where each UL update uses one (or more for variance reduction methods) hypergradient estimator computed using a variant of \Cref{algo1} with different LL and LS solvers. The algorithms mainly differ in how they compute the LL, LS and UL updates (e.g. in the choice of the step sizes $\eta_{t,\s}, \alpha_s$, mini-batch sizes, and whether they use variance reduction techniques), in the number of LL and LS iterations $t_s$, $k_s$, and in the use of warm-start. These differences are summarized in \Cref{tab:1}.

\cite{ghadimi2018approximation} introduce the first convergence analysis for a simple double-loop procedure, both in the deterministic and stochastic settings. Their algorithm uses (stochastic) gradient descent both at the upper and lower levels (SGD-SGD) and approximates the LS solution using an estimator of the inverted LL hessian based on truncated Neumann series (with $k_s$ elements). In the stochastic setting, this procedure needs $O(\epsilon^{-3})$ samples to reach an $\epsilon$-stationary point. This sample complexity is achieved by increasing the number of LL and LS iterations, i.e. at the $s$-th UL iteration it sets $t_s = \Theta(\sqrt{s})$ and $k_s = \Theta(\log(\sqrt{s}))$.

Differently from this seminal work, all subsequent ones warm-start the LL problem to improve the sample complexity, since this allows them to choose $t_s = \Theta(1)$ or even $t_\s=1$, the latter case is also referred to as \textit{single-loop}. 
Warm-start combined with the simple SGD-SGD strategy can improve the sample complexity by carefully selecting the UL and LL stepsize, i.e. using two timescale \citep{hong2020two} or single timescale \citep{chen2021tighter} stepsizes, or by employing larger and  $\epsilon$-dependent mini-batches \citep{ji2021bilevel}. Warm-starting also the LS can further improve the sample-complexity to $O(\epsilon^{-2})$ \citep{arbel2021amortized}. The complexity
$O(\epsilon^{-2})$ is optimal, since the optimal sample complexity of methods using unbiased stochastic gradient oracles with bounded variance on smooth functions is $\Omega(\epsilon^{-2})$, and this lower bound is also valid for bilevel problems of type~\eqref{mainprobstoch}\footnote{We can easily see this when $\fo(w,\lambda) = g(\lambda)$ and $\hat \fo(w,\lambda, \xi) = \hat g(\lambda, \xi)$ where $g:\Lambda \mapsto \R$ is Lipschitz smooth and $\hat g$ is an unbiased estimate of $g$ whose gradient w.r.t. $\lambda$ has bounded variance.} (also with LL of type~\eqref{minminstochprob}).

\cite{chen2021single,khanduri2021near,guo2021randomized,feihu2021biadam,yang2021provably} achieve the best-known sample complexity of $\tilde{O}(\epsilon^{-1.5})$ using variance reduction techniques\footnote{\cite{chen2021single} uses variance reduction only on the LL Hessian updates (see eq. (12)).}. \cite{li2021fully} introduce the first fully single loop algorithm where both the LL and LS are warm-started and solved with one iteration, although it achieves a sample complexity of $O(\epsilon^{-2})$ while using variance reduction. Variance reduction techniques  require additional algorithmic parameters and need expected smoothness assumptions to guarantee convergence \citep{arjevani2019lower}. Furthermore, they increase the cost per iteration compared to the SGD-SGD strategy since they require two stochastic samples per iteration to estimate gradients instead of one. For these reasons, we do not investigate these kinds of techniques in the present work. 

Except for \cite{chen2021single,guo2021randomized}, all aforementioned methods and ours are also computationally efficient, since they only require gradients and Hessian-vector products. Hessian-vector products have a cost comparable to gradients thanks to automatic differentiation. \cite{chen2021single,guo2021randomized} further rely on operations like inversions and projections of the LL Hessian. These can be too costly with a large number ($d$) of LL variables, which can make it impractical even to compute the full hessian.

All the aforementioned works study smooth bilevel problems with LL of type~\eqref{minminstochprob} and with a twice differentiable and strongly convex LL objective. At last, we mention two lines of work which consider different bilevel formulations: \citep{bertrand2020implicit,bertrand2021implicit}, which study the error of hypergradient approximation methods for certain non-smooth bilevel problems, and \citep{liu2020generic,liu2022general,arbel2022non}, which analyse algorithms to tackle bilevel problems with more than one LL solution.

The sample complexity improvement that our method achieves compared to \cite{ghadimi2018approximation}, i.e. from $O(\epsilon^{-3})$ to $O(\epsilon^{-2})$, is possible because our  hypergradient estimator (SID) uses mini-batches of size $\Theta(\epsilon^{-1})$ (instead of $\Theta(1)$) to estimate $\nabla E$ and $\jac_2 \Phi$ and a stochastic solver with decreasing step-sizes (instead of the truncated Neumann series inverse estimator) also to solve the LS problem (similar to the LL solver).
This allows SID to have $O(\epsilon^{-1})$ mean squared error (see \Cref{cor:oneovertrate}). In contrast, the hypergradient estimator in \cite{ghadimi2018approximation} achieves $O(\epsilon^{-1})$ only for the bias, while the variance does not vanish. 
Consequently, we can use a more aggressive UL step-size (constant instead of decreasing), which reduces the number of UL iterations from $O(\epsilon^{-2})$ to $O(\epsilon^{-1})$.

Among the methods using warm-start, \textit{Amigo} \citep{arbel2021amortized} is the most similar to ours.  Indeed, it achieves the same $O(\epsilon^{-2})$ optimal sample complexity as BSGM. Also,
the number of UL iterations and the size of the mini-batch to estimate $\nabla E$ and $\jac_2 \Phi$ is  $O(\epsilon^{-1})$, as for our method. 
The main differences with respect to BSGM are in the use of (i) the warm-start procedure in the LL and LS problems, which in general decreases the complexity, (ii) mini-batch sizes of the order of $\Theta(\epsilon^{-1})$ to estimate $\Phi$ (in the LL), $\jac_1 \Phi $ (in the LS), which increase the complexity, contrasting with our choice of taking just one sample for estimating the same quantities. Overall, (i)-(ii) balance out and ultimately give the same total complexity.

We note that our improvement over point (ii) is necessary to achieve the optimal sample complexity.
Indeed, if one istead carries out the analysis by using (ii), constant step-sizes for the LS and LL, and setting $k_\s, t_\s = \Theta(\log(S))$, only suboptimal complexity of $O(\epsilon^{-2}\log(\epsilon^{-1}))$ is achieved, because mini-batches of size  $\Theta(\epsilon^{-1})$ are used $2S(1+ \log(S))$ (instead of just $2S$) times in $S$ UL iterations.

For the deterministic case, we improve the rate of \cite{ghadimi2018approximation} from $O(\epsilon^{-5/4})$ to $O(\epsilon^{-1}\log(\epsilon^{-1}))$ by setting $t_s = \Theta(\kappa \log(s))$ (and also $k_\s$) instead of $t_s = \ceil{(s+1)^{1/4}/2}$, where $\kappa=(1-q)^{-1}$ and $q$ is the contraction constant defined in \Cref{ass:hypergrad}\ref{ass:contraction}. \cite{ji2021bilevel,arbel2021amortized} have an improved  complexity of  $O(\epsilon^{-1})$, obtained by using warm-start and setting $t_\s, k_s = \tilde{\Theta}(\kappa)$, where $\kappa$ is corresponds to the LL condition number.

Finally, note that warm-start makes it possible to set $t_\s$ and $k_\s$ with no dependence on $\epsilon$ both in the deterministic and stochastic settings, improving the sample complexity (by removing a log factor) in the former case. However, in the stochastic case the complexity does not improve because
solving the LL and LS problems cannot have lower complexity than $O(\epsilon^{-1})$, which is that of the sample mean estimation error. Such complexity is already achieved by our stochastic fixed-point iteration solvers with decreasing step-sizes and no warm-start.

\section{Assumptions and Preliminary Results}\label{se:preliminaries}
We hereby state the assumptions used for the analysis, discuss them and outline in a lemma some useful smoothness properties of the bilevel problem.

\begin{assumption}
\label{ass:hypergrad} 
The set $\Lambda \subseteq \R^m$ is
closed and convex and
the mappings $\Phi\colon \R^d \times \Lambda \to \R^d$
and $E\colon \R^d \times\Lambda \to \R$
are differentiable in an open set containing $\R^d \times \Lambda$. For every $\lambda \in \Lambda$:
\begin{enumerate}[label={\rm (\roman*)}]
\item\label{ass:contraction} $\Phi(\cdot,\lambda)$ is a  contraction,  i.e., $\norm{\jac_1 \Phi(w, \lambda)} \leq \q$ for some $\q<1$ and for all $w \in \R^d$.
\item\label{ass:rho}
$\norm{\jac_i\Phi (w(\lambda), \lambda) - \jac_i\Phi (w, \lambda)} \leq \nu_i \norm{w(\lambda) - w}$ for $i \in \{1,2\}$, $\forall w \in \R^d$.
\item\label{ass:lo}
$\norm{\grad_i\fo (w(\lambda), \lambda) - \grad_i\fo (w, \lambda)} \leq \mu_i \norm{w(\lambda) - w}$ for $i \in \{1,2\}$, $\forall w \in \R^d$.
\item\label{ass:lipE} $\fo(\cdot, \lambda)$ is Lipschitz 
cont.\@ on $\R^d$  with constant $\Bo$.
\end{enumerate}
\end{assumption}
\begin{assumption}\label{ass:add} 
Let $w_0:\Lambda \to \R^d$. For every $w^* \in  \{ w(\lambda) \,|\, \lambda \in \Lambda \}, \, \lambda \in \Lambda$:
\begin{enumerate}[label={\rm (\roman*)}] 
\item\label{ass:lola} $\nabla_1E(w^*, \cdot), \nabla_2E(w^*, \cdot)$ are Lipschitz cont.\@ on $\Lambda$ with constants $\Lola, \Lolb$ respectively.
\item\label{ass:lipwl} $\jac_1\Phi (w^*, \cdot)$, $\jac_2\Phi (w^*, \cdot)$ are Lipschitz cont.\@ on $\Lambda$ with constants $\lipw$, $\lipl$ respectively.
\item\label{ass:boundw} $\norm{w(\lambda) - w_0(\lambda)} \leq \boundw$ for some $\boundw \geq 0$. 
\item\label{ass:boundjac} $\norm{\jac_2 \Phi(w(\lambda), \lambda)} \leq \boundjac$ for some $\boundjac \geq 0$.
\end{enumerate}
\end{assumption}

\begin{assumption}\label{ass:alt}
The random variables $\zeta$ and $\xi$ take values in measurable spaces $\Xi$ and $Z$
and $\hat \Phi : \R^d \times \Lambda \times Z \mapsto \R^d $, $\hat \fo : \R^d \times \Lambda \times \Xi \mapsto \R$ 
are measurable functions, differentiable w.r.t. the first two
arguments in an open set containing $\R^d \times \Lambda$, and, for all 
$w \in \R^d$, $\lambda \in \Lambda$:
\begin{enumerate}[label={\rm (\roman*)}]
    \item\label{eq:expjacexptwo_2} 
    $\Exp{\hat \Phi(w, \lambda, \zeta)} {=} \Phi(w,\lambda)$, 
    $\Exp{\hat \fo(w, \lambda, \xi)} {=} \fo(w,\lambda)$ and we can exchange derivatives with expectations when taking derivatives on both sides.
    \item\label{eq:expjacexp_iii_2} $\Var{\hat \Phi(w, \lambda, \zeta)} \leq  \sigone + \sigtwo \norm{\Phi(w,\lambda) - w}^2$ for some  $\sigone, \sigtwo \geq 0$. 
    \item\label{eq:expjacexp_iv_2} $\Var{\jac_1 \hat\Phi(w,\lambda, \zeta)}
    \leq \mone$, $\Var{ \jac_2 \hat\Phi(w,\lambda, \zeta)}
    \leq \mtwo$  for some $\mone, \mtwo \geq 0$. 
    \item\label{eq:expgradexp_iv_2} $\Var{ \nabla_1 \hat\fo(w,\lambda, \xi)}
    \leq \monee$, $\Var{ \nabla_2 \hat\fo(w,\lambda, \xi)}
    \leq \mtwoe$ for some  $\monee, \mtwoe \geq 0$.
\end{enumerate}
\end{assumption}

 Assumptions~\ref{ass:hypergrad}, \ref{ass:add} and \ref{ass:alt} are similar to the ones in \citep{ghadimi2018approximation}
and subsequent works, but extended to the bilevel fixed point formulation and sometimes weakened. 
Assumptions~\ref{ass:hypergrad} and \ref{ass:alt} are sufficient to obtain meaningful upper bounds   on the mean square error of the \SID{} estimator (\Cref{algo1}), while Assumption~\ref{ass:add} enables us to derive the convergence rates of the bilevel procedure in \Cref{algo2}.
The deterministic case can be studied by setting, in Assumption~\ref{ass:alt},  $\sigone = \sigtwo = \mone = \mtwo = \monee = \mtwoe = 0$.
\begin{remark}\label{rem:assumptions} \ 
\begin{enumerate}[label={\rm (\roman*)}]
    \item Although the majority of recent works set $\Lambda = \R^m$, many bilevel problems satisfy the assumptions above only when $\Lambda \neq \R^m$. E.g.\@, when $\lambda$ is a scalar regularization parameter in the $LL$ objective and $\Phi$ is the gradient descent map, $\lambda$ has to be bounded from below away from zero for $\Phi(\cdot, \lambda)$ to always be a contraction
    (Assumption~\ref{ass:hypergrad}\ref{ass:contraction}). Also, when $\Lambda$ and $\{ w_0(\lambda) \,|\, \lambda \in \Lambda \}$ are bounded and closed, and Assumption~\ref{ass:hypergrad}\ref{ass:contraction} is satisfied, then \ref{ass:add}\ref{ass:boundw}\ref{ass:boundjac} are satisfied because $w(\cdot)$ is continuous in $\Lambda$. Our analysis directly considers the case $\Lambda \subseteq \R^m$, which includes the others.
    \item The Lipschitz assumption on $\fo$ (\ref{ass:hypergrad}\ref{ass:lipE}) is needed to upper bound $\norm{\nabla_1 \fo(w_t(\lambda), \lambda)}$. Otherwise, this is difficult to achieve since, in the stochastic setting, we have no control on the LL iterates $w_t(\lambda)$. This assumption can be relaxed in the deterministic case.
    \item Assumption~\ref{ass:add}\ref{ass:boundjac} is weaker than the one commonly used in related works, which requires the partial Jacobian $\jac_2 \Phi(w,\lambda)$ to be bounded uniformly on $\R^d\times \Lambda$.
    By contrast, we assume only the boundedness on the solution path
    $\{ (w(\lambda),\lambda) \,\vert\, \lambda \in \Lambda\}$. This allows to
    extend to scope of applicability of the method. For example, when $\lambda \in [\lambda_{min}, \lambda_{max}]$ is the $L_2$-regularization parameter multiplying $(1/2)\norm{w}^2$ in the LL objective, $\Phi$ is the gradient descent map and $w_0(\lambda) = 0$, then $\norm{\jac_2 \Phi (w, \lambda)} = \norm{w}$ which is unbounded, while $\norm{\jac_2 \Phi (w (\lambda), \lambda)} = \norm{w(\lambda)}$ is bounded since $w(\cdot)$ is differentiable (from \ref{ass:hypergrad}\ref{ass:contraction}) and therefore continuous in $[\lambda_{min}, \lambda_{max}]$ which is a bounded and closed set. 
    
    \item\label{rem:wsass} Assumption~\ref{ass:add}\ref{ass:boundw} uniformly bounds the distance of the LL solution $w(\lambda)$ from the starting point of the LL solver $w_0(\lambda)$. A similar assumption (with $w_0(\lambda)=0$) is stated implicitly also in \citep{ghadimi2018approximation} (See e.g. definition of $M$ in eq. (2.28)). \ref{ass:add}\ref{ass:boundw} is not needed when using warm-start (see also \Cref{rm:advws}), although it is satisfied when $\Lambda$ and $\{ w_0(\lambda) \,|\, \lambda \in \Lambda \}$ are bounded and closed and \ref{ass:hypergrad}\ref{ass:contraction} holds, but also in some cases where $\Lambda$ is unbounded. For example in meta-learning, when $\lambda$ is the bias in the LL regularization, i.e.\@  $\Lambda = \R^d$, $\Phi(w,\lambda) = (1-\eta\gamma)w -\eta \grad \mathcal{L}(w) + \eta\gamma\lambda$ with $\mathcal{L}$ $L$-smooth, $w_0(\lambda)=\lambda$ and $\eta > 0$ being the LL step-size, we have $w(\lambda) = \lambda - \gamma^{-1}\nabla \mathcal{L}(w(\lambda))$ which implies $\sup_{\lambda \in \R^d}\norm{w(\lambda)} = \infty$  while $\sup_{\lambda \in \R^d}\norm{w(\lambda) - w_0(\lambda)} \leq \gamma^{-1}L$.

    \item Assumption~\ref{ass:alt}\ref{eq:expjacexp_iii_2} is more general than the corresponding one in \citep{ghadimi2018approximation},   which is a bound on the variance on the LL gradient estimator recovered by setting $\sigtwo=0$ and $\hat \Phi(w, \lambda, \xi) = w - \nabla_1  \hat{\mathcal{L}}(w, \lambda, \xi)$ with $\nabla_1  \hat{\mathcal{L}}(w, \lambda, \xi)$ being an unbiased estimator of the LL gradient. Having $\sigtwo > 0$ allows the variance to grow away from the fixed point, which occurs for example when the unregularized loss in the LL Problem~\eqref{minminstochprob} is not Lipschitz (like for the square loss).
\end{enumerate}
\end{remark}

\begin{remark}
Variance reduction methods \citep{chen2021single,guo2021randomized,khanduri2021near,feihu2021biadam}  require also an \emph{expected smoothness assumption} on $\nabla \hat \fo$, $\hat \Phi$ and $\jac \hat \Phi$ (often satisfied in practice). See \citep{arjevani2019lower}. A random function $g (\cdot, \xi)$, where $\xi$ is the random variable, meets the expected smoothness assumption if $\Exp{\norm{g(x_1, \xi) - g(x_2, \xi)}}^2$ $\leq \tilde{L}_{g}^2 \norm{x_1 - x_2}^2$, for every $x_1,x_2$, where $\tilde{L}_{g} \geq 0$. 
\end{remark}

The existence of the hypergradient $\nabla f(\lambda)$ is guaranteed by the fact that $\Phi$ and $\fo$ are differentiable and that $\Phi(\cdot, \lambda)$ is a contraction (Assumption~\ref{ass:hypergrad}\ref{ass:contraction}). Furthermore, we have the following properties for the bilevel problem.

\begin{lemma}[\textbf{Smoothness properties of the bilevel problem}]\label{lm:lemmanew} If Assumptions~\ref{ass:hypergrad} \\ and \ref{ass:add}\ref{ass:lola}\ref{ass:lipwl}\ref{ass:boundjac} are satisfied, the following statements hold.

\begin{enumerate}[label={\rm (\roman*)}]
    \item\label{wprimelm} $\norm{w'(\lambda)} \leq L_w := \frac{\boundjac}{1-\q}$ for every $\lambda \in \Lambda$.
    \item\label{wprimelip} $w'(\cdot)$ is Lipschitz continuous with constant
    \begin{equation*}
        \lipwprime = \frac{\lipl}{1-\q} + \frac{\boundjac}{(1-\q)^2} \Big( \rhol + \lipw + \frac{\rhow\boundjac}{1-\q} \Big).
    \end{equation*}
    \item\label{lipgradient} $\nabla f(\cdot)$ is Lipschitz continuous with constant
\end{enumerate}
\begin{equation*}\label{eq:lf}
    L_f = \Lolb + \Bo\lipwprime + \frac{\boundjac}{1-\q}\Big(\Lol+\Lola + \frac{\Low\boundjac}{1-\q} \Big) .
\end{equation*}
\end{lemma}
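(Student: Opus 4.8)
The plan is to work outward from the implicit characterization of the fixed point, proving the three parts in order since each builds on the previous one. The starting point is the identity obtained by differentiating the fixed-point equation $w(\lambda) = \Phi(w(\lambda),\lambda)$ in $\lambda$. Because $\norm{\jac_1\Phi} \le \q < 1$ guarantees that $I - \jac_1\Phi$ is invertible, the implicit function theorem gives
\begin{equation*}
    w'(\lambda) = \big(I - \jac_1\Phi(w(\lambda),\lambda)\big)^{-1}\jac_2\Phi(w(\lambda),\lambda).
\end{equation*}
For part~\ref{wprimelm} I would bound the resolvent by its Neumann series, $\norm{(I-\jac_1\Phi)^{-1}} \le \sum_{j\ge 0}\q^{j} = 1/(1-\q)$, and combine this with Assumption~\ref{ass:add}\ref{ass:boundjac} to obtain $\norm{w'(\lambda)} \le \boundjac/(1-\q) = L_w$. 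A byproduct I will use repeatedly is that $w(\cdot)$ is itself $L_w$-Lipschitz (by the mean value inequality applied to the bound on $\norm{w'}$), which lets me convert first-argument perturbations of $\Phi$ and $E$ into $\lambda$-perturbations.

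For part~\ref{wprimelip} I would write $w'(\lambda) = M(\lambda)N(\lambda)$ with $M(\lambda):=(I-\jac_1\Phi(w(\lambda),\lambda))^{-1}$ and $N(\lambda):=\jac_2\Phi(w(\lambda),\lambda)$, and split $w'(\lambda_1)-w'(\lambda_2) = M(\lambda_1)(N(\lambda_1)-N(\lambda_2)) + (M(\lambda_1)-M(\lambda_2))N(\lambda_2)$. The factor $N$ is controlled by the two Lipschitz assumptions on $\jac_2\Phi$ in its two arguments (constants $\rhol$ and $\lipl$) together with the Lipschitzness of $w(\cdot)$ from part~\ref{wprimelm}, giving $\norm{N(\lambda_1)-N(\lambda_2)} \le (\rhol L_w + \lipl)\norm{\lambda_1-\lambda_2}$. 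The main obstacle is the resolvent difference $M(\lambda_1)-M(\lambda_2)$: here I would invoke the resolvent identity $M(\lambda_1)-M(\lambda_2) = M(\lambda_1)\big(\jac_1\Phi(w(\lambda_1),\lambda_1)-\jac_1\Phi(w(\lambda_2),\lambda_2)\big)M(\lambda_2)$, bound each $M$ by $1/(1-\q)$, and control the Jacobian difference by the constants $\rhow$ and $\lipw$, yielding $\norm{M(\lambda_1)-M(\lambda_2)} \le (\rhow L_w + \lipw)\norm{\lambda_1-\lambda_2}/(1-\q)^2$. Assembling both pieces with $\norm{M}\le 1/(1-\q)$ and $\norm{N}\le\boundjac$, then substituting $L_w=\boundjac/(1-\q)$, reproduces the stated $\lipwprime$.

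Finally, for part~\ref{lipgradient} I would start from the chain-rule form $\nabla f(\lambda) = \nabla_2 E(w(\lambda),\lambda) + w'(\lambda)^\top\nabla_1 E(w(\lambda),\lambda)$, which is equivalent to~\eqref{eq:hypergrad}. The first term is handled exactly as $N$ above, using the Lipschitz constants $\Lol,\Lolb$ of $\nabla_2 E$ together with part~\ref{wprimelm}, contributing $\Lol L_w + \Lolb$. For the product term I would split once more: one piece multiplies $\norm{w'(\lambda_1)}\le L_w$ against the Lipschitz difference of $\nabla_1 E$ (constants $\Low,\Lola$, contributing $L_w(\Low L_w + \Lola)$), and the other multiplies $\norm{\nabla_1 E}\le\Bo$ (from the Lipschitz bound on $E(\cdot,\lambda)$) against the Lipschitz constant $\lipwprime$ of $w'$ established in part~\ref{wprimelip}. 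Summing the three contributions and substituting $L_w=\boundjac/(1-\q)$ yields exactly~\eqref{eq:lf}. The only genuinely delicate step is the resolvent perturbation in part~\ref{wprimelip}; the remaining estimates are routine add-and-subtract triangle-inequality arguments built on the two-argument Lipschitz assumptions and the boundedness in Assumption~\ref{ass:add}.
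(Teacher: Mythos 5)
Your proposal is correct and follows essentially the same route as the paper: the implicit-function-theorem expression for $w'(\lambda)$ with the Neumann-series bound $\norm{(I-\jac_1\Phi)^{-1}}\le 1/(1-\q)$ for part (i), the resolvent identity $M(\lambda_1)-M(\lambda_2)=M(\lambda_1)(A(\lambda_2)-A(\lambda_1))M(\lambda_2)$ combined with the two-argument Lipschitz assumptions and the $L_w$-Lipschitzness of $w(\cdot)$ for part (ii), and an add-and-subtract decomposition of the hypergradient formula for part (iii), with all three final constants matching \eqref{eq:lf} exactly. The only cosmetic difference is that the paper organizes part (iii) around an intermediate quantity $\bar\nabla f(w(\lambda_1),\lambda_2)$ rather than splitting term-by-term in the chain rule, which yields the same bound.
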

The proof is in \Cref{pr:lemmanew}. See Lemma 2.2 in \cite{ghadimi2018approximation} for the special case of Problem~\eqref{mainprobstoch} with LL of type~\eqref{minminstochprob}.

\section{Convergence of \SID{}}\label{se:sid}
\begingroup
\renewcommand{\wstoc}[1][t]{w_{#1}(\lambda)}
\renewcommand{\vstocj}[2][k]{ v_{{#1}}({#2},\lambda)}

In this section, we fix $\lambda$ and provide an upper bound to the mean squared error of the hypergradient approximation:
\begin{equation}
    \label{eq:MSE}
    \MSE := \Exp{\norm{\hat \grad f(\lambda) - \grad f(\lambda)}^2},
\end{equation}
where $\hat \nabla f(\lambda)$  is given by \SID{} (\Cref{algo1}). 
In particular, we show that when the mini-batch size $J$ and the number of LL and LS iterations $t$ and $k$ tend to $\infty$, and the algorithms to solve the LL and LS problems converge in mean square error, then the mean square error of $\hat \nabla f (\lambda)$ tends to zero. Moreover, using the stochastic fixed-point iteration solvers in \eqref{eq:llsolver}-\eqref{eq:lssolver} with decreasing stepsizes and setting $t=k=\J$ we have $\MSE = O(1/t)$. 

This analysis is similar to the one of Algorithm~1 in \cite{grazzi2021convergence} Section 3 but with some crucial differences. First, this work considers the more challenging setting with stochasticity also in the UL objective. Second, Algorithm~1 in \cite{grazzi2021convergence} is a special case of \Cref{algo1} with $\J=1$, and letting $\J \to \infty$ is necessary to have an hypergradient estimator with zero MSE in the limit.

In the following, we first provide an analysis which is actually agnostic with respect to the specific solvers of the LL and LS problems. More specifically, according to Algorithm~\ref{algo1}
\begin{align*}
  \hat \grad f(\lambda) : =& \grad_2 \bar \fo_{\J}(\wstoc{}, \lambda) 
    + \jac_2  \bar\Phi_{\J}(\wstoc{},\lambda)^\top \vstocj{\wstoc{}}{}.
      \vspace{-.25truecm}
\end{align*}
where $\wstoc{}$ is the output of a $t$ steps stochastic 
algorithm that approximates the LL solution $w(\lambda)$ starting from $w_0(\lambda)$ and, for every $w$, $\vstocj{w}$ is the output of a $k$ steps stochastic 
algorithm that approximates the solution $\voptj{w}$ of the linear system
\begin{equation*}
\label{eq:LSgen}
     (I - \jac_1 \Phi(w,\lambda)^\top) v = \nabla_1\bar{\fo}_J(w,\lambda).
\end{equation*}
Recall that $ \nabla_{i} \bar \fo_{\J}(\wstoc{}, \lambda) = \frac{1}{\J}\sum_{j=1}^{{\J}} \nabla_i \hat\fo(\wstoc{}, \lambda, \xi_j) $ for $i \in \{1, 2\}$ and $\jac_2 \bar\Phi_{\J}(\wstoc{},\lambda) = \frac{1}{\J}\sum_{j=1}^{{\J}} \jac_2 \hat\Phi(\wstoc{}, \lambda, \zeta'_j)$.  
To this respect we also make the following assumption.
\begin{assumption}\label{ass:innerbackrates}
For every $w \in \R^d$, $\lambda \in \Lambda$,
 $t,k, \J \geq 1$, $j \in \{1, \dots, \J\}$, the random variables  $\vstocj{w}$, $\wstoc$, $\zeta'_j$ are mutually independent, $\wstoc$ is independent of $\xi_j$ and
\begin{align*}
    \Exp{\norm{\wstoc -w(\lambda)}^2} &\leq \rf(t), \qquad
    \Exp{\norm{\vstocj{w} - \voptj{w}}^2 } \leq \hrf(k),
\end{align*}
where $\rf:\N \mapsto \R_{+}$ and $\hrf:\N \mapsto \R_{+}$.
\end{assumption}

To analyse the MSE in \eqref{eq:MSE}, 
we start with the standard bias-variance decomposition
\begin{equation}\label{eq:biasvar}
\begin{aligned}
    \MSE =& \underbrace{\norm{\Exp{\hat \grad f(\lambda)} - \grad f(\lambda)}^2}_{\text{bias}} +  \underbrace{\Var{\hat \grad f(\lambda)}}_{\text{variance}}.
\end{aligned} 
\end{equation}
Then, using the law of total variance, we can write the useful decomposition 
\begin{equation}\label{eq:20200601h}
    \Var{\hat \grad f(\lambda)}
    = \underbrace{\Exp{\Var{\hat \grad f(\lambda) \given \wstoc}}}_{\text{variance I}} + \underbrace{\Var{\Exp{\hat \grad f(\lambda) \given \wstoc}}}_{\text{variance II}}.
\end{equation}
In the following three theorems we will bound the bias and the variance terms of the MSE. After that we state the final MSE bound in \Cref{thm:finalbound}.

\begin{theorem}[\textbf{Bias upper bounds}]
\label{th:boundbias}
Suppose that Assumptions~\ref{ass:hypergrad},\ref{ass:alt}, \ref{ass:add}\ref{ass:boundjac} and \ref{ass:innerbackrates}
are satisfied. Let $\lambda \in \Lambda$, $t,k \in \N$. Let $ \hat \Delta_w := \norm{\wstoc - w(\lambda)} $, then the following hold.

\begin{enumerate}[label={\rm(\roman*)}]
\item\label{th:boundbias_i} 
$\norm[\big]{ \Exp{\hat \grad f(\lambda) \given \wstoc}   - \grad f(\lambda)}$ 
$\leq c_1\hat \Delta_w
+ \LPhi \sqrt{\hrf(k)} + \rhol\hat \Delta_w \sqrt{\hrf(k)}$.
\item\label{th:boundbias_ii} 
$\norm{\EE[\hat \grad f(\lambda)] - \grad f(\lambda)}$
$\leq c_1 \sqrt{\rf(t)} + \LPhi  \sqrt{\hrf(k)} + \rhol \sqrt{\rf(t)}\sqrt{\hrf(k)},$
\end{enumerate}
where
\begin{equation*}
    c_1 = \Lol + \frac{\Low\LPhi + \rhol\Bo}{1-\q} + \frac{\rhow\Bo\LPhi}{(1-\q)^2}.
\end{equation*}
\end{theorem}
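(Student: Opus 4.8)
The plan is to bound both bias quantities by first computing $\Exp{\hat\grad f(\lambda)\given w_t(\lambda)}$ explicitly and then comparing it term-by-term with $\grad f(\lambda)$. Because the samples drawn at Steps~2,~3 and~4 of \Cref{algo1} are mutually independent and independent of $w_t(\lambda)$, and because the mini-batch estimators are conditionally unbiased (\Cref{ass:alt}\ref{eq:expjacexptwo_2}), I would first argue that, conditionally on $w_t(\lambda)$, the Step-4 estimator $\jac_2\bar\Phi_\J$ is independent of the linear-system iterate $v_k$, so the expectation of their product factorizes:
\[
\Exp{\hat\grad f(\lambda)\given w_t(\lambda)} = \grad_2\fo(w_t(\lambda),\lambda) + \jac_2\Phi(w_t(\lambda),\lambda)^\top \Exp{v_k(w_t(\lambda),\lambda)\given w_t(\lambda)}.
\]
I would then observe that, since the inverse $(I-\jac_1\Phi(w,\lambda)^\top)^{-1}$ does not depend on the Step-2 samples and $\grad_1\bar\fo_\J$ enters \eqref{eq:LSgen} linearly, the solution $\voptj{w_t(\lambda)}$ satisfies $\Exp{\voptj{w_t(\lambda)}\given w_t(\lambda)} = \vopt{w_t(\lambda)}$. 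Combining this with \Cref{ass:innerbackrates} and Jensen's inequality yields $\norm{\Exp{v_k(w_t(\lambda),\lambda)\given w_t(\lambda)} - \vopt{w_t(\lambda)}} \leq \sqrt{\hrf(k)}$.

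Next I would decompose $\Exp{\hat\grad f(\lambda)\given w_t(\lambda)} - \grad f(\lambda)$ by inserting the intermediate vector $\jac_2\Phi(w_t(\lambda),\lambda)^\top\vopt{w_t(\lambda)}$. The piece carrying the linear-system error, namely $\jac_2\Phi(w_t(\lambda),\lambda)^\top(\Exp{v_k\given w_t}-\vopt{w_t(\lambda)})$, is controlled using $\norm{\jac_2\Phi(w_t(\lambda),\lambda)} \leq \LPhi + \rhol\hat\Delta_w$, which follows from \Cref{ass:add}\ref{ass:boundjac} together with the $\rhol$-Lipschitz continuity of $\jac_2\Phi(\cdot,\lambda)$ (\Cref{ass:hypergrad}\ref{ass:rho}), combined with the $\sqrt{\hrf(k)}$ bound above. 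This produces exactly the two terms $\LPhi\sqrt{\hrf(k)} + \rhol\hat\Delta_w\sqrt{\hrf(k)}$.

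The remaining piece is $[\grad_2\fo(w_t(\lambda),\lambda) - \grad_2\fo(w(\lambda),\lambda)] + [\jac_2\Phi(w_t(\lambda),\lambda)^\top\vopt{w_t(\lambda)} - \jac_2\Phi(w(\lambda),\lambda)^\top\vopt{w(\lambda)}]$, which I would bound by $c_1\hat\Delta_w$ by showing that $w\mapsto \grad_2\fo(w,\lambda)+\jac_2\Phi(w,\lambda)^\top\vopt{w}$ is $c_1$-Lipschitz. This reduces to three ingredients: (a) $\grad_2\fo(\cdot,\lambda)$ is $\Lol$-Lipschitz (\Cref{ass:hypergrad}\ref{ass:lo}); (b) splitting the product difference and bounding $\norm{\vopt{w_t(\lambda)}}\leq\Bo/(1-\q)$ via $\norm{\grad_1\fo}\leq\Bo$ (from \Cref{ass:hypergrad}\ref{ass:lipE}) and the Neumann estimate $\norm{(I-\jac_1\Phi^\top)^{-1}}\leq 1/(1-\q)$; and (c) showing $\vopt{\cdot}$ is Lipschitz with constant $\Low/(1-\q)+\rhow\Bo/(1-\q)^2$, where the matrix-inverse term is handled through the resolvent identity $A(w_1)-A(w_2)=A(w_1)(\jac_1\Phi(w_1)^\top-\jac_1\Phi(w_2)^\top)A(w_2)$ with $A(w):=(I-\jac_1\Phi(w,\lambda)^\top)^{-1}$ and the $\rhow$-Lipschitz continuity of $\jac_1\Phi(\cdot,\lambda)$. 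Collecting these constants gives precisely $c_1 = \Lol + (\Low\LPhi+\rhol\Bo)/(1-\q) + \rhow\Bo\LPhi/(1-\q)^2$, which proves~\ref{th:boundbias_i}. For~\ref{th:boundbias_ii}, I would take the expectation over $w_t(\lambda)$ of the bound in~\ref{th:boundbias_i}, move the norm inside by Jensen's inequality, and use $\Exp{\hat\Delta_w}\leq\sqrt{\Exp{\hat\Delta_w^2}}\leq\sqrt{\rf(t)}$ from \Cref{ass:innerbackrates}.

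I expect the main obstacle to be the careful accounting of the independence structure in the first step: one must justify both that the product $\jac_2\bar\Phi_\J^\top v_k$ factorizes under the conditional expectation and that $\voptj{\cdot}$ is conditionally unbiased for $\vopt{\cdot}$. This is exactly what makes the $\J$-sample estimators drop out of the bias, leaving only the solver error $\sqrt{\hrf(k)}$ and the lower-level error $\hat\Delta_w$; the remainder is routine Lipschitz bookkeeping via the resolvent identity.
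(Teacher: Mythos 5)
Your proposal is correct and follows essentially the same route as the paper's proof: the same conditional-expectation factorization, the same conditional unbiasedness of $\bar v(\cdot,\lambda)$ plus Jensen to get $\sqrt{\hrf(k)}$, the same Lipschitz/resolvent estimates (the paper's Lemmas~\ref{lm:lipv} and \ref{lm:normv}), and the same final expectation step for part \ref{th:boundbias_ii}; the only (immaterial) difference is which intermediate Jacobian you insert, the paper pairing the LS error with $\jac_2\Phi(w(\lambda),\lambda)$ and the Jacobian difference with $\Exp{v_k\given w_t}$, while you do the reverse. One small caution: since Assumption~\ref{ass:add}\ref{ass:boundjac} bounds $\jac_2\Phi$ only on the solution path, your ``$c_1$-Lipschitz'' claim for $w\mapsto\grad_2\fo(w,\lambda)+\jac_2\Phi(w,\lambda)^\top\vopt{w}$ is not global, but your bookkeeping (putting the Jacobian difference against $\vopt{w_t(\lambda)}$ so that the remaining factor is $\jac_2\Phi(w(\lambda),\lambda)$) makes the estimate between $w_t(\lambda)$ and $w(\lambda)$ valid as stated.
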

The proof is 
in \Cref{pf:bias} and  similar to that of Theorem 3.1 in \cite{grazzi2021convergence}.

\begin{theorem}[\textbf{Variance I bound}]
\label{th:varboundone}
Suppose that Assumptions~\ref{ass:hypergrad},\ref{ass:alt}, \ref{ass:add}\ref{ass:boundjac} and \ref{ass:innerbackrates}
are satisfied. Let $\lambda \in \Lambda$, $t,k \in \N$.
Then 
\begin{equation*}
\begin{aligned}
    \Exp{\Var{\hat \grad f(\lambda) \given \wstoc}} \leq &\left(\mtwoe + 4\frac{\mtwo(\Bo^2 + \monee) + \LPhi^2\monee}{(1-\q)^2}\right) \frac{2}{\J} 
+ 8(\LPhi^2+ \mtwo)\hrf(k) 
\\&+ 8 \rhol^2 \rf(t)\left(\hrf(k) + \frac{\monee}{J(1-\q)^2} \right).
\end{aligned}
\end{equation*}
\end{theorem}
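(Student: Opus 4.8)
The plan is to bound the inner conditional variance $\Var{\hat \grad f(\lambda)\given w_t(\lambda)}$ and to take the outer expectation over $w_t(\lambda)$ only at the very end, invoking $\Exp{\norm{w_t(\lambda)-w(\lambda)}^2}\le\rf(t)$ from Assumption~\ref{ass:innerbackrates}. After conditioning on $w_t(\lambda)$, the mutual independence of the four sampling stages of \Cref{algo1} leaves three independent sources of randomness: the step-2 mini-batch $(\xi_j)$, which produces both $\grad_1\bar\fo_\J$ and $\grad_2\bar\fo_\J$; the step-3 linear-system samples $(\hat\zeta_i)$; and the step-4 mini-batch $(\zeta_j)$, which produces $\jac_2\bar\Phi_\J$. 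Writing $\hat\grad f(\lambda)=\grad_2\bar\fo_\J(w_t(\lambda),\lambda)+\jac_2\bar\Phi_\J(w_t(\lambda),\lambda)^\top v_k(w_t(\lambda),\lambda)$, I would first split the variance of this sum with $\Var{A+B}\le 2\Var{A}+2\Var{B}$, since the two summands are correlated through the shared step-2 randomness. The first summand is a plain $\J$-sample average, so $\Var{\grad_2\bar\fo_\J(w_t(\lambda),\lambda)\given w_t(\lambda)}\le\mtwoe/\J$ by Assumption~\ref{ass:alt}\ref{eq:expgradexp_iv_2}, which yields the $\mtwoe$ contribution to the $2/\J$ bracket.

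For the product term I would set $M:=\jac_2\bar\Phi_\J(w_t(\lambda),\lambda)$ and $v:=v_k(w_t(\lambda),\lambda)$ and exploit that, given $w_t(\lambda)$, $M$ (step 4) is independent of $v$ (steps 2--3) and unbiased for $\bar M:=\jac_2\Phi(w_t(\lambda),\lambda)$. Splitting $M^\top v=(M-\bar M)^\top v+\bar M^\top v$, the cross term has zero expectation because $\Exp{M-\bar M\given w_t(\lambda)}=0$, and a short computation gives $\Var{M^\top v\given w_t(\lambda)}\lesssim \Var{M\given w_t(\lambda)}\,\Exp{\norm{v}^2\given w_t(\lambda)}+\norm{\bar M}^2\,\Var{v\given w_t(\lambda)}$. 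Here $\Var{M\given w_t(\lambda)}\le\mtwo/\J$ by Assumption~\ref{ass:alt}\ref{eq:expjacexp_iv_2}. Since the uniform bound $\LPhi$ of Assumption~\ref{ass:add}\ref{ass:boundjac} is available only at the solution $w(\lambda)$, I would control $\norm{\bar M}$ off the solution path using Lipschitz continuity of $\jac_2\Phi(\cdot,\lambda)$ (Assumption~\ref{ass:hypergrad}\ref{ass:rho}): $\norm{\jac_2\Phi(w_t(\lambda),\lambda)}^2\le 2\LPhi^2+2\rhol^2\norm{w_t(\lambda)-w(\lambda)}^2$, which is precisely the mechanism that produces both the $\LPhi^2$ coefficients and the $\rhol^2\rf(t)$ term after the outer expectation.

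It then remains to bound the two moments of $v=v_k(w_t(\lambda),\lambda)$. Writing $v=\bar v(w_t(\lambda),\lambda)+(v-\bar v(w_t(\lambda),\lambda))$, where $\bar v(w_t(\lambda),\lambda)=(I-\jac_1\Phi(w_t(\lambda),\lambda)^\top)^{-1}\grad_1\bar\fo_\J(w_t(\lambda),\lambda)$ is the exact linear-system solution for the current right-hand side, Assumption~\ref{ass:innerbackrates} controls the solver error via $\Exp{\norm{v-\bar v}^2\given w_t(\lambda)}\le\hrf(k)$. For the true solution, the contraction assumption gives $\norm{(I-\jac_1\Phi(w_t(\lambda),\lambda)^\top)^{-1}}\le 1/(1-\q)$; combined with $\norm{\grad_1\fo}\le\Bo$ (Assumption~\ref{ass:hypergrad}\ref{ass:lipE}) and the mini-batch variance $\Var{\grad_1\bar\fo_\J\given w_t(\lambda)}\le\monee/\J$ (Assumption~\ref{ass:alt}\ref{eq:expgradexp_iv_2}), this yields $\Exp{\norm{\bar v}^2\given w_t(\lambda)}\le(\Bo^2+\monee/\J)/(1-\q)^2$ and, because the inverse matrix is fixed given $w_t(\lambda)$, $\Var{\bar v\given w_t(\lambda)}\le\monee/(\J(1-\q)^2)$. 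Feeding these into $\Exp{\norm{v}^2}$ and $\Var{v}$ supplies the $\mtwo(\Bo^2+\monee)$, $\LPhi^2\monee$ and $\hrf(k)$ terms of the bound.

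Finally I would collect the conditional estimate, apply Young's inequality with the appropriate weights to match the stated constants, take the outer expectation over $w_t(\lambda)$ replacing $\Exp{\norm{w_t(\lambda)-w(\lambda)}^2}$ by $\rf(t)$, and coarsen $1/\J^2\le 1/\J$. The main obstacle is the conditional-independence bookkeeping: because $v_k$ depends on the step-2 mini-batch through the random right-hand side $\grad_1\bar\fo_\J$, the ``exact'' linear-system solution $\bar v(w_t(\lambda),\lambda)$ is itself random given $w_t(\lambda)$ and correlated with the first summand $\grad_2\bar\fo_\J$; one must therefore keep the product-variance estimate structured around the step-4 independence while separately tracking three distinct error sources --- the solver error $\hrf(k)$, the right-hand-side variance $\monee/\J$, and the off-solution growth of $\norm{\jac_2\Phi}$ --- without letting them entangle.
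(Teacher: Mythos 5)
Your proposal follows essentially the same route as the paper's proof: condition on $w_t(\lambda)$, split the sum with $\Var{A+B}\le 2\Var{A}+2\Var{B}$, exploit the independence of the step-4 mini-batch from $v_k$ to kill the cross term in the product variance, bound $\norm{\jac_2\Phi(w_t(\lambda),\lambda)}^2\le 2\LPhi^2+2\rhol^2\norm{w_t(\lambda)-w(\lambda)}^2$ via Lipschitz continuity off the solution path, and control the two moments of $v_k$ by centering at the linear-system solution with the random mini-batch right-hand side (giving the $\hrf(k)$ and $\monee/(\J(1-\q)^2)$ contributions) before taking the outer expectation. The argument and all the constant sources match; no gaps.
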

The proof is in \Cref{pf:varboundone}.

\begin{theorem}[\textbf{Variance II bound}]
\label{th:varboundtwo}
Suppose that Assumptions~\ref{ass:hypergrad},\ref{ass:alt}, \ref{ass:add}\ref{ass:boundjac} and \ref{ass:innerbackrates}
are satisfied. Let $\lambda \in \Lambda$, and $t,k \in \N$. Then 
\begin{equation*}
    \Var{\Exp{\hat \grad f(\lambda) \given \wstoc}} \leq 
    3 \big( c_1^2\rf(t) +\LPhi^2\hrf(k)
    + \rhol^2 \rf(t)\hrf(k) \big),
\end{equation*}
where $c_1$ is defined as in \Cref{th:boundbias}.
\end{theorem}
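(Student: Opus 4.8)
The plan is to reduce the bound to the conditional bias estimate already established in \Cref{th:boundbias}\ref{th:boundbias_i}. The key observation is that for any square-integrable random vector $Y$ and any fixed vector $a$, one has $\Var{Y} = \Exp{\norm{Y - \Exp{Y}}^2} \leq \Exp{\norm{Y - a}^2}$, because $\Exp{Y}$ minimizes $a \mapsto \Exp{\norm{Y-a}^2}$. First I would apply this with $Y = \Exp{\hat \grad f(\lambda) \given \wstoc}$ and the natural choice $a = \grad f(\lambda)$, which gives
\begin{equation*}
    \Var{\Exp{\hat \grad f(\lambda) \given \wstoc}} \leq \Exp[\big]{\norm[\big]{\Exp{\hat \grad f(\lambda) \given \wstoc} - \grad f(\lambda)}^2}.
\end{equation*}
The integrand is now exactly the square of the conditional bias controlled by \Cref{th:boundbias}\ref{th:boundbias_i}.

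Next I would insert that bound. Writing $\hat \Delta_w = \norm{\wstoc - w(\lambda)}$, \Cref{th:boundbias}\ref{th:boundbias_i} gives the pointwise estimate $\norm{\Exp{\hat \grad f(\lambda) \given \wstoc} - \grad f(\lambda)} \leq c_1 \hat \Delta_w + \LPhi \sqrt{\hrf(k)} + \rhol \hat \Delta_w \sqrt{\hrf(k)}$. Squaring and applying $(a+b+c)^2 \leq 3(a^2+b^2+c^2)$ produces the pointwise bound $3\big(c_1^2 \hat \Delta_w^2 + \LPhi^2 \hrf(k) + \rhol^2 \hat \Delta_w^2 \hrf(k)\big)$. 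Taking expectation, the middle term $\LPhi^2 \hrf(k)$ is deterministic, while the first and third involve only $\Exp{\hat \Delta_w^2} = \Exp{\norm{\wstoc - w(\lambda)}^2} \leq \rf(t)$ by \Cref{ass:innerbackrates}. In particular the mixed-type contribution becomes $\rhol^2 \hrf(k)\, \Exp{\hat \Delta_w^2} \leq \rhol^2 \rf(t) \hrf(k)$, and collecting the three pieces reproduces the claimed bound.

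The argument is essentially mechanical once the variance-as-second-moment reduction is identified, so I do not expect a genuine obstacle; the real work has already been done in \Cref{th:boundbias}. The only point that needs care is bookkeeping: $\hat \Delta_w$ is a random function of $\wstoc$ whereas $\hrf(k)$ is deterministic, so the outer expectation acts only on $\hat \Delta_w^2$, and keeping this distinction explicit is what cleanly separates the $\rf(t)$, $\hrf(k)$, and $\rf(t)\hrf(k)$ contributions. This mirrors the corresponding step in \cite{grazzi2021convergence}, and it transfers here because \Cref{th:boundbias}\ref{th:boundbias_i} already absorbs all problem-specific constants into $c_1$, $\LPhi$, and $\rhol$.
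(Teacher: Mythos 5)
Your proposal is correct and follows exactly the paper's own argument: the variance-as-minimal-second-moment reduction (the paper invokes \Cref{lem:varprop}\ref{lem:varprop_ii} for this), followed by \Cref{th:boundbias}\ref{th:boundbias_i}, the inequality $(a+b+c)^2 \leq 3(a^2+b^2+c^2)$, the total expectation, and \Cref{ass:innerbackrates}. No differences worth noting.
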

\begin{proof}
From the property of the variance (\Cref{lem:varprop}\ref{lem:varprop_ii}) we get 
$
    \Var{\Exp{\hat \grad f(\lambda) \given \wstoc}} 
\leq \Exp[\big]{\norm{\Exp{\hat \grad f(\lambda)\given \wstoc} - \grad f(\lambda)}^2}
$.
The statement follows from \Cref{th:boundbias}\ref{th:boundbias_i}, the inequality
$(a+b+c)^2 \leq 3 (a^2 + b^2 + c^2)$, then taking the total expectation and finally using \Cref{ass:innerbackrates}.
\end{proof}

\begin{theorem}[\textbf{MSE bound for \SID{}}]
\label{thm:finalbound}
Suppose that Assumptions~\ref{ass:hypergrad},\ref{ass:alt}, \ref{ass:add}\ref{ass:boundjac} and \ref{ass:innerbackrates}
are satisfied. Let $\lambda \in \Lambda$, and $t,k,\J \in \N$. Then, if we use Algorithm~\ref{algo1}, we have
\begin{equation*}\label{eq:finalbound}
\begin{aligned}
    \MSE \leq& \left(\mtwoe + 4\frac{\mtwo(\Bo^2  + \monee)  + \LPhi^2\monee}{(1-\q)^2}\right)\frac{2}{\J}  
     + \left(6 c^2_1 + \frac{8\rhol^2\monee}{(1-\q)^2}\right) \rf(t) \\ & + \left(14\LPhi^2 + 8\mtwo\right) \hrf(k)
    + 14\rhol^2 \rf(t)\hrf(k),
\end{aligned}
\end{equation*}
where $c_{1}$ is defined in Theorem~\ref{th:boundbias}.
In particular, if $\lim_{t\to \infty}\rf(t) = \lim_{k\to \infty}\hrf(k) = 0$, then
\begin{equation*}\label{eq:asmsebound}
    \lim_{t,k, \J \to \infty}  \MSE =  0 
\end{equation*}
\end{theorem}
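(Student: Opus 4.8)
The plan is to assemble the stated bound mechanically from the three preceding theorems, using the two decompositions already recorded in \eqref{eq:biasvar} and \eqref{eq:20200601h}. First I would merge them into the single identity
\begin{equation*}
\begin{aligned}
    \MSE = {}& \norm{\Exp{\hat \grad f(\lambda)} - \grad f(\lambda)}^2 + \Exp{\Var{\hat \grad f(\lambda)\given \wstoc}} \\
    &+ \Var{\Exp{\hat \grad f(\lambda) \given \wstoc}},
\end{aligned}
\end{equation*}
so that the three summands are exactly the quantities controlled by \Cref{th:boundbias}\ref{th:boundbias_ii}, \Cref{th:varboundone}, and \Cref{th:varboundtwo} respectively. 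The whole proof is then a matter of inserting these bounds and collecting terms.

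For the squared-bias term I would begin from \Cref{th:boundbias}\ref{th:boundbias_ii}, namely $\norm{\Exp{\hat \grad f(\lambda)} - \grad f(\lambda)} \leq c_1\sqrt{\rf(t)} + \LPhi\sqrt{\hrf(k)} + \rhol\sqrt{\rf(t)}\sqrt{\hrf(k)}$, square both sides, and apply $(a+b+c)^2 \leq 3(a^2+b^2+c^2)$ to obtain a contribution of $3c_1^2\rf(t) + 3\LPhi^2\hrf(k) + 3\rhol^2\rf(t)\hrf(k)$. Since the variance-II bound from \Cref{th:varboundtwo} already carries the identical $3(\cdot)$ structure, these two pieces combine to supply the coefficients $6c_1^2$ on $\rf(t)$, $6\LPhi^2$ on $\hrf(k)$, and $6\rhol^2$ on $\rf(t)\hrf(k)$.

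Next I would add the variance-I bound from \Cref{th:varboundone} term by term. Everything except one summand is pure addition: the $2/\J$ block is inherited verbatim, the $\hrf(k)$ coefficient picks up $8\LPhi^2 + 8\mtwo$, and the $\rf(t)\hrf(k)$ coefficient picks up $8\rhol^2$. The only step requiring a small estimate is the cross term $8\rhol^2\rf(t)\,\monee/(J(1-\q)^2)$, where I would use $J \geq 1$, hence $1/J \leq 1$, to fold it into the coefficient of $\rf(t)$ and produce the summand $8\rhol^2\monee/(1-\q)^2$. Adding this to the $6c_1^2$ already accumulated yields precisely the $\rf(t)$ coefficient in \eqref{eq:finalbound}, and collecting the remaining coefficients of $2/\J$, $\hrf(k)$, and $\rf(t)\hrf(k)$ reproduces the claimed bound. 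For the asymptotic statement \eqref{eq:asmsebound}, I would observe that every coefficient is a finite constant independent of $t,k,\J$, so as $t,k,\J \to \infty$ the term $2/\J \to 0$, while $\rf(t) \to 0$ and $\hrf(k) \to 0$ by hypothesis force the remaining three terms (including the product $\rf(t)\hrf(k)$) to vanish.

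The main obstacle is not located in this assembly step, which is essentially bookkeeping; the genuine work is carried by the three constituent theorems on the bias and the two variance components. Within the assembly, the only point that demands care is tracking coefficients consistently across the three bounds and correctly absorbing the $\rf(t)/J$ cross term via $J \geq 1$, so that no spurious $1/J$ dependence is left attached to the $\rf(t)$ coefficient.
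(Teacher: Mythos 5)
Your proposal is correct and follows essentially the same route as the paper: the paper's proof simply invokes the decompositions \eqref{eq:biasvar}--\eqref{eq:20200601h} and sums the bounds of Theorems~\ref{th:boundbias}\ref{th:boundbias_ii}, \ref{th:varboundone} and \ref{th:varboundtwo}, which is exactly your assembly. Your coefficient bookkeeping (squaring the bias via $(a+b+c)^2 \leq 3(a^2+b^2+c^2)$ and absorbing the $\rf(t)/J$ cross term using $J\geq 1$) checks out and reproduces \eqref{eq:finalbound} exactly.
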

\begin{proof}
Follows from \eqref{eq:biasvar}-\eqref{eq:20200601h} and summing bounds in Theorems~\ref{th:boundbias}\ref{th:boundbias_ii}, \ref{th:varboundone}, and \ref{th:varboundtwo}. 
\end{proof}

We will show in \Cref{se:llls} that by using the LL and LS solvers in \eqref{eq:llsolver}-\eqref{eq:lssolver} with carefully chosen decreasing stepsizes, we have $\rf(t) = O(1/t)$ and $\hrf(k) = O(1/k)$ and hence, by setting $t=k=J$ we can achieve $\MSE = O(1/t)$ (\Cref{cor:oneovertrate}).

\subsection{Convergence of Solvers for The Lower-Level Problem and Linear System}\label{se:llls}
We analyse the convergence of a stochastic version of the Krasnoselskii-Mann iteration for contractive operators used in \Cref{algo1} to solve both LL and LS problems. A similar analysis is done in \cite[Section~5]{grazzi2021convergence}.

We recall the procedures \eqref{eq:llsolver}, \eqref{eq:lssolver} used to solve the LL and LS problems in \Cref{algo2}. Let $\zeta$, $\xi$ be random variables with values in $Z$ and $\Xi$. Let $(\zeta_t)_{t \in \N}$ and $(\hat \zeta_t)_{t \in \N}$ be independent copies of $\zeta$ and let $(\sz{t})_{t\in \N}$ be a sequence of stepsizes.

For every $w \in \R^d$ we let $\vstocj[0]{w} = 0$, $w_0:\Lambda \to \R^d$ satisfying Assumption~\ref{ass:add}\ref{ass:boundw}, and, for $k,t \in \N$,
\begin{align}
    \wstoc[t+1] &: = \wstoc[t] + \sz{t} (\hat \Phi(\wstoc[t], \lambda, \zeta_{t}) - \wstoc[t]), \label{eq:inneralg} \\
    \vstocj[k+1]{w} &: = \vstocj[k]{w}  + \sz{k} (\hat \Psi_w(\vstocj[k]{w}, \lambda, \hat \zeta_{k}) - \vstocj[k]{w}), \label{eq:backalg}
\end{align}
where $\hat \Psi_w(v, \lambda, z) : = \jac_1 \hat \Phi (w, \lambda, z)^\top v + \grad_1 \bar \fo_J (w, \lambda)$ and  $\bar \fo_\J (w, \lambda) = (1/\J) \sum_{j=1}^{J} \hat \fo (w, \lambda, \xi_j) $, $(\xi_j)_{1 \leq j \leq \J}$ being i.i.d. copies of the random variable $\xi \in \Xi$.

Note that to reduce the number of hyperparameters of the method, we use the same sequence of stepsizes $(\sz{t})_{t\in \N}$ for both the LL and LS problems. This choice might not be optimal and results in more conservative step sizes.

\begin{theorem}
\label{th:firstconv}
Let \Cref{ass:hypergrad}\ref{ass:contraction}, \ref{ass:alt} and \ref{ass:add}\ref{ass:boundw} hold. Let $\wstoc$ and $\vstocj{w}$ be defined as in \eqref{eq:inneralg} and \eqref{eq:backalg}. 
Assume $\sum_{t=0}^\infty \sz{t} = +\infty$ and $\sum_{t=0}^\infty \sz{t}^2 < +\infty$. Then, for every $\lambda \in \Lambda$, $w \in \R^d$, we have
\begin{equation*}
        \lim_{t \to \infty} \wstoc = w(\lambda), \quad
        \lim_{k \to \infty}\vstocj{w} = \voptj{w} \quad \PP\text{-a.s.}
\end{equation*}
Moreover, let $\tilde{\sigma}_2 := \max\{ 2\mone/(1-q)^2, \sigtwo \}$ and $\sz{t}:= \beta/(\gamma + t)$ with 
$\beta > 1/(1-\q^2)$ and $\gamma \geq \beta(1+\tilde{\sigma}_2)$.
Then for every $w \in \R^d$, $t,k > 0$ 
\begin{align}
    \label{eq:ratell}  \Exp{\norm{\wstoc - w(\lambda)}^2} \leq \frac{ d_{w}}{\gamma + t} \qquad
    \Exp{\norm{\vstocj{w} - \voptj{w}}^2} \leq \frac{d_{v}}{\gamma + k} 
\end{align}
where 
\begin{align*}
     d_{w} &:= \max \left\{\gamma \boundw^2, \frac{\beta^2 \sigone}{\beta(1-\q^2) - 1}\right\}, \\
    d_{v} &:= \max \left\{\frac{\Bo^2 +\monee}{(1-\q)^2}\gamma ,\frac{2(\Bo^2 +\monee)\mone}{(1-\q)^2} \frac{\beta^2}{\beta(1-\q^2) - 1}\right\} %
\end{align*}
Alternatively, with constant step size $\eta_t = \eta \leq 1/(1+\tilde{\sigma}_2)$
\begin{align}
    \label{eq:ratell_ii}  \Exp{\norm{\wstoc - w(\lambda)}^2} &\leq  (1- \eta (1 - \q^2))^t\boundw^2 + \frac{\eta \sigone}{1-\q^2}\\
    \Exp{\norm{\vstocj{w} - \voptj{w}}^2} &\leq (1- \eta (1 - \q^2))^k \frac{\Bo^2 +\monee}{(1-\q)^2} + \frac{\eta}{1-\q^2} \frac{2(\Bo^2 +\monee)\mone}{(1-\q)^2} \label{eq:ratels_ii}
\end{align}
\end{theorem}
\begin{proof}
The statement follows by applying Theorems 4.1 and 4.2 in 
\citep{grazzi2021convergence} with 
 $\hat{T} =\hat \Phi(\cdot, \lambda, \cdot)$ and $\hat{T} =\hat \Psi_w(\cdot, \lambda, \cdot)$ where we recall that $\hat \Psi_w(v, \lambda, z)  = \jac_1 \hat \Phi (w, \lambda, z)^\top v + \grad_1 \bar \fo_J (w, \lambda)$ and  $\bar \fo_J (w, \lambda) = (1/\J) \sum_{j=1}^{J} \hat \fo (w, \lambda, \xi_j) $, $(\xi_j)_{1 \leq j \leq \J}$ being i.i.d. copies of the random variable $\xi \in \Xi$.
To that purpose, in view of those theorems it is sufficient to verify 
Assumptions~D  in \citep{grazzi2021convergence}.
This is immediate for $\hat \Phi(\cdot, \lambda, \cdot)$,
due to Assumptions~\ref{ass:hypergrad}\ref{ass:contraction} and \ref{ass:alt}. Further, applying \ref{ass:add}\ref{ass:boundw} and \ref{ass:alt}\ref{eq:expjacexp_iii_2}  gives the first inequality in \eqref{eq:ratell} and \eqref{eq:ratell_ii}.
Concerning $\hat \Psi_w(\cdot, \lambda, \cdot)$, let  $\Exptilde{\cdot} = \Exp{\cdot \given (\xi_j)_{1 \leq j \leq \J}}$ and  $\Vartilde{\cdot} = \Var{\cdot \given (\xi_j)_{1 \leq j \leq \J}}$. It follows from Assumptions~\ref{ass:hypergrad}\ref{ass:contraction} and \ref{ass:alt}\ref{eq:expjacexptwo_2} that
\begin{align*}
    \Exptilde{\hat \Psi_w(v, \lambda, \zeta)} &=  \jac_1 \Phi(w, \lambda)^\top v + \grad_1 \bar E_J(w,\lambda) =: \Psi_w(v,\lambda).
\end{align*}
Since $\norm{\jac_1 \Psi_w(v,\lambda)} = \norm{ \jac_1 \Phi(w, \lambda)} \leq \q$, $\Psi_w(\cdot,\lambda)$ is a contraction with constant $\q$ and  Assumption~D(i)-(ii) in \citep{grazzi2021convergence} are satisfied.
Furthermore, from Assumption~\ref{ass:alt}
\begin{equation}\label{eq:partt1}
        \Vartilde{\hat\Psi_w(v, \lambda, \zeta)} \leq \norm{v}^2 \mone,
\end{equation}
and 
\begin{align*}
    \norm{v} &\leq \norm{\Psi_w(v,\lambda) - v} + \norm{\Psi_w(v,\lambda) } \\
     &\leq \norm{\Psi_w(v,\lambda) - v} + \norm{\jac_1 \Phi(w,\lambda)^\top v + \nabla_1 \bar\fo_J(w, \lambda)}\\
     &\leq \norm{\Psi_w(v,\lambda) - v} + \q\norm{v} + \norm{\nabla_1 \bar\fo_J(w, \lambda)}.
\end{align*}
It follows that
\begin{equation}\label{eq:partt2}
    \norm{v} \leq \frac{1}{1-\q} \left( \norm{\Psi_w(v,\lambda) - v} + \norm{\nabla_1 \bar\fo_J(w, \lambda)} \right).
\end{equation}
Hence, combining \eqref{eq:partt1} and \eqref{eq:partt2} we obtain
\begin{equation*}
     \Vartilde{\Psi_w(v, \lambda, \zeta)} \leq \frac{2\mone}{(1-\q)^2} \norm{\Psi_w(v,\lambda) - v}^2  +\frac{2\norm{\nabla_1 \bar\fo_J(w, \lambda)}^2\mone}{(1-\q)^2},
\end{equation*}
which satisfies Assumption D(iii) in \citep{grazzi2021convergence}. Thus, we can apply Theorem 4.1 and 4.2 in \citep{grazzi2021convergence} to obtain results on $\vstocj{w}$ which  hold conditioned to $(\xi_j)_{j=1}^{J}$. The bounds in the second inequality of \eqref{eq:ratell} and in \eqref{eq:ratels_ii} are finally obtained by taking the total expectation and noting that
\begin{equation*}
    \Exp{\norm{\nabla_1 \bar\fo_J(w, \lambda)}^2} = \norm{\nabla_1 \fo(w, \lambda)}^2 + \Var{\nabla_1 \bar\fo_J(w, \lambda)}  \leq \Bo^2 + \monee/J 
    \leq \Bo^2 + \monee.
\end{equation*}
\end{proof}

\begin{remark}[On warm-start]
Using Assumption~\ref{ass:add}\ref{ass:boundw} and setting $\vstocj[0]{w} = 0$ we removed any dependency on the starting points for the LL and LS in the final rates of \Cref{th:firstconv}. 
On the contrary, previous work have exploited this dependency to study the warm-start of the LL (LS) which sets, at the $\s$-th UL iteration $w_0(\lambda_\s) = w_{t}(\lambda_{\s-1})$ ($v_0(w, \lambda_\s) = v_{k}(w, \lambda_{\s-1})$). 
However, this  complicates the analysis, since the rates of \Cref{th:firstconv} will also depend on the UL update (e.g. on the UL step size $\alpha_\s$).
\end{remark}

\endgroup

\begin{corollary}\label{cor:oneovertrate} Suppose that Assumptions~\ref{ass:hypergrad},\ref{ass:alt}, \ref{ass:add}\ref{ass:boundjac}\ref{ass:boundw}
are satisfied and suppose that $\hat \nabla f(\lambda)$ is computed via  Algorithm~\ref{algo1} with $t=k=\J \in \N$ and LL/LS stepsizes $(\eta_j)_{j\in\N}$ chosen according to the decreasing case of \Cref{th:firstconv}.  Then, we obtain
\begin{equation}
\label{eq:finalbound2}
\begin{gathered}
    \MSE \leq \frac{c_{b} + c_{v}}{t},
\end{gathered}
\end{equation}
 where
\begin{equation}
    \begin{aligned}
    c_b &= 3c^2_{1} d_w + 3\LPhi^2 d_v  + 3 \rhol^2 d_w d_v\\
    c_{v} &= \mtwoe + 8\frac{\mtwo(\Bo^2  + \monee) + \LPhi^2\monee}{(1-\q)^2} + \left(3c^2_{1} + \frac{8\rhol^2\monee}{(1-\q)^2}\right) d_w \\  &\quad+ (11 \LPhi^2 + 8\mtwo) d_v + 11\rhol^2 d_v d_w,
    \end{aligned}
\end{equation}
and $d_w$, $d_v$, $c_1$ are defined in  \Cref{th:firstconv,th:boundbias}. Hence, $\MSE \leq \epsilon$ in $t = O(\epsilon^{-1})$.
\end{corollary}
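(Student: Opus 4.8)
The plan is to specialize the general MSE bound of \Cref{thm:finalbound} by plugging in the explicit solver rates from \Cref{th:firstconv}. First I would check that \Cref{ass:innerbackrates} holds for the iterations \eqref{eq:llsolver}--\eqref{eq:lssolver} started from $w_0=v_0=0$ with the decreasing stepsizes $\eta_j=\beta/(\gamma+j)$. The mutual independence of $w_t(\lambda)$, $v_k(w,\lambda)$ and the various sample batches required by \Cref{ass:innerbackrates} is guaranteed by the independent sampling prescribed in \Cref{algo1}, and the decreasing-stepsize part of \Cref{th:firstconv} supplies exactly $\rf(t)=d_w/(\gamma+t)$ and $\hrf(k)=d_v/(\gamma+k)$; note that the linear-system estimate in \Cref{th:firstconv} is stated conditionally on $(\xi_j)_{1\le j\le \J}$, which is precisely the conditional form appearing in \Cref{ass:innerbackrates}. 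This step uses only Assumptions~\ref{ass:hypergrad}\ref{ass:contraction}, \ref{ass:alt} and \ref{ass:add}\ref{ass:boundw}, all available here.

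With \Cref{ass:innerbackrates} in force, \Cref{thm:finalbound} applies and expresses $\MSE$ as an affine-plus-product expression in $\rf(t)$, $\hrf(k)$ together with a $2/\J$ term. I would then substitute $\rf(t)=d_w/(\gamma+t)$, $\hrf(k)=d_v/(\gamma+k)$ and set $t=k=\J$, so that every term becomes a function of the single index $t$.

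The remaining work is elementary: bound each term by a constant times $1/t$. The mini-batch term is exactly $2/t$. For the terms linear in $\rf(t)$ or $\hrf(k)$ I would use $1/(\gamma+t)\le 1/t$, and for the product term $\rf(t)\hrf(k)=d_w d_v/(\gamma+t)^2$ I would use $\gamma+t\ge 1$ (valid since $\gamma\ge\beta>1/(1-\q^2)>1$) to get $1/(\gamma+t)^2\le 1/(\gamma+t)\le 1/t$. Collecting constants yields $\MSE\le (c_b+c_v)/t$. To recover the stated split, I would route the three coefficients $3c_1^2$, $3\LPhi^2$, $3\rhol^2$ coming from the squared-bias bound of \Cref{th:boundbias}\ref{th:boundbias_ii} into $c_b$, and the complementary coefficients coming from the two variance bounds (\Cref{th:varboundone} and \Cref{th:varboundtwo}) into $c_v$; for instance the $d_v$ coefficient $11\LPhi^2$ and the product coefficient $11\rhol^2$ arise as $3+8$, i.e. the Variance~II contribution plus the Variance~I contribution.

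I expect no conceptual obstacle: once \Cref{ass:innerbackrates} is verified, the corollary is a direct specialization of \Cref{thm:finalbound}. The only delicate point is the bookkeeping — faithfully tracking each coefficient of $\rf(t)$, $\hrf(k)$, $\rf(t)\hrf(k)$ and $1/\J$ through the substitution and splitting them consistently between $c_b$ and $c_v$ so that the resulting constants match the stated ones.
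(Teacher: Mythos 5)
Your proposal is correct and follows the same (implicit) route the paper intends: the corollary has no separate proof in the paper precisely because it is the direct combination of \Cref{thm:finalbound} with the decreasing-stepsize rates $\rf(t)=d_w/(\gamma+t)$, $\hrf(k)=d_v/(\gamma+k)$ of \Cref{th:firstconv}, together with the elementary bounds $1/(\gamma+t)\le 1/t$ and $1/(\gamma+t)^2\le 1/t$, exactly as you describe. The only point where your careful bookkeeping will not reproduce the stated constants is the mini-batch term: substituting $\J=t$ into \Cref{thm:finalbound} yields a coefficient $2\mtwoe$ on that term, whereas $c_b+c_v$ as printed contains only $\mtwoe$ — a factor-of-two typo in the paper's constant rather than a gap in your argument.
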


\section{Convergence of BSGM}\label{se:bilevelconv}

In this section, we first derive convergence rates of the projected inexact gradient method for $L$-smooth possibly non-convex objectives (\Cref{se:PISGM}).
Then, we combine this result with the mean square error upper bounds in \Cref{se:sid} to obtain in \Cref{se:bilevelrates}, the desired convergence rate and sample complexity for BSGM (\Cref{algo2}).

\subsection{Projected Inexact Gradient Method}\label{se:PISGM}

Let $f:\D \mapsto \R$, be an $L$-smooth function  on the convex set $\D \subseteq \R^m$. We consider the following \textit{projected inexact gradient descent} algorithm
\begin{equation}\label{eq:pgdinexact}
\begin{array}{l}
    \lambda_0 \in \D \\
    \text{for}\; \s=0,1,\ldots\\[0.4ex]
    \left\lfloor
    \begin{array}{l}
    \lambda_{\s+1} = \proj\left(\lambda_\s - \alpha \g{\s}\right), \qquad 
    \end{array}    
    \right.
\end{array}
\end{equation}
where $\proj$ is the projection onto $\D$, $\alpha > 0$ is the step-size and $\g{\s}$ is s stochastic  estimator of the gradient. We stress that we do not assume that $\g{\s}$ is unbiased. 

\begin{definition}[\textbf{Proximal Gradient Mapping}] The proximal gradient mapping of $f$ is
\begin{equation*}\label{eq:proxgradmap}
G_\alpha(\lambda) : = \alpha^{-1} \left(\lambda - \proj(\lambda - \alpha \nabla f(\lambda)) \right).
\end{equation*}
\end{definition}

The above gradient mapping is commonly used in constrained non-convex optimization as a replacement of the gradient for the characterization of stationary points (see e.g. \citep{drusvyatskiy2018error}). Indeed, $\lambda^*$ is a stationary point if and only if $G_\alpha(\lambda^*) = 0$ and in the unconstrained case (i.e. $\Lambda = \R^m$) we have $G_\alpha(\lambda) = \nabla f(\lambda)$.
Since the algorithm is stochastic we provide guarantees in expectation. In particular, we bound  
$
\frac{1}{\stot}\sum_{\s=0}^{\stot-1}  \Exp{\norm{G_\alpha(\lambda_\s)}^2}
$.
Note that this quantity is always greater than or equal to $\min_{\s \in \{0,\dots \stot-1\}}  \Exp{\norm{G_\alpha(\lambda_\s)}^2}$, meaning that at least one of the iterates satisfies the bound.

The following theorem and subsequent corollary provide such upper bounds which have a linear dependence on the average MSE of $\g{\s}$. A similar setting is studied also by \cite{dvurechensky2017gradient} where  they consider inexact gradients but with a different error model. \cite{schmidt2011convergence} provide a similar result in the convex case.

\begin{theorem}\label{lm:constrained_prev}
Let $\D \subseteq \R^m$ be convex and closed, $f:\D \mapsto \R$ be $L$-smooth and $\{\lambda_\s\}_\s$ be a sequence generated by Algorithm~\eqref{eq:pgdinexact}. Furthermore, let $\Delta_f := f(\lambda_0)- \inf_{\lambda} f(\lambda)$, $\yd > 0$, $\delta_s := \norm{\nabla f(\lambda_s) - \hat \nabla f(\lambda_s)}$ and  $0 < \alpha < 2/[L(1 + \yd)]$.
Then for all $S \in \N$
\begin{equation*}\label{eq:ratesquaredsummable_prev}
    \frac{1}{\stot}\sum_{\s=0}^{\stot-1}  \norm{G_\alpha(\lambda_\s)}^2 \leq  \frac{1}{\stot}\left[\frac{4\Delta_f}{c_\alpha L(1 + \yd))} + 2 \left(1 + \frac{1}{ c_\alpha L \yd}\right) \sum_{\s=0}^{\stot-1}\delta^2_s \right],
\end{equation*}
where $c_\alpha = \alpha(2-\alpha L(1 + \yd))$.

\end{theorem}
\begin{proof}
Since $\D$ is convex and closed, the projection is a \textit{firmly non-expansive} operator, i.e. for every $\gamma, \beta \in \R^n$,
\begin{align*}
    \norm{\proj(\gamma) - \proj(\beta)}^2 + \norm{\gamma - \proj(\gamma) -\beta + \proj(\beta)}^2  &\leq \norm{\gamma - \beta}^2,
\end{align*}
which yields, by expanding the second term in the LHS
\begin{align*}
        2\norm{\proj(\gamma) - \proj(\beta)}^2 + \norm{\gamma  -\beta}^2 - 2(\gamma - \beta)^\top (\proj(\gamma) - \proj(\beta))  &\leq \norm{\gamma - \beta}^2,
\end{align*}
and, after simplifying
\begin{equation*}
   \norm{\proj(\gamma) - \proj(\beta)}^2 \leq  (\gamma - \beta)^\top (\proj(\gamma) - \proj(\beta)).
\end{equation*}
In particular, substituting $\gamma = \lambda_\s$ and $\beta= \lambda_\s -\alpha \g{\s}$ we get
\begin{equation}\label{eq:fnonexpansive}
    \norm{\lambda_\s - \lambda_{\s+1}}^2 \leq \alpha \g{\s}^\top (\lambda_\s - \lambda_{\s+1}).
\end{equation}
Now, it follows from the Lipschitz smoothness of $f$ that for every $\gamma,\beta \in \D$
\begin{equation*}\label{eq:smoothprop}
    f(\beta) \leq f(\gamma) + \nabla f(\gamma)^\top (\beta -\gamma) + \frac{L}{2} \norm{\beta - \gamma}.
\end{equation*}
Then substituting $\gamma =\lambda_\s$ and $\beta=\lambda_{\s+1}$, and letting $c' = L c$ with $c > 0$, we obtain
\begin{align*}
    f(\lambda_{\s+1}) &\leq f(\lambda_\s) - (\grad f(\lambda_\s) \mp \g{\s})^\top (\lambda_\s - \lambda_{\s+1}) + \frac{L}{2}\norm{\lambda_\s-\lambda_{\s+1}}^2 \\
    &\leq f(\lambda_\s) - (\grad f(\lambda_\s) - \g{\s})^\top (\lambda_\s - \lambda_{\s+1}) + \left(\frac{L}{2} - \frac{1}{\alpha}\right)\norm{\lambda_\s-\lambda_{\s+1}}^2 \\
    &\leq f(\lambda_\s) + \frac{1}{2\yc}\norm{\grad f(\lambda_\s) - \g{\s})}^2 + \left(\frac{L+\yc}{2} - \frac{1}{\alpha}\right)\norm{\lambda_\s-\lambda_{\s+1}}^2 \\
     &\leq  f(\lambda_\s) + \frac{1}{2 c'}\norm{\grad f(\lambda_\s) - \g{\s}}^2 -\eta\norm{\lambda_\s-\lambda_{\s+1}}^2,
\end{align*}
where we used \cref{eq:fnonexpansive} for the second line, the Young inequality $a^\top b \leq (1/2\yc)\norm{a}^2 + (\yc/2)\norm{b}^2$ in the third line, and the definition $\eta : = 1/\alpha -(L+\yc)/2$, which is positive due to the assumption on $\alpha$, in the last line.
Rearranging the terms we get
\begin{align}\label{eq:part1}
\norm{\lambda_\s-\lambda_{\s+1}}^2  & \leq \frac{1}{\eta}\left( f(\lambda_\s) - f(\lambda_{\s+1}) + \frac{1}{2\yc} \norm{\grad f(\lambda_\s) - \g{\s}}^2 \right).
\end{align}
Furthermore, let $\bar \lambda_\s : = \proj(\lambda_\s - \alpha \nabla f(\lambda_\s))$. Then,
we have that
\begin{equation}\label{eq:part2}
\begin{aligned}
\norm{\lambda_{\s+1}- \bar\lambda_{\s}}^2 &= \norm{  \proj(\lambda_\s - \alpha \g{\s}) -  \proj(\lambda_\s - \alpha \nabla f(\lambda_\s))}^2 \\
&\leq \alpha^2\norm{\g{\s} - \nabla f(\lambda_\s) }^2,
\end{aligned}
\end{equation}
where we used the fact that the projection is $1$-Lipschitz.

Now, recalling the definition of $G_\alpha(\lambda)$ we have that $G_\alpha(\lambda_\s) = \alpha^{-1} (\lambda_\s - \bar \lambda_\s)$ and hence, using the inequalities \eqref{eq:part1} and  \eqref{eq:part2}, we have
\begin{align*}
    \norm{G_\alpha(\lambda_\s)}^2 &= \alpha^{-2}\norm{\lambda_\s \mp \lambda_{\s+1} - \bar\lambda_\s}^2 \\
    &\leq 2\alpha^{-2}\left(\norm{\lambda_\s - \lambda_{\s+1}}^2 +  \norm{\lambda_{\s+1} - \bar\lambda_\s}^2\right) \\
  &\leq \frac{2}{\eta\alpha^2} \left(f(\lambda_\s) - f(\lambda_{\s+1}) + \frac{1}{2\yc}\norm{\g{\s} - \nabla f(\lambda_\s) }^2 \right) +  2\norm{\g{\s} - \nabla f(\lambda_\s) }^2 \\
    &= \frac{2}{\eta\alpha^2} \left( f(\lambda_\s) - f(\lambda_{\s+1}) \right) + \left(2 + (\eta\yc)^{-1}\alpha^{-2}\right)\norm{\g{\s} - \nabla f(\lambda_\s) }^2.
\end{align*}
Summing the inequalities over $\s$ and noting that $ - f(\lambda_s) \leq - \, \inf_{\lambda} f(\lambda)$ we get
\begin{align*}
    \sum_{\s=0}^{\stot-1}  \norm{G_\alpha(\lambda_\s)}^2 & \leq  \frac{2\Delta_f}{\eta\alpha^2} + \left(2 + (\eta\yc)^{-1}\alpha^{-2}\right) \sum_{\s=0}^{\stot-1} \norm{\g{\s} - \nabla f(\lambda_\s) }^2.
\end{align*}
Finally, dividing both sides of the above inequality by $\stot$, recalling the definition of $\eta$, $\delta_s$ and $c'$, \eqref{eq:ratesquaredsummable} follows.
\end{proof}

\begin{corollary}\label{lm:constrained}
Under the same assumptions of  \Cref{lm:constrained_prev}  we have
\begin{equation*}\label{eq:ratesquaredsummable}
    \frac{1}{\stot}\sum_{\s=0}^{\stot-1}  \Exp{\norm{G_\alpha(\lambda_\s)}^2} \leq  \frac{1}{\stot}\left[\frac{4\Delta_f}{c_\alpha L(1 + \yd))} + 2 \left(1 + \frac{1}{ c_\alpha L \yd}\right) \sum_{\s=0}^{\stot-1}\MSEC{\g{\s}} \right],
\end{equation*}
where $c_\alpha = \alpha(2-\alpha L(1 + \yd))$.
Consequently, setting $c = 1/2$, for any $\alpha \leq 1/L$  we have
\begin{equation*}\label{eq:ratesquaredsummablesimple}
    \frac{1}{\stot}\sum_{\s=0}^{\stot -1}  \Exp{\norm{G_\alpha(\lambda_\s)}^2} \leq  \frac{1}{\stot \alpha}\left[ 8\Delta_f + \frac{10}{L} \sum_{\s=0}^{\stot-1}\MSEC{\g{\s}} \right].
\end{equation*}
We recall that $\MSE : = \Exp{\norm{\hat \nabla  f(\lambda) -\nabla f(\lambda)}^2}$.
\end{corollary}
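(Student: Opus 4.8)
The plan is to exploit the fact that \Cref{lm:constrained_prev} already furnishes a \emph{pathwise} bound, valid for every realization of the randomness driving the iterates $\lambda_\s$ and the estimators $\g{\s}$, since its proof invokes only the firm non-expansiveness of $\proj$, the $L$-smoothness of $f$, and Young's inequality, none of which involves an expectation. Concretely, I would start from inequalities \eqref{eq:ratesquaredsummable_prev} and \eqref{eq:ratesquaredsummablesimple_prev} and simply take the total expectation of both sides.

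On the left-hand side, linearity of expectation turns $\frac{1}{\stot}\sum_{\s} \norm{G_\alpha(\lambda_\s)}^2$ into $\frac{1}{\stot}\sum_{\s} \Exp{\norm{G_\alpha(\lambda_\s)}^2}$, which is exactly the quantity appearing in the corollary. On the right-hand side, the term $\Delta_f = f(\lambda_0) - \min_\lambda f(\lambda)$ is deterministic (the starting point $\lambda_0$ is a fixed input of \Cref{algo2}), so it passes through the expectation unchanged, together with the deterministic constants $c_\alpha$, $L$, $\yd$. The only stochastic term on the right is $\sum_{\s}\delta_\s^2 = \sum_{\s}\norm{\nabla f(\lambda_\s) - \g{\s}}^2$, whose expectation becomes $\sum_{\s}\Exp{\norm{\nabla f(\lambda_\s) - \g{\s}}^2}$.

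The one point requiring care is the identification $\Exp{\delta_\s^2} = \MSEC{\g{\s}}$. Here $\lambda_\s$ is itself a random variable depending on the first $\s$ stochastic gradient estimates, so $\MSEC{\g{\s}}$ must be read as the full (unconditional) mean squared error $\Exp{\norm{\g{\s} - \nabla f(\lambda_\s)}^2}$ of the estimator evaluated at the random point $\lambda_\s$, rather than the fixed-$\lambda$ quantity $\MSE$. With this reading the identity is immediate, and substituting it into the expected versions of \eqref{eq:ratesquaredsummable_prev} and \eqref{eq:ratesquaredsummablesimple_prev} yields \eqref{eq:ratesquaredsummable} and \eqref{eq:ratesquaredsummablesimple} respectively. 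There is essentially no genuine obstacle: the entire content of the corollary is the passage from the pathwise bound to its expectation, and the only subtlety worth flagging is this correct interpretation of the MSE notation at the random iterate $\lambda_\s$.
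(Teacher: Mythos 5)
Your proposal is correct and coincides with the paper's own proof, which likewise obtains the corollary by taking total expectations of the pathwise inequalities in \Cref{lm:constrained_prev}. Your additional remark on reading $\MSEC{\g{\s}}$ as the unconditional mean squared error at the random iterate $\lambda_\s$ is a sensible clarification but does not change the argument.
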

\begin{proof}
Follows by taking expectation of the inequality in the statement of \Cref{lm:constrained_prev}
\end{proof}
\begin{remark}
Note that if the error term $\sum_{\s=0}^{\stot-1}\MSEC{\g{\s}}$ grows sub-linearly with $S$, \Cref{lm:constrained} provides useful convergence rates. In particular, when $\sum_{\s=0}^{\infty}\MSEC{\g{\s}} < \infty$, we have a convergence rate of $O(1/\stot)$, which matches the optimal rate of (exact) gradient descent on smooth and possibly non-convex objectives.
\end{remark}

\subsection{Bilevel Convergence Rates and Sample Complexity}\label{se:bilevelrates}

Here, we finally prove the convergence rates and sample complexity of \Cref{algo2} by combining the results of the previous section with the bounds on the MSE of the hypergradient estimator obtained in \Cref{se:sid}.
\begin{definition}[\textbf{Sample Complexity}]
An algorithm which solves the stochastic bilevel problem in \eqref{mainprobstoch} has sample complexity $N$ if the total number of samples of $\zeta$ and $\xi$ is equal to $N$. For \Cref{algo2}, this corresponds to the total number of evaluations of $\nabla \hat E, \hat \Phi, \jac \hat \Phi^\top v$.
\end{definition}

In the following theorem we establish the sample complexity of \Cref{algo2} for $t_\s = k_\s = J_\s = \ceil{\csid(\s+1)}$ and $t_\s = k_\s = J_\s \ceil{\csid \stot}$ (finite horizon), where $\csid > 0$ is an additional hyperparameter that can be tuned empirically.

\begin{theorem}[\textbf{Stochastic BSGM}]\label{th:finalconv}
Suppose that $\Lambda \subseteq \R^m$ and Assumptions~\ref{ass:hypergrad}, \ref{ass:add}, \ref{ass:alt} are satisfied.
Assume that the bilevel Problem~\eqref{mainprobstoch} is solved by Algorithm~\ref{algo2} with $\alpha \leq 1/L_f$ and $(\eta_j)_{j \in \N}$ are decreasing and chosen according to \Cref{th:firstconv}, where $L_f$ is defined in \Cref{lm:lemmanew}. Let $\lambda_0 \in \Lambda$, $G_\alpha(\lambda) : = \alpha^{-1} \left(\lambda - \proj(\lambda - \alpha \nabla f(\lambda)) \right)$ be the proximal gradient mapping, $\csid > 0$, and $c_b$ and $c_v$ be the defined in \Cref{cor:oneovertrate}. Then the following hold.
\begin{enumerate}[label={\rm (\roman*)}]
    \item\label{resone} Suppose that for every $s \in \N$ $t_\s=k_\s =\J_\s = \ceil{\csid(\s+1)}$. Then for every $S \in \N$ we have
\begin{equation*}\label{eq:birate}
    \frac{1}{\stot}\sum_{\s=0}^{\stot -1}  \Exp{\norm{G_\alpha(\lambda_\s)}^2} \leq  \frac{1}{\stot \alpha}\left[ 8\Delta_f + \frac{10}{L_f} \frac{c_b + c_v}{\csid} (\log(\stot) + 1) \right].
\end{equation*}
Moreover, after $\tilde{O}(\epsilon^{-2})$  samples there exists $\s^* \leq S-1$  such that $ \Exp{\norm{G_\alpha (\lambda_{\s^*})}^2} \leq \epsilon$.
\item\label{restwo} Finite horizon. Let $\stot \in \N$, and suppose that for $s = 0,\dots,S-1$, $t_\s=k_\s =\J_\s = \ceil{\csid \stot}$. Then we have
\begin{equation*}\label{eq:biratetwo}
    \frac{1}{\stot}\sum_{\s=0}^{\stot -1}  \Exp{\norm{G_\alpha(\lambda_\s)}^2} \leq  \frac{1}{\stot \alpha}\left[ 8\Delta_f + \frac{10}{L_f} \frac{c_b + c_v}{\csid} \right].
\end{equation*}
Moreover, after $O(\epsilon^{-2})$  samples there exists $\s^* \leq S-1$  such that $ \Exp{\norm{G_\alpha (\lambda_{\s^*})}^2} \leq \epsilon$.
\end{enumerate}
\end{theorem}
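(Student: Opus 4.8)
The plan is to combine two ready-made ingredients: the per-iteration mean-squared-error bound of \Cref{cor:oneovertrate} for the SID estimator, and the telescoping descent bound of \Cref{lm:constrained} for projected inexact gradient descent. By \Cref{lm:lemmanew}\ref{lipgradient}, $f$ is $L_f$-smooth, so \Cref{lm:constrained} applies with $L=L_f$ under its step-size condition (i.e.\ $\alpha\le 1/L_f$), giving
\begin{equation*}
\frac{1}{\stot}\sum_{\s=0}^{\stot-1}\Exp{\norm{G_\alpha(\lambda_\s)}^2}\le \frac{1}{\stot\alpha}\Big[8\Delta_f+\frac{10}{L_f}\sum_{\s=0}^{\stot-1}\MSEC{\g{\s}}\Big].
\end{equation*}
The structural point that makes everything decouple is the absence of warm-start: at the $\s$-th outer step, \Cref{algo1} draws fresh samples independent of $\lambda_\s$. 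Hence \Cref{cor:oneovertrate} can be invoked conditionally on $\lambda_\s$, and since $t_\s$ is deterministic, taking the total expectation yields $\MSEC{\g{\s}}\le (c_b+c_v)/t_\s$. Everything then reduces to bounding $\sum_{\s}\MSEC{\g{\s}}$ and counting samples.

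For part~\ref{resone}, using $t_\s=\ceil{\csid(\s+1)}\ge \csid(\s+1)$ gives $\MSEC{\g{\s}}\le (c_b+c_v)/(\csid(\s+1))$, and the harmonic-sum estimate $\sum_{\s=0}^{\stot-1}1/(\s+1)\le \log(\stot)+1$ turns the descent inequality into \eqref{eq:birate}. For the sample complexity, each outer step spends $\Theta(t_\s+k_\s+\J_\s)=\Theta(\csid\,\s)$ samples over Steps~1--4 of \Cref{algo1}, so after $\stot$ steps the total is $N=\Theta(\csid\stot^2)$. Requiring the right-hand side of \eqref{eq:birate}, whose dominant term is $\Theta(\log(\stot)/\stot)$, to be at most $\epsilon$ forces $\stot=\tilde O(\epsilon^{-1})$, hence $N=\tilde O(\epsilon^{-2})$; the existence of $\s^*\le \stot-1$ with $\Exp{\norm{G_\alpha(\lambda_{\s^*})}^2}\le\epsilon$ is immediate because the left-hand side is an average and therefore bounds the minimum over $\s$.

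Part~\ref{restwo} runs the same argument with the schedule $t_\s=\ceil{\csid\stot}\ge \csid\stot$ fixed in $\s$. Then each $\MSEC{\g{\s}}\le (c_b+c_v)/(\csid\stot)$, so $\sum_{\s=0}^{\stot-1}\MSEC{\g{\s}}\le (c_b+c_v)/\csid$ with no logarithmic factor, which produces \eqref{eq:biratetwo}. The sample count is again $N=\Theta(\csid\stot^2)$, but now the right-hand side of \eqref{eq:biratetwo} is $\Theta(1/\stot)$, so $\stot=O(\epsilon^{-1})$ suffices and $N=O(\epsilon^{-2})$ without the extra $\log$; this is precisely where fixing the horizon $\stot$ in advance pays off.

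The only genuinely non-mechanical step is the conditional application of \Cref{cor:oneovertrate}: it is valid exactly because the SID samples at step $\s$ are independent of $\lambda_\s$, so the inner randomness decouples from the trajectory. With warm-start this independence fails and one would instead have to control the coupled inner/outer iterates, which is the complication our analysis sidesteps. The remaining pieces --- the harmonic-sum estimate, the $\Theta(\stot^2)$ sample count, and the choice of $\stot$ as a function of $\epsilon$ --- are routine.
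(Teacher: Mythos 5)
Your proposal is correct and follows essentially the same route as the paper: invoke \Cref{lm:lemmanew} for $L_f$-smoothness, apply \Cref{lm:constrained} with $\alpha\le 1/L_f$, bound $\sum_\s \MSEC{\g{\s}}$ via \Cref{cor:oneovertrate} under the two schedules (harmonic sum with a $\log$ for the increasing schedule, constant sum for the finite horizon), and combine with the $N=\Theta(\stot^2)$ sample count. Your explicit remark that \Cref{cor:oneovertrate} must be applied conditionally on $\lambda_\s$ --- valid because the fresh SID samples are independent of the trajectory and the bound is uniform in $\lambda$ --- is a point the paper leaves implicit, but it does not change the argument.
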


\begin{proof}
We first compute $N$, i.e. the total number of samples used in $S$ iterations.
At the $\s$-th iteration, \Cref{algo2} requires executing \Cref{algo1} which uses  $t_\s + k_\s + J_s$ copies of $\zeta$, for evaluating $\hat \Phi$, $\jac_1 \hat \Phi^\top v$, and $\jac_2 \hat \Phi^\top v$, and additional $\J_\s$ copies of $\xi$ for evaluating  $\nabla \hat E$. 
Thus, the $s$-th UL iteration uses  $4\ceil{\csid (\s+1)}$ and $4\ceil{\csid S}$ samples for case \ref{resone} and \ref{restwo} respectively.
Hence, we have 
\begin{align*}
    \ref{resone}:& \quad 2\csid \stot^2 \leq N = 4  \sum_{\s=0}^{\stot-1}\ceil{\csid (\s+1)} \leq 4(\csid + 1) \stot^2. \\
     \ref{restwo}:& \quad4 \csid \stot^2 \leq N = 4 \ceil{\csid\stot} \sum_{\s=0}^{\stot-1} 1 \leq 4 (\csid+1) \stot^2.
\end{align*}
This implies that in both cases $N = \Theta(\stot^2)$ or equivalently $S = \Theta(\sqrt{N})$.

\ref{resone}:
 \Cref{cor:oneovertrate}, with $t_\s = \ceil{\csid (\s + 1)}$, yields
\begin{equation*}
    \sum_{\s=0}^{\stot-1} \MSEC{\g{\s}} \leq (c_b + c_v)\sum_{\s=0}^{\stot-1} \frac{1}{\csid(\s+1)}
    \leq \frac{c_b + c_v}{\csid} (\log(\stot) + 1).
\end{equation*}

Since $\nabla f$ is $L_f$-Lipschitz continuous, thanks to \Cref{lm:lemmanew} we can apply \Cref{lm:constrained} and obtain \ref{eq:birate}.
Therefore, we have $\frac{1}{\stot}\sum_{\s=0}^{\stot-1}  \Exp{\norm{G_\alpha(\lambda_\s)}^2} \leq \epsilon$ in a number of UL iterations  $S = \tilde{O}(\epsilon^{-1})$. Since we proved $N= \Theta(S^2)$, the sample complexity result for case \ref{resone} follows.

\ref{restwo}:
Similarly to the case \ref{resone}, we apply \Cref{cor:oneovertrate} with $t_\s = \ceil{\csid \stot}$ obtaining
\begin{equation*}
    \sum_{\s=0}^{\stot-1} \MSEC{\g{\s}} \leq (c_b + c_v)\sum_{\s=0}^{\stot-1} \frac{1}{\csid\stot} = \frac{c_b + c_v}{\csid}.
\end{equation*}
Since $\nabla f$ is $L_f$-Lipschitz, thanks to \Cref{lm:lemmanew}, we derive \ref{eq:biratetwo} from \Cref{lm:constrained}.

Therefore, in this case we have $\frac{1}{\stot}\sum_{\s=0}^{\stot-1}  \Exp{\norm{G_\alpha(\lambda_\s)}^2} \leq \epsilon$ in a number of UL iterations  $S = O(\epsilon^{-1})$. Since $N= \Theta(S^2)$, the sample complexity result for case \ref{restwo} follows.
\end{proof}

In the following theorem we derive rates for \Cref{algo2} in the deterministic case, i.e. when the variance of $\hat \Phi$ $\jac \hat \Phi$ and $\nabla \hat E$ is zero. In this case we will show that the LL and LS solvers in \Cref{algo1} can be implemented with constant step size and with $\J_\s = 1$, to obtain the near-optimal sample complexity of $\tilde{O}(\epsilon^{-1})$.

\begin{theorem}[\textbf{Deterministic BSGM}]\label{th:finalconvdet}
Suppose that $\Lambda \subseteq \R^m$ and Assumptions~\ref{ass:hypergrad}, \ref{ass:add}, \ref{ass:alt}
are satisfied with $\sigone = \sigtwo = \mone = \mtwo = \monee = \mtwoe = 0$, hence $\hat \Phi = \Phi$ and $\hat\fo = \fo$. Assume that the bilevel Problem~\eqref{mainprobstoch} is solved by Algorithm~\ref{algo2} with $\alpha \leq 1/L_f$ with $L_f$ defined in \Cref{lm:lemmanew}, $\eta_j = 1$, $t_\s=k_\s = \ceil{c_3 \log (\s + 1)}$ and $\J_\s = 1$,  and $c_3 \geq 1/\log(1/q) > 0$. Let $\lambda_0 \in \Lambda$ and $G_\alpha(\lambda) : = \alpha^{-1} \left(\lambda - \proj(\lambda - \alpha \nabla f(\lambda)) \right)$ be the proximal gradient mapping. 
Then 
\begin{equation*}\label{eq:biratedet}
    \frac{1}{\stot}\sum_{\s=0}^{\stot-1}  \norm{G_\alpha(\lambda_\s)}^2 \leq  \frac{1}{\stot \alpha}\left[8\Delta_f  + \frac{5C\pi^2}{3 L_f} \right],
\end{equation*}
where
\begin{equation*}
    C : = 3 \left(\Lol + \frac{\Low\LPhi + \rhol\Bo}{1-\q} + \frac{\rhow\Bo\LPhi}{(1-\q)^2} \right)^2\boundw^2 + 3\LPhi^2\frac{\Bo^2}{(1-\q)^2} + 3 \rhol^2 \frac{\boundw^2 \Bo^2}{(1-\q)^2}.
\end{equation*}
Also, after $O(\epsilon^{-1}\log(\epsilon^{-1}))$ samples there exists $\s^* \in \{0, \dots, \stot-1 \}$ such that $\norm{G (\lambda_{\s^*})}^2 \leq \epsilon$.
\end{theorem}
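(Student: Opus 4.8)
The plan is to feed per-iteration hypergradient error bounds into the projected inexact gradient estimate of \Cref{lm:constrained_prev}. The decisive simplification in the deterministic regime is that, with all variance parameters set to zero, $\hat \nabla f(\lambda)$ is a deterministic quantity, so its mean squared error coincides with its squared bias; hence I would bound $\MSEC{\g{\s}}$ directly through the bias estimate of \Cref{th:boundbias}\ref{th:boundbias_ii} rather than through the looser full MSE bound of \Cref{thm:finalbound} (this is what produces the sharper factor-$3$ constants matching $C$ instead of the $6,14$ appearing in \Cref{thm:finalbound}).

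First I would instantiate the solver rates of \Cref{th:firstconv} in the constant-stepsize regime with $\eta=1$ and zero variances, which gives $\rf(t)=\q^{2t}\boundw$ and $\hrf(k)=\q^{2k}\Bo^2/(1-\q)^2$; the choice $\eta=1$ is admissible precisely because $\tilde\sigma_2=0$ here, so the constraint $\eta\le 1/(1+\tilde\sigma_2)$ reads $\eta\le 1$. Substituting these into \Cref{th:boundbias}\ref{th:boundbias_ii}, squaring, and applying $(a+b+c)^2\le 3(a^2+b^2+c^2)$ yields $\MSEC{\g{\s}}\le 3\big(c_1^2\rf(t_\s)+\LPhi^2\hrf(t_\s)+\rhol^2\rf(t_\s)\hrf(t_\s)\big)$ for $t_\s=k_\s$. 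Folding the cross term through $\q^{4t_\s}\le \q^{2t_\s}$ then collapses all three contributions into a single geometric decay $\MSEC{\g{\s}}\le C\,\q^{2t_\s}$, where $C$ is exactly the constant in the statement.

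Next I would exploit the logarithmic schedule $t_\s=\ceil{\csid\log(\s+1)}$. Since $\csid\ge 1/\log(1/\q)$ forces $\csid\log\q\le -1$, monotonicity of $\q^{\,\cdot}$ gives $\q^{2t_\s}\le \q^{2\csid\log(\s+1)}=(\s+1)^{2\csid\log\q}\le (\s+1)^{-2}$, so the errors are summable: $\sum_{\s=0}^{\stot-1}\MSEC{\g{\s}}\le C\sum_{n\ge 1}n^{-2}=C\pi^2/6$. Plugging this into \eqref{eq:ratesquaredsummablesimple_prev} (which applies since $f$ is $L_f$-smooth by \Cref{lm:lemmanew}, with $\alpha\le 1/L_f$ and $c=1/2$) produces $\tfrac{1}{\stot}\sum_\s\norm{G_\alpha(\lambda_\s)}^2\le \tfrac{1}{\stot\alpha}\big[8\Delta_f+\tfrac{10}{L_f}\cdot\tfrac{C\pi^2}{6}\big]$, and $\tfrac{10}{6}=\tfrac{5}{3}$ recovers \eqref{eq:biratedet} verbatim; in the deterministic setting the expectation is vacuous, matching the statement without $\Exp{\cdot}$.

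Finally, for the sample complexity, the left-hand side drops below $\epsilon$ once $\stot=O(\epsilon^{-1})$. Each UL iteration draws $t_\s$ samples for $\hat\Phi$, $k_\s$ for $\jac_1\hat\Phi^\top v$, and (since $\J_\s=1$) a single sample each for $\jac_2\hat\Phi$ and $\nabla\hat\fo$, i.e.\ $2\ceil{\csid\log(\s+1)}+2$ samples; summing over $\s$ gives $N=O(\stot\log\stot)=\tilde O(\epsilon^{-1})$. I expect the only delicate points to be the constant bookkeeping needed to reproduce $C$ exactly, in particular the $\q^{4t}\le\q^{2t}$ folding of the cross term, and the verification that $\csid\ge 1/\log(1/\q)$ is precisely the threshold turning geometric decay into a summable $1/s^2$ series; everything else is routine substitution into the already established bounds.
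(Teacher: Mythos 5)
Your proposal is correct and follows essentially the same route as the paper's own proof: the deterministic case collapses the MSE to the squared bias of \Cref{th:boundbias}\ref{th:boundbias_ii}, the constant-stepsize rates of \Cref{th:firstconv} give geometric decay, the cross term is folded via $\q^{4t}\le\q^{2t}$, the logarithmic schedule with $\csid\ge 1/\log(1/\q)$ yields the summable $(\s+1)^{-2}$ series totaling $C\pi^2/6$, and the $\Theta(\stot\log\stot)$ sample count gives $\tilde O(\epsilon^{-1})$. All constants, including $C$ and the factor $10/6=5/3$, match the paper's derivation exactly.
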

The Proof is in \Cref{proof:finalconvdet} and is similar to that of \Cref{th:finalconv}.

\begin{remark}[Dependency on the contraction constant\footnote{{\color{red} Corrected from published version.}}]\label{rem:cc}
By setting for the stochastic case $\eta_t = \beta/(\gamma + t)$ with $\beta = 2/(1-q^2)$ and $\gamma = \beta(1 + \tilde{\sigtwo})$ in \Cref{algo1}  and $\alpha = 1/L_f$, $c_3 = \Theta(1)$ in \Cref{algo2}, and for the deterministic case $\alpha = 1/L_f$, $c_3 = \Theta(\kappa)$ in \Cref{algo2}, we obtain a sample complexity of $O(\epsilon^{-2} \kappa^{16})$ and $O(\epsilon^{-1}\log(\epsilon^{-1})\kappa^{5})$ respectively for the stochastic case of \Cref{th:finalconv}\ref{resone} and the deterministic case of \Cref{th:finalconvdet} where $\kappa = (1-q)^{-1}$. For LL problems of type \eqref{minminstochprob} with Lipschitz smooth and strongly convex loss, by appropriately setting $\eta$ in \eqref{eq:gradmap}, $\kappa$ is proportional to the condition number of the LL problem. In comparison, Amigo \Citep{arbel2021amortized} reaches a sample complexity of $O(\epsilon^{-2} \kappa^{9})$ (stochastic) and $O(\epsilon^{-1} \kappa^{4})$ (deterministic).
However, we note that for the deterministic case by setting $t_s=k_s=\Theta(\kappa\log(\kappa s))$ we obtain a sample complexity of $O(\epsilon^{-1}\kappa^{4}\log(\kappa \epsilon^{-1}))$, which is worse than warm-start only for the log factor. %
Finally, we note that \citep{arbel2021amortized} have a stronger assumption, which in our setting can be formulated as
$
  \norm{\jac_2 \Phi(w, \lambda)} \leq \boundjac \quad \forall w \in \R^d, \lambda \in \Lambda.
$
If we make such assumption (which implies Assumption~\ref{ass:add}\ref{ass:boundjac}), use different stepsizes in the LL and LS, i.e.\@ $\eta_t = \beta/(\gamma + t)$ with $\beta = 2/(1-q^2)$, $\gamma = \beta(1+\sigma_2')$ (LL) or $\gamma = \beta(1+\tilde{\sigma}_2)$ (LS) in \Cref{algo1} and set $t_s=k_s =J_s = \Theta(\kappa^3 S)$ (i.e.\@ $c_3 = \Theta(\kappa^3)$) we also obtain a stochastic complexity of $O(\epsilon^{-2} \kappa^{9})$. %

\end{remark}

 \begin{remark}[An advantage of warm-start]\label{rm:advws}
Our sample complexity results as well as those in \cite{ghadimi2018approximation} depend on the constant $B$, defined in Assumption~\ref{ass:add}\ref{ass:boundw} such that $\norm{w_0(\lambda) - w(\lambda)} \leq B \ \forall \lambda \in \Lambda$. Instead, warm-start complexity bounds do not require such assumption and instead depend only on the quantity $\norm{w_0(\lambda_0) - w(\lambda_0)}$, which can be much smaller than $B$; see e.g. \citep{arbel2021amortized}. Although our method matches the sample complexity of warm-start approaches in the parameter $\epsilon$, this aspect may lead to better bounds for warm-start, thus explaining why it is generally advantageous in practice.
\end{remark}

\section{Experiments}\label{se:experiments}

We design the experiments with the following goals. Firstly, we assess the difficulties of applying warm-start and the effect of different upper-level batch sizes in a classification problem involving equilibrium models and in a meta-learning problem. In both settings the lower-level problem can be divided into several smaller sub-problems. Secondly, we compare our method with others achieving near-optimal sample complexity in a data poisoning experiment. All methods have been implemented in PyTorch \citep{pytorch} and the experiments have been executed on a GTX 1080 Ti GPU with $11$GB of dedicated memory. 

\begin{figure*}[tp!]
    \newcommand{\deqwidth}{0.492}
    \centering
    \begin{subfigure}
        \centering
        \includegraphics[width=\deqwidth\textwidth]{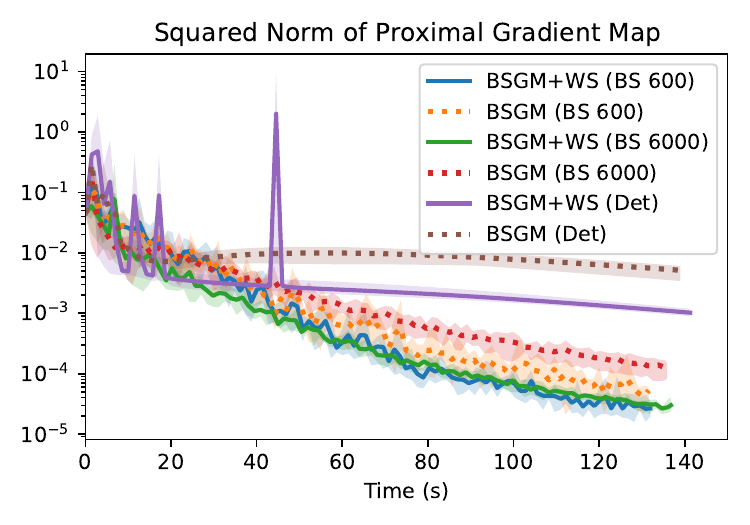}
    \end{subfigure}
    \begin{subfigure}
        \centering
        \includegraphics[width=\deqwidth\textwidth]{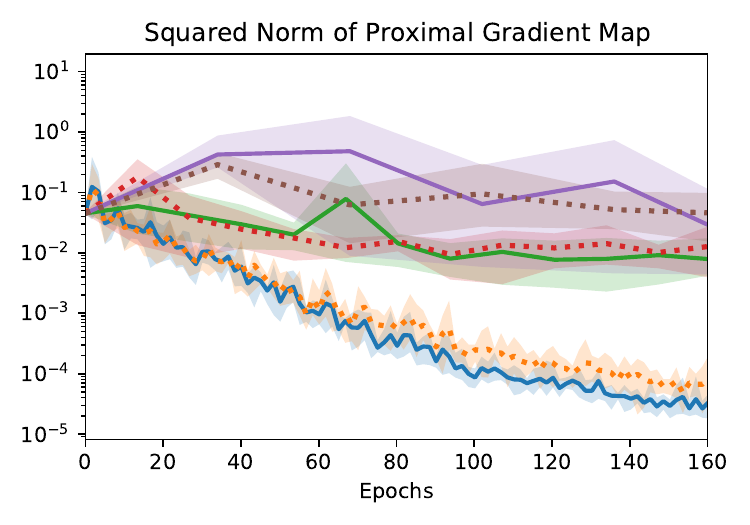}
    \end{subfigure}
    \begin{subfigure}
        \centering
        \includegraphics[width=\deqwidth\textwidth]{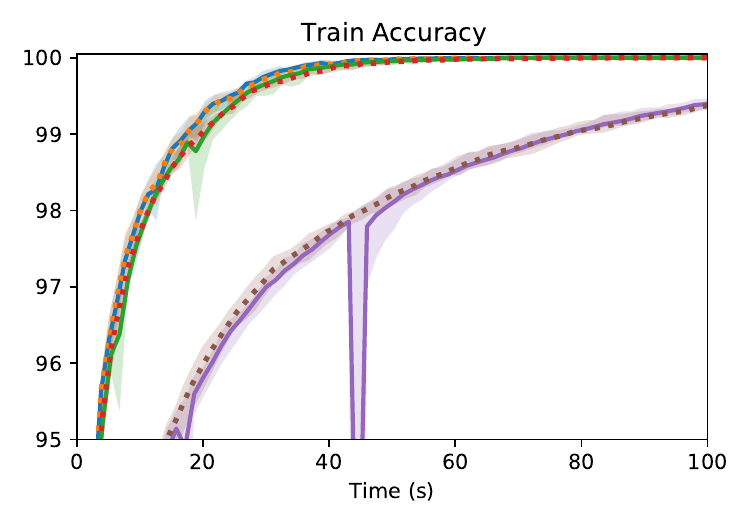}
    \end{subfigure}
        \begin{subfigure}
        \centering
        \includegraphics[width=\deqwidth\textwidth]{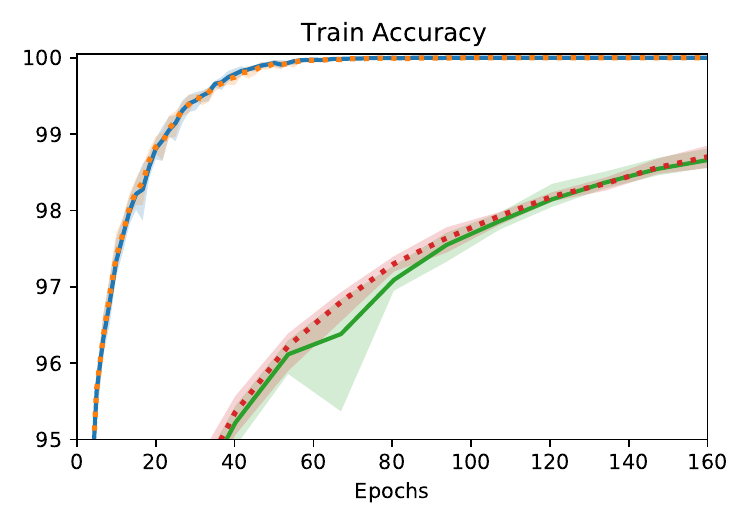}
    \end{subfigure}
    \begin{subfigure}
        \centering
        \includegraphics[width=\deqwidth\textwidth]{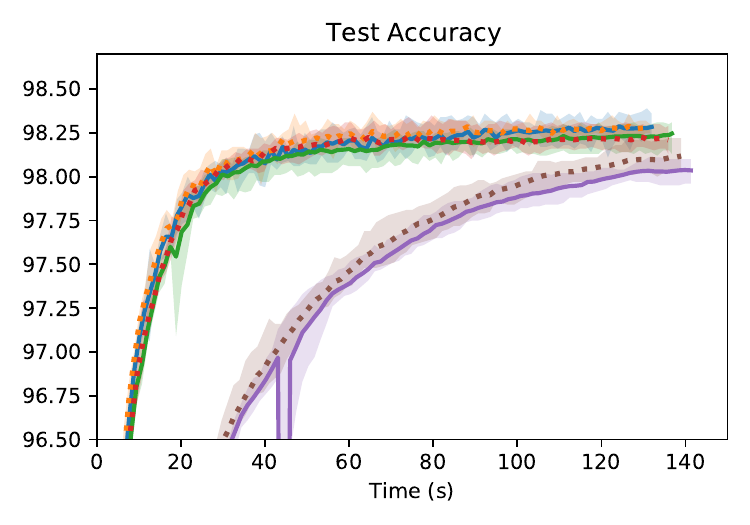}
    \end{subfigure}
    \begin{subfigure}
        \centering
        \includegraphics[width=\deqwidth\textwidth]{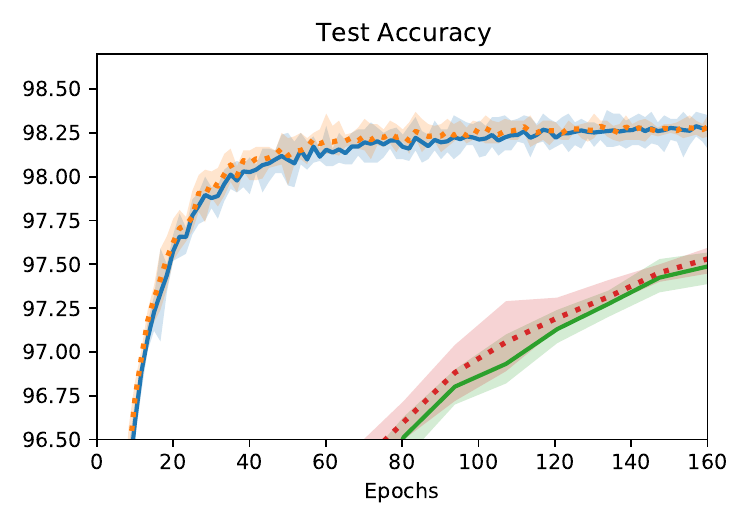}
    \end{subfigure}
    \caption{\textit{Equilibrium Models on MNIST.} Results show mean (solid, dashed and dotted lines) and max-min (shaded region) over 5 seeds varying the randomness in the mini-batches and the initialization. BSGM is the method in \Cref{algo2} while BSGM+WS is the variant with warm-start on the LL. BS indicates the mini-batch size used while methods with Det in the name use the whole training set of $60$K examples. 
    }\label{fig:deq}
\end{figure*}

\subsection{Equilibrium Models}

We consider a variation of the equilibrium models experiment presented in \cite[Section~3.2]{grazzi2020iteration}. In particular, we consider a multi-class classification problem with the following bilevel formulation:
\begin{equation}
\begin{gathered}
    \min_{\lambda \in \Lambda} \sum_{i=1}^{n} \CE( \theta w(\lambda)^i + b, y_i)\\
    \text{\ subject~to ~} w(\lambda)^i = \tanh(A w(\lambda)^i + B X_i + a) \quad \forall i \in \{1,\dots, n\}
\end{gathered}
\end{equation}
where $\CE$ is the cross-entropy loss, $(X, y) \in \R^{n\times p} \times \{1, \dots, c\}^n$ is the training set, $\lambda = (\theta, b, A, B, a)$, $\Lambda = \{\theta \in \R^{c\times d} \,:\, \norm{\theta}_{\infty} \leq 1 \} \times \R^c \times \{A \in \R^{d\times d} \,:\, \norm{A} \leq 0.5 \} \times \R^{d\times p} \times \R^d$ and $w(\lambda)^i \in \R^{d}$ is the fixed point representation for $i$-th training example. The constraint on $A$, guarantees that for all $i$, the map $w \mapsto \tanh(A w + Bx_i +c)$ is a contraction with Lipschitz constant not greater than $0.5$. We perform this experiments using the whole MNIST training set, hence $n=6 \times 10^4, p = 784, c = 10$, and set $d=200$.

We compare variants of BSGM (\Cref{algo2}) with different batch sizes ($J_s$ in \Cref{algo2}), which in this case indicates the number of training examples used to estimate the gradients of the UL objective. Moreover, we evaluate an extension of BSGM which uses warm-start only on the LL problem (similar to StochBiO \citep{ji2021bilevel}).  Note that when using warm-start, all the fixed point representations computed by the algorithm are stored in memory to be used in the future. When the ratio between the number of examples $n$ and the batch size is large, this can greatly increase the memory cost of the algorithm compared to the procedure without warm-start. 
For this particular problem, this cost is manageable since it amounts to storing a total of $n d = 12 \times 10^6$ floats, which correspond to $48$ MB of memory, but for higher values of $d$ and $n$ it quickly becomes prohibitive, as we show in the meta-learning experiment.

Let $\lambda_0 = (\theta_0, b_0, A_0, B_0, a_0)$ be the hyperparameters at initialization, we set $b_0 = 0$, and we sample each coordinate of $\theta_0, A_0, B_0,$ and $a_0$ from a Gaussian distribution with zero mean and standard deviation $0.01$. In \Cref{algo2} we also set $w_0 (\lambda) = 0$, $t_s=k_s = 2$, and $\alpha = 0.5$. Since computing the map $w \mapsto \tanh(A w + Bx_i +a)$ is relatively cheap, we use deterministic solvers with step-size $1$ for the LL and LS of each training example. To evaluate the UL parameters found by the algorithms, we compute an accurate approximation of the LL solution and the hypergradient on all training examples by running the LL and LS solver for $20$ steps. The proximal gradient map is computed according to \eqref{eq:proxgradmap} with $\alpha=1$.

Results are shown in \Cref{fig:deq}, where we compare three key performance measures of the different methods versus time and  number of epochs. When comparing methods using the same batch size we can see that using warm-start improves the performance in terms of the norm of the proximal gradient map, i.e.~the quantity that we can control theoretically. However, this effect decreases with smaller batch sizes since more UL iterations can pass until the same example is sampled twice. Furthermore,  train and test accuracy are similar for methods with the same batch size, regardless of the use of warm-start. Finally, we note that decreasing the mini-batch consistently improves the performance in terms of number of epochs while, thanks to the parallelism of the GPU, the performance with batch size equal to $600$ and $6000$ are similar.

\subsection{Meta-Learning}\label{se:metalearningexp}
We perform a meta-learning experiment on Mini-Imagenet \citep{vinyals2016matching}, a popular few-shot classification benchmark. Mini-Imagenet contains 100 classes from Imagenet which are split into 64, 16, 20 for the meta-train, meta-validation and meta-test sets respectively. A task is constructed by selecting some images from  $c$ randomly selected classes.
Each image is downsampled to $84 \times 84$ pixels.
Similarly to \cite{franceschi2018bilevel}, we evaluate an hyper-representation model  where the UL parameters are the parameters of the representation layers of a convolutional neural network (CNN), shared across tasks, while the task-specific LL parameters are the parameters of the last linear layer. The CNN is composed by stacking 4 blocks, each made by a $3 \times 3$ convolutions with $32$ output channels followed by a batch normalization layer. 

We evaluate the performance of \Cref{algo2} where the network parameters $\lambda_0$ are initialized using the default random initialization in PyTorch, $w_0(\lambda) = 0$, $\alpha = 0.2$,  $\eta_j = 0.05$,  $t_s= 10$, and different batch sizes $J_s = \{8, 16, 32\}$. The batch size in this case corresponds to the number of tasks at each UL iteration. Using warm start in this setting could require to save the last linear layer for all tasks, hence $n \times d \times c$ floats, where $n$ is the number of tasks and $d \times c$ are the number of weights in the last linear layer. A meta-training task is constructed by selecting $c=5$ classes out of $64$, hence the number of tasks is $n=7,\!624,\!512$. Moreover, we set $d=800$. Thus, storing the last layer for all tasks would require $122$ GB of storage, which largely exceeds our GPU memory.
Furthermore, the ratio between $n$ and batch size is very high and this is likely to make the effect of using warm-start negligible.

Results are shown in \Cref{fig:meta}, where we see that methods with smaller batch-sizes converge faster despite requiring a higher number of UL iterations. Furthermore, since during meta-training we see only $50,000$ tasks, we also implemented the method using warm-start by storing the approximate solutions to all previously sampled tasks to be used as initialization when they are sampled again. We run the method with mini-batch size equal to 8 and for 5 seeds and observed that all metrics essentially overlap the ones without warm-start, while the memory cost increases by $0.8$ GB. These experiments suggest that warm-start may be ineffective in meta-learning problems, as mentioned in the introduction. Indeed, in this setting we observed that each task is sampled at most $3$ times in a total of $6,250$ iterations.

\begin{figure*}[t]
    \newcommand{\metawidth}{0.492}
    \centering
    \begin{subfigure}
        \centering
        \includegraphics[width=\metawidth\textwidth]{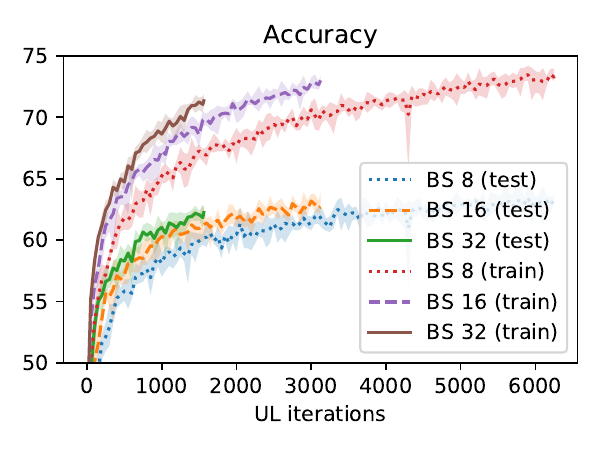}
    \end{subfigure}
    \begin{subfigure}
        \centering
        \includegraphics[width=\metawidth\textwidth]{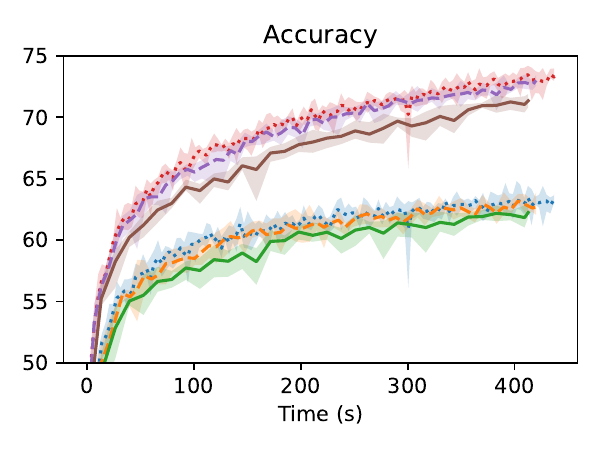}
    \end{subfigure}
    \vspace{-.8cm}
    \caption{\textit{5-way 5-shot classification on Mini-Imagenet.}
    The plot show mean (solid lines) and $\max-\min$ (shaded region) over 5 runs. Values are the average accuracy over $1000$  meta-train/meta-test tasks computed after $10$ steps of the LL solver. At the end of training all methods have seen a total of $50$K tasks.
    }\label{fig:meta}
\end{figure*}

\subsection{Data Poisoning}

We consider the \textit{data poisoning} scenario where a malicious agent or \textit{attacker} aims at decreasing the performance of a machine learning model by corrupting its training data set. In particular, the attacker adds noise to some training examples. However, this noise must be small in magnitude to avoid for the attack to be uncovered.

Specifically, we consider an image classification problem on the MNIST data set where $(X, y) \in \R^{n \times p} \times \{1,\dots,c\}^n$,  and $(X', y' ) \in \R^{n' \times p} \times \{1,\dots,c\}^n$ are the training and validation sets, and $p = 784$, $c = 10$, $n = 45,\! 000$ and $n' = 15,\! 000$ are the number of features, classes, training examples and validation examples respectively. Furthermore, we randomly select $\Ic \subseteq \{1,\dots, n\}$ to be the indices of the corrupted training examples such that $|\Ic| = 9,\!000$. The attacker finds the noise $\lambda$ by solving the following bilevel optimization problem.
\begin{equation}
\begin{gathered}
    \max_{\lambda \in \Lambda} \frac{1}{n'}\sum_{i=i}^{n'}\CE( w(\lambda)^\top X'_i, y'_i)\\
    \text{\ subject~to ~}w(\lambda) = \argmin_{w \in \R^{p\times c}} \frac{1}{n}\sum_{i=1}^{n}\CE(w^\top (X_i + \lambda_i) , y_i) +\frac{0.1}{p} \norm{w}^2,
\end{gathered}
\end{equation}
where $\CE$ is the cross-entropy loss, $\Lambda = \{\lambda \in \R^{n \times p} \,|\, \lambda_i \in  \ball_2 (0, 5) \ \forall i \in \Ic, \lambda_i = 0 \ \forall i \in \{1,\dots, n\}/\Ic \}$ and $\ball_2 (0, 5)$ is the $p$-dimensional L$2$-ball centered in $0$ with radius $5$. 
Note that the LL problem is both strongly convex and Lipschitz smooth.

\textit{Baselines.} We compare our method with  StochBiO \citep{ji2021bilevel}, Amigo \citep{arbel2021amortized}, ALSET \citep{chen2021single}, which achieve (near) optimal sample complexity. We also consider ALSET$^\dag$, i.e. a variant of ALSET  where the LS problem is solved using warm-start and only one iteration. All baselines have been implemented as extensions to \Cref{algo2} specialized to LL problems of type~\eqref{minminstochprob}, which differ only in the use of warm-start and in the number of iterations and batch-sizes used.
Except for  ALSET$^\dag$-DET, which is the deterministic version of ALSET$^\dag$  and computes the LL objective exactly, all other methods  use mini-batches of size $90$ to estimate the LL objective and its derivatives. We found this value to be sufficiently large for Amigo and StochBiO to perform well. The UL objective is instead always computed using all $15$K validation examples.
To fairly evaluate  the different bilevel optimization methods, the linear model used for the final evaluation is trained by $1000$ steps of gradient descent on the LL objective
\begin{equation*}
   \frac{1}{n}\sum_{i=1}^{n}\CE(w^\top(X_i + \lambda^*), y_i) + \frac{0.1}{p} \norm{w}^2,
\end{equation*}
where $\lambda^*$ is the output of the bilevel optimization method.

\textit{Random Search.} Bilevel optimization methods have several configuration parameters which greatly affect the performance, e.g.~the number of iterations for the LL and LS solvers, step sizes for the UL, LL and LS. Theoretical values for these parameters are often too conservative, hence they are usually set via manual search which is hard to reproduce and may be suboptimal. Thus, for a better comparison, we set a total budget of $2$M single-sample gradients and hessian-vector products, so that each algorithm uses the same number of samples\footnote{We do not account for the difference in computational cost between gradients and hessian vector-products. The latter are usually more costly in practice.}, and perform a random search with 200 random configuration parameters to select the configurations achieving the lowest accuracy on the validation set. Values and ranges of the random search are shown in \Cref{tab:randomsearch}.
Note that to reduce the number of configuration parameters we keep them unchanged across UL and LL/LS iterations.
For our method, we observed that using fixed instead of decreasing stepsizes for the LL/LS does not affect the top performances after the random search.
Furthermore, we set $k=t$ and $\eta_{\text{LL}} = \eta_{\text{LS}}$ only for our method and all the others which use warm-start both for the LL and LS problems, which we observed that improves the performance\footnote{Indeed, we observed that using $k\neq t$ and $\eta_{\text{LL}} \neq \eta_{\text{LS}}$ for BSGM and Amigo does not improve and usually decreases the performance of the best methods, while setting $k= t$ and $\eta_{\text{LL}} = \eta_{\text{LS}}$ decreases the performance of StochBiO.}.

\textit{Results.} In \Cref{tab:poisoning} we show the results. Our method (BSGM) outperforms all the single-loop bilevel optimization methods (ALSET$^\dag$ and ALSET). However, methods using warm-start only in the LL (StochBiO) and both in LL and LS (Amigo) outperform BSGM, albeit not by a large margin.  To aid reproducibility, we report in \Cref{tab:besthparams} the best  configuration parameters of each method.

\begin{table}[H]
\centering
\begin{tabular}{@{}llllllll@{}}
\toprule
\textbf{Method}        & WS  & $t$         & $k$         & $J$             & $\alpha$         & $\eta_{\text{LL}}$          & $\eta_{\text{LS}}$                \\ \midrule
StochBiO               & Y,N & $[10:10^4]$ & $[10:10^4]$ & $k$             & $[10^3:10^9]$  & $[10^{-4}:10]$ & $[10^{-4}:10]$      \\
Amigo                  & Y,Y & $[10:10^4]$ & $t$         & $t$             & $[10^3:10^9]$  & $[10^{-4}:10]$ & $\eta_{\text{LL}}$               \\
\textbf{BSGM (ours)}   & N,N & $[10:10^4]$ & $t$         & $t$             & $[10^3:10^9]$  & $[10^{-4}:10]$ & $\eta_{\text{LL}}$              \\
ALSET$^\dag$-DET            & Y,Y & $1$         & $1$         & $1$             & $[10^3:10^9]$  & $[10^{-4}:10]$ & $\eta_{\text{LL}}$              \\
ALSET$^\dag$                & Y,Y & 1           & 1           & 1               & $[10^3:10^9]$  & $[10^{-4}:10]$ & $\eta_{\text{LL}}$              \\
ALSET                  & Y,N & 1           & $[10:10^4]$ & 1               & $[10^3:10^9]$  & $[10^{-4}:10]$ & $[10^{-4}:10]$      \\ \bottomrule
\end{tabular}
    \caption{\textit{Configurations parameters for the random search}. The WS column indicates whether warm-start is used (Y) or not (N) for the LL (first entry) and LS (second entry). $t$, $k$ and $J$ are respectively the number of iteration for the LL and LS and the batch size, while $\alpha$, $\eta_{\text{LL}}$, and $\eta_{\text{LS}}$ are the step sizes for the UL, LL and LS respectively.  Configuration parameters are sampled according to the log-uniform distribution over the specified ranges. For all methods we set $\lambda_0 = 0$.}
    \label{tab:randomsearch}
\end{table}

\begin{table}[H]
    \centering
\begin{tabular}{lrrr}
\toprule
     \textbf{Method} &  \textbf{Test (Val) Best} & \textbf{Test (Top 10)} &  \textbf{Val (Top 10)} \\
\midrule
   StochBiO                 &     76.78 (73.57) & 79.97 $\pm$ 1.92 &         77.33 $\pm$ 2.28 \\
      Amigo                 &     78.01 (75.09) & 79.29 $\pm$ 0.94 &         76.27 $\pm$ 0.93 \\
       \textbf{BSGM (ours)} &     78.05 (75.05) & 80.90 $\pm$ 1.33 &         78.16 $\pm$ 1.48 \\
ALSET$^\dag$-DET                 &     83.03 (80.30) & 86.13 $\pm$ 1.38 &         84.10 $\pm$ 1.73 \\
       ALSET$^\dag$              &     90.75 (89.99) & 90.66 $\pm$ 0.13 &         90.19 $\pm$ 0.15 \\
      ALSET                 &     90.89 (90.49) & 90.99 $\pm$ 0.11 &         90.65 $\pm$ 0.10 \\
\bottomrule
    \end{tabular}
    \caption{\textit{Data-poisoning Accuracy (Lower is better)}. We report values for best and top 10 best performing parameter configurations selected via random search. For the top 10 results we report mean $\pm$ standard deviation. ALSET$^\dag$-DET is the best performing deterministic method, all the others are stochastic.}
    \label{tab:poisoning}
\end{table}

\begin{table}[H]
    \centering

\begin{tabular}{llllllll}
\toprule
\textbf{Method} & Test (Val) Acc & $t$       & $k$       & $J$       & $\alpha$         & $\eta_{\text{LL}}$          & $\eta_{\text{LS}}$  \\ \midrule
\midrule
   StochBiO &  76.78 (73.57) & 418 & 2477 & $k$ & $1.0\times10^{6}$ & $5.4\times10^{-3}$ & $1.3\times10^{-2}$ \\
      Amigo &  78.01 (75.09) & 155 &    $t$ & $t$ & $1.0\times10^{7}$ & $1.1\times10^{-2}$ &     LL sz \\
\textbf{BSGM (ours)} &  78.05 (75.05) & 287 &    $t$ & $t$ & $4.0\times10^{8}$ & $9.0\times10^{-2}$ &     LL sz \\
   ALSET$^\dag$-DET &  83.03 (80.30) &   1 &    1 & 1 & $1.8\times10^{5}$ & $5.6\times10^{-1}$ &     LL sz \\
       ALSET$^\dag$ &  90.75 (89.99) &   1 &    1 & 1 & $1.6\times10^{6}$ & $5.3\times10^{-2}$ & $3.9\times10^{-1}$ \\
      ALSET &  90.89 (90.49) &   1 &   85 & 1 & $5.5\times10^{8}$ & $2.0\times10^{-2}$ & $2.7\times10^{-1}$ \\
\bottomrule
\end{tabular}
    \caption{\textit{Best configuration parameters}. Configuration parameters with lowest validation accuracy  among 200 random configurations for each method.}
    \label{tab:besthparams}
\end{table}

\section{Conclusions}

In this paper, we studied bilevel optimization problems where the upper-level objective is smooth and the lower-level solution is the fixed point of a smooth contraction mapping.  
In particular, we presented BSGM (\Cref{algo2}), a bilevel optimization procedure based on inexact gradient descent, where the inexact gradient is computed via \SID{} (\Cref{algo1}). \SID{} uses stochastic fixed-point iterations to solve both the lower-level problem and the linear system and estimates $\nabla\fo$ and $\jac_2\Phi$ using large mini-batches. 
We proved that, even without the use of warm-start on the lower-level problem and the linear system, BSGM achieves optimal and near-optimal sample complexity in the stochastic and deterministic bilevel setting respectively. We stress that in recent literature, warm-start was thought  to be crucial to achieve the optimal sample complexity.
We also showed that, when compared to methods using warm-start, our approach 
yields a simplified and modular analysis which does not deal with the interactions between upper-level and lower-level iterates. Moreover, we showed empirically the inconvenience of the warm-start strategy on equilibrium models and meta-learning. Finally, we compared our method with several bilevel methods relying on warm-start on a data-poisoning experiment.

\acks{This work was supported in part by the EU Projects ELISE and ELSA, as well the PNNR Project FAIR.
We thank all anonymous reviewers for their useful insights and suggestions.}

\raggedbottom

\appendix

\section{Main Proofs}

\begingroup
\renewcommand{\wstoc}[1][t]{w_{#1}(\lambda)}
\renewcommand{\vstocj}[2][k]{ v_{{#1}}({#2},\lambda)}

\subsection{Proof of Lemma~\ref{lm:lemmanew}}\label{pr:lemmanew}

To prove \ref{wprimelm}, recall that $w'(\lambda) =  \big(I -\jac_1 \Phi(w(\lambda),\lambda)\big)^{-1}\jac_2 \Phi(w(\lambda), \lambda)$, hence
\begin{align*}
    \norm{w'(\lambda)} &= \norm{\big(I -\jac_1 \Phi(w(\lambda),\lambda)\big)^{-1}\jac_2 \Phi(w(\lambda), \lambda)} \\
    &\leq \norm{\big(I -\jac_1 \Phi(w(\lambda),\lambda)\big)^{-1}}\norm{\jac_2 \Phi(w(\lambda), \lambda)}
     \\&\leq \sum_{i=0}^{\infty} \norm{\jac_1 \Phi(w(\lambda),\lambda)}^i\norm{\jac_2 \Phi(w(\lambda), \lambda)} 
    \leq \sum_{i=0}^{\infty} \q^i \boundjac =  \frac{\boundjac}{1-\q},
\end{align*}
where in the second inequality we used the properties of Neumann series and in the last inequality we used Assumption~\ref{ass:hypergrad}\ref{ass:contraction} and \ref{ass:add}\ref{ass:boundjac}.

Next we prove \ref{wprimelip}. Let $A(\lambda) = I -\jac_1 \Phi(w(\lambda),\lambda)$ For every $\lambda \in \Lambda$
\begin{align*}
    \norm{A(\lambda_1) - A(\lambda_2)} &= \norm{\jac_1 \Phi(w(\lambda_1),\lambda_1)- \jac_1 \Phi(w(\lambda_2),\lambda_2)} \\
    &\leq \norm{\jac_1 \Phi(w(\lambda_2),\lambda_1)- \jac_1 \Phi(w(\lambda_2),\lambda_2)} \\
    &\quad + \norm{\jac_1 \Phi(w(\lambda_1),\lambda_1)- \jac_1 \Phi(w(\lambda_2),\lambda_1)} \\
    &\leq \lipw\norm{\lambda_1 - \lambda_2} + \rhow\norm{w(\lambda_1) - w(\lambda_2)} \\
    &\leq \Big( \lipw + \frac{\rhow\boundjac}{1-\q} \Big)\norm{\lambda_1 - \lambda_2},
\end{align*}
where we used Assumption~\ref{ass:hypergrad}\ref{ass:rho} and \ref{ass:add}\ref{ass:lipwl} in the second inequality and \ref{wprimelm} in the last inequality.
Consequently, for every $\lambda_1,\lambda_2 \in \Lambda$
\begin{align*}
    \norm{w'(\lambda_1) - w'(\lambda_2)} &\leq \norm{A(\lambda_1)^{-1}}\norm{ \jac_2 \Phi((w(\lambda_1), \lambda_1) - \jac_2 \Phi((w(\lambda_2), \lambda_2)} \\
    &\quad + \norm{\jac_2 \Phi((w(\lambda_1), \lambda_1)}\norm{A(\lambda_1)^{-1}}\norm{A(\lambda_1) - A(\lambda_2)}\norm{A(\lambda_2)^{-1}} \\
    &\leq \norm{A(\lambda_1)^{-1}}\norm{ \jac_2 \Phi((w(\lambda_1), \lambda_2) - \jac_2 \Phi((w(\lambda_2), \lambda_2)} \\
    &\quad + \norm{A(\lambda_1)^{-1}}\norm{ \jac_2 \Phi((w(\lambda_1), \lambda_1) - \jac_2 \Phi((w(\lambda_1), \lambda_2)} \\
    &\quad + \norm{\jac_2 \Phi((w(\lambda_1), \lambda_1)}\norm{A(\lambda_1)^{-1}}\norm{A(\lambda_1) - A(\lambda_2)}\norm{A(\lambda_2)^{-1}} \\ 
    &\leq \Bigg[\frac{\rhol\boundjac/(1-\q) +\lipl}{1-\q} + \frac{\boundjac}{(1-\q)^2} \Big( \lipw + \frac{\rhow\boundjac}{1-\q} \Big)\Bigg]\norm{\lambda_1 - \lambda_2}.
\end{align*}

To prove \ref{lipgradient} instead, let 
\begin{equation}
    \bar\nabla f(w,\lambda) := \nabla_2 \fo (w, \lambda) + \jac_2 \Phi (w,\lambda) \big[I - \jac_1 \Phi(w,\lambda)^\top\big]^{-1}\nabla_1 \fo (w,\lambda)
\end{equation}
Note that $\nabla f(\lambda) = \bar\nabla f(w(\lambda),\lambda)$. We have that for every $\lambda_1,\lambda_2 \in \Lambda$
\begin{equation}\label{eq:boundgrad}
    \norm{\nabla f(\lambda_1) - \nabla f(\lambda_2)} \leq \norm{\nabla f(\lambda_1) - \bar\nabla f(w(\lambda_1), \lambda_2)} + \norm{\nabla f(\lambda_2) - \bar\nabla f(w(\lambda_1), \lambda_2)} 
\end{equation}
We bound the two terms of the RHS of \eqref{eq:boundgrad} as follows.
\begin{align*}
    \norm{\nabla f(\lambda_1) - \bar\nabla f(w(\lambda_1), \lambda_2)} &\leq \norm{\nabla_2 \fo (w(\lambda_1), \lambda_1) - \nabla_2 \fo (w(\lambda_1), \lambda_2))} + \\
    &\quad + \norm{w'(\lambda_1)}\norm{\nabla_1 \fo (w(\lambda_1), \lambda_1) - \nabla_1 \fo (w(\lambda_1), \lambda_2))} \\
    &\leq \big(\Lolb + \frac{\boundjac\Lola}{1-\q}\big)\norm{\lambda_1 -\lambda_2},
\end{align*}
\begin{align*}
\norm{\nabla f(\lambda_2) - \bar\nabla f(w(\lambda_1), \lambda_2)} &\leq  \norm{\nabla_2 \fo (w(\lambda_2), \lambda_2) - \nabla_2 \fo (w(\lambda_1), \lambda_2))}    \\
&\quad + \norm{w'(\lambda_2)}\norm{\nabla_1 \fo (w(\lambda_2), \lambda_2) - \nabla_1 \fo (w(\lambda_1), \lambda_2))} \\
&\quad + \norm{\nabla_1 \fo(w(\lambda_1), \lambda_2)}\norm{w'(\lambda_2) - w'(\lambda_1)} \\ &\leq \Big(\Bo\lipwprime + \frac{\Lol\boundjac}{1-\q} + \frac{\Low\boundjac^2}{(1-\q)^2}\Big)\norm{\lambda_1 - \lambda_2}.
\end{align*}
Summing the two inequalities above we obtain the final result.

\subsection{Proof of \Cref{th:boundbias}}\label{pf:bias}

\begin{proof}
\ref{th:boundbias_i}:
Using the definition of $\hat \nabla f(\lambda)$ and the fact that $\zeta'_j$ and $\vstocj{\wstoc}$
are independent random variables, we get
\begin{equation*}
    \Exp{\hat \grad f(\lambda) \given \wstoc} = \grad_2 \fo(\wstoc, \lambda) + \jac_2 \Phi(\wstoc, \lambda)^\top \Exp{\vstocj{\wstoc}\given \wstoc}.
\end{equation*}
Consequently, recalling the hypergradient equation, we have,
\begin{align}
\nonumber &\norm[\big]{ \Exp{\hat \grad f(\lambda) \given \wstoc}   - \grad f(\lambda)} 
\nonumber \\
\nonumber&\qquad\qquad \leq \norm{\grad_2 \fo(w(\lambda), \lambda) - \grad_2 \fo(\wstoc, \lambda)} 
\\\nonumber&\qquad\qquad\qquad + \big\lVert\jac_2 \Phi(w(\lambda), \lambda)^\top \vopt{w(\lambda)} 
- \jac_2 \Phi(\wstoc, \lambda)^\top \Exp{\vstocj{\wstoc}\given \wstoc} \big\rVert\\
\nonumber&\qquad\qquad \leq \norm{\grad_2 \fo(w(\lambda), \lambda) - \grad_2 \fo(\wstoc, \lambda)} \\
\nonumber&\qquad\qquad\qquad+ 
\norm{\jac_2 \Phi(w(\lambda), \lambda)}\norm{\vopt{w(\lambda)}
     - \Exp{\vstocj{\wstoc}\given \wstoc}} \\
\label{eq:20200601d}    &\qquad\qquad\qquad+ \norm{\jac_2 \Phi(w(\lambda), \lambda) - \jac_2 \Phi(\wstoc, \lambda)} \norm{\Exp{\vstocj{\wstoc}\given \wstoc}}.
\end{align}
Now, concerning the term $\norm{\vopt{w(\lambda)} - \Exp{\vstocj{\wstoc} \given \wstoc}}$ in the above inequality,
we have
\begin{align}
\nonumber\lVert \vopt{w(\lambda)} - &\Exp{\vstocj{\wstoc} \given \wstoc}\rVert \\[1ex]
\label{eq:20200601b}&\leq \norm{\vopt{w(\lambda)} - \vopt{\wstoc}} + \norm{\vopt{\wstoc} - \Exp{\vstocj{\wstoc}\given \wstoc}}.
\end{align}
Since $\Exp{\voptj{\wstoc} \given \wstoc} = \vopt{\wstoc}$ we have
\begin{align*}
    &\norm{\vopt{\wstoc} - \Exp{ \vstocj{\wstoc} \given \wstoc } } 
    = \norm{\Exp{\voptj{\wstoc} - \vstocj{\wstoc} \given \wstoc}}
\end{align*}
Moreover, using  Jensen inequality and \Cref{ass:innerbackrates} we obtain
\begin{align}
\nonumber \norm{\Exp{\voptj{\wstoc} - \vstocj{\wstoc} \given \wstoc}} &=
\sqrt{ \norm{\Exp{\voptj{\wstoc}  - \vstocj{\wstoc}\given \wstoc}}^2 }\\[1ex]
\nonumber    & \leq \sqrt{\Exp{\norm{\voptj{\wstoc}  - \vstocj{\wstoc}}^2 \given \wstoc}}\\[1ex]
\label{eq:20200601c}    & \leq \sqrt{\hrf(k)}.
\end{align}
Therefore, using \Cref{lm:lipv},  \eqref{eq:20200601b} yields
\begin{equation}
\label{eq:20200601e}
\norm{\vopt{w(\lambda)} - \Exp{\vstocj{\wstoc} \given \wstoc}} \leq 
\left(\frac{\rhow\Bo}{(1-\q)^2} + \frac{\Low}{1-\q} \right)\norm{w(\lambda) - \wstoc} + \sqrt{\hrf(k)}.
\end{equation}
In addition, it follows from \eqref{eq:20200601b}-\eqref{eq:20200601c} and \cref{lm:normv} that
\begin{align}
\label{eq:20200601f}
\nonumber     \norm{\Exp{\vstocj{\wstoc}\given \wstoc}} &\leq \norm{\vopt{\wstoc}} 
     + \norm{\vopt{\wstoc}- \Exp{\vstocj{\wstoc}\given \wstoc}} \\
     &\leq \frac{\Bo}{1-\q} + \sqrt{\hrf(k)}.
\end{align}
Finally, combining \eqref{eq:20200601d}, \eqref{eq:20200601e}, and \eqref{eq:20200601f}, and
using \Cref{ass:hypergrad}, \ref{th:boundbias_i} follows.
Then, since
\begin{equation*}
    \norm{\EE[\hat\grad f(\lambda)] - \grad f(\lambda)} = \norm[\big]{ \Exp[\big]{ \Exp{\hat \grad f(\lambda)\given \wstoc}   - \grad f(\lambda)}}
   \leq  \Exp[\big]{ \norm[\big]{ \Exp{\hat \grad f(\lambda)\given \wstoc}   - \grad f(\lambda)}},
\end{equation*}
\ref{th:boundbias_ii} follows by taking the expectation in \ref{th:boundbias_i}, using \Cref{ass:innerbackrates} and that 
$\Exp{\hat \Delta_w} = \sqrt{(\Exp{\hat \Delta_w})^2} 
\leq \sqrt{\Exp{\hat \Delta^2_w}} \leq \sqrt{\rf(t)}$.

\end{proof}

\subsection{Proof of \Cref{th:varboundone}}\label{pf:varboundone}

\begin{proof}
Let $\Exptilde{\cdot} := \Exp{\,\cdot \given \wstoc}$,  $\Vartilde{\cdot} : = \Var{\,\cdot \given \wstoc}$, $b_1 := \jac_2 \Phi(\wstoc, \lambda)^\top\vstocj{\wstoc}$ and $b_2: = \Vartilde{\nabla_2  \bar \fo_{\J}(\wstoc{}, \lambda)}$. 
Then,
\begin{align*}
\Vartilde{\hat \grad f(\lambda)} &= \Exptilde[\big]{\norm{\hat \grad f(\lambda) - \Exptilde{\hat \grad f(\lambda)}}^2} \\
&\leq 2\Exptilde[\big]{\norm{\jac_2 \Phi(\wstoc, \lambda)^\top\Exptilde{\vstocj{\wstoc}} \mp b_1 - \jac \bar\Phi_{\J}(\lambda)^\top \vstocj{\wstoc}}^2} + 2b_2 \\
&\leq 2\norm{\jac_2 \Phi(\wstoc, \lambda)}^2 \Exptilde[\big]{\norm{\vstocj{\wstoc} - \Exptilde{\vstocj{\wstoc}}}^2} \\
&\quad+ 2\Exptilde[\big]{\norm{\vstocj{\wstoc}}^2}\Exptilde[\big]{\norm{\jac \bar\Phi_{\J}(\lambda)- \jac_2 \Phi(\wstoc, \lambda)}^2} + 2b_2. \\
&= 2\underbrace{\norm{\jac_2 \Phi(\wstoc, \lambda)}^2}_{a_1} \underbrace{\Vartilde{\vstocj{\wstoc}}}_{a_2} 
+ 2\underbrace{\Exptilde[\big]{\norm{\vstocj{\wstoc}}^2}}_{a_3} \Vartilde{ \jac_2 \bar\Phi_{\J}(\lambda)}  +2b_2,
\end{align*}
where for the last inequality we used that $\zeta'_i \indep \vstocj{\wstoc}\given \wstoc$ and, in virtue of 
Lemma~\ref{lemA4},  that
\begin{align*}
\Exptilde[\big]{\Delta_v^\top \jac_2 \Phi(\wstoc, \lambda)
    (\jac_2 \bar \Phi_J(\wstoc, \lambda, \zeta)  - \jac_2 \Phi(\wstoc, \lambda))^\top \vstocj{\wstoc}} = 0,
\end{align*}
where $\Delta_v := \vstocj{\wstoc} - \Exptilde{\vstocj{\wstoc}}$.
In the following, we will bound each term of the inequality in order.
\begin{align*}
    a_1 &= \norm{\jac_2 \Phi(\wstoc, \lambda) \mp \jac_2 \Phi(w(\lambda), \lambda)}^2 \\
    &\leq 2\norm{\jac_2 \Phi(w(\lambda), \lambda)}^2 + 2\norm{\jac_2 \Phi(w(\lambda), \lambda) - \jac_2 \Phi(\wstoc, \lambda)}^2 \\
    &\leq 2\LPhi^2 + 2\rhol^2\norm{w(\lambda) - \wstoc}^2.
\end{align*}
Then, applying \Cref{ass:innerbackrates}, and Lemma~\ref{lem:varprop}\ref{lem:varprop_ii}
\begin{align*}
a_2 = \Vartilde{\vstocj{\wstoc}} &\leq \Exptilde{\norm{\vstocj{\wstoc} \mp \voptj{\wstoc} - \vopt{\wstoc}}^2} \\
&\leq  2\hrf(k) + 2 \frac{\monee}{\J (1-\q)^2},
\end{align*}
where in the last inequality, recalling Assumption~\ref{ass:alt}\ref{eq:expgradexp_iv_2}, we used
\begin{equation}\label{eq:added}
\begin{gathered}
    \Exptilde[\big]{\norm{\vopt{\wstoc} - \voptj{\wstoc}}^2} \leq \\
    \norm{(I - \jac_1 \Phi(\wstoc{}, \lambda)^\top)^{-1}}^2 \Exptilde{\norm{ \nabla_1 \fo (\wstoc{}, \lambda) - \nabla_1 \bar\fo_J (\wstoc{}, \lambda)}^2} \leq \\
    \norm{(I - \jac_1 \Phi(\wstoc{}, \lambda)^\top)^{-1}}^2\Vartilde{\nabla_1 \bar \fo_J (\wstoc{}, \lambda)} \leq \\
    \frac{\monee}{\J (1-\q)^2}.
\end{gathered}  
\end{equation}
Furthermore, exploiting \Cref{ass:hypergrad} and \ref{ass:innerbackrates}, and \Cref{lm:normv},
\begin{align*}
    a_3 &= \Exptilde[\big]{\norm{\vstocj{\wstoc} \mp \voptj{\wstoc} \mp \vopt{\wstoc}}^2} \\
    &\leq 2\norm{\vopt{\wstoc}}^2 + 4\Exptilde[\big]{\norm{\vopt{\wstoc} - \voptj{\wstoc}}^2} \\ &\quad+ 4\Exptilde[\big]{\norm{\voptj{\wstoc} - \vstocj{\wstoc}}^2} \\
    &\leq 2\frac{\Bo^2}{(1-\q)^2}  + 4 \frac{\monee}{\J (1-\q)^2} + 4\hrf(k),
\end{align*}
where we used \eqref{eq:added} in the last inequality.
Using the formula for the variance of the sum of independent random variables  and Assumption~\ref{ass:alt} we have
\begin{align*}\label{eq:varphiebound}
    \Vartilde{ \jac \bar\Phi_{\J}(\lambda)} \leq \frac{\mtwo}{\J}, \quad
    \Vartilde{\nabla_2  \bar \fo_{\J}(\wstoc{}, \lambda)} \leq \frac{\mtwoe}{\J}.
\end{align*}
Combining the previous bounds together and defining $\hat \Delta_w : = \norm{w(\lambda) - \wstoc}$ and simplifying some terms knowing that $J > 1$ we get that
\begin{align*}
    \Vartilde{\hat \grad f(\lambda)} \leq &\left(\mtwoe + 4\frac{\mtwo(\Bo^2 + \monee) + \LPhi^2\monee}{(1-\q)^2}\right) \frac{2}{\J} 
+ 8(\LPhi^2+ \mtwo)\hrf(k) 
\\&+ 8 \rhol^2  \Delta^2_w \left(\hrf(k) + \frac{\monee}{J(1-\q)^2} \right).
\end{align*}
The proof is completed by taking the total expectation on both sides of the inequality above.
\end{proof}
\endgroup

\subsection{Proof of Theorem~\ref{th:finalconvdet}}\label{proof:finalconvdet}
\begin{proof} Similarly to the proof of \Cref{th:finalconv}, but with $\J_\s =1$, we obtain a number of samples in $\stot$ iterations which is $N = \sum_{\s = 0}^{\stot -1} 2 (t_\s + 1) = 2 \sum_{\s = 1}^{\stot}   \ceil{c_3 \log (\s)}$ + 1. , if $ \stot > 1$
\begin{align*}
    N &\geq 2 c_3 \sum_{\s = \ceil{\stot/2}}^{\stot} \log (\s) \geq  c_3(\stot/2 -1)\log ( \stot/2), \\
    N &\leq 2c_3 \stot \log\left( \frac{1}{\stot}  \sum_{\s = 1}^{\stot} \s \right) + 4 \stot \leq 4 \stot\left[ c_3\log\left(\frac{\stot+1}{2}\right) + 1  \right].
\end{align*}
Therefore, $N = \Theta (\stot \log(\stot))$.

Since in the deterministic case $\Var{\hat \nabla f (\lambda)} = 0$ and $\Exp{\hat \nabla f(\lambda)} = \hat \nabla f(\lambda)$,  \Cref{th:boundbias}\ref{th:boundbias_ii} and setting $\J = 1$ yields
\begin{equation}\label{eq:finalbound_det}
\begin{aligned}
    & \norm{\hat\nabla f(\lambda_\s)  - \nabla f (\lambda_\s)}^2  \\ &\leq 
     3 \left(\Lol + \frac{\Low\LPhi + \rhol\Bo}{1-\q} + \frac{\rhow\Bo\LPhi}{(1-\q)^2} \right)^2\rf(t_\s)  + 3\LPhi^2\hrf(k_\s) +  3 \rhol^2 \rf(t_\s) \hrf(k_\s).
\end{aligned}
\end{equation}
Now we note that, in view of last result of \Cref{th:firstconv}, we have 
\begin{equation*}
    \rf(t_\s) = \q^{2t_\s} \boundw^2, \qquad 
    \hrf(k_\s) = \q^{2k_\s} \frac{\Bo^2}{(1-\q)^2},
\end{equation*}
and consequently, since $t_\s = k_\s$ and $\q^{2x} \leq q^{x}$ with $ x \geq 1$, we get
\begin{equation*}
     \norm{\hat\nabla f(\lambda_\s) - \nabla f (\lambda_\s)}^2 \leq C \q^{2t_\s},
\end{equation*}
where $C$ incorporates all the constants occurring in \eqref{eq:finalbound_det}.

Recall that $t_\s = \ceil{c_3\log (\s+1)}$ and $c_3 \geq 1/\log(1/\q) > 0$. From the change of base formula we have 
\begin{equation*}
t_\s \geq c_3 \log(1/\q)\log_\q(1/(\s+1)) \geq \log_\q(1/(\s+1)),
\end{equation*} 
since $\log_\q(1/(\s+1)) \geq 0$ due to $\q<1$, $s \geq 0$. Consequently,
\begin{equation*}
    \q^{2 t_\s} \leq \q^{2\log_\q(1/(\s+1))} =  \frac{1}{(\s+1)^{2}}.
\end{equation*}
Hence, we can bound the sum of squared errors as follows.
\begin{equation*}
    \sum_{\s = 0}^{\stot-1}  \norm{\hat\nabla f(\lambda_\s) - \nabla f (\lambda_\s)}^2  \leq \sum_{\s = 0}^{\stot-1} \frac{C}{(\s + 1)^2} \leq \sum_{\s = 1}^{\stot} \frac{C }{\s^2}  \leq \frac{C \pi^2 }{6}.
\end{equation*}
Using this result in combination with \Cref{lm:constrained} we obtain \eqref{eq:biratedet}. Therefore, we have $\frac{1}{\stot}\sum_{\s=0}^{\stot-1}  \Exp{\norm{G_\alpha(\lambda_\s)}^2} \leq \epsilon$ in a number of UL iterations  $S = O(\epsilon^{-1})$. Since we proved that $N = \Theta (\stot \log(\stot))$ we obtain the final sample complexity result.
\end{proof}

\section{Lemmas}
\begin{lemma}\label{lm:lipv}
Let \Cref{ass:hypergrad} be satisfied. Then, for every $w \in \R^d$
\begin{equation}
    \norm{\vopt{w(\lambda)} -\vopt{w}} \leq  \left(\frac{\rhow\Bo}{(1-\q)^2} + \frac{\Low}{1-\q} \right)\norm{w(\lambda) - w}.
\end{equation}
\end{lemma}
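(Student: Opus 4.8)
The plan is to treat $\vopt{w}$ as the image of $w$ under a composition of a matrix-inversion map and a gradient map, and to control the two sources of variation separately. I would introduce the shorthand $A(w) := I - \jac_1 \Phi(w,\lambda)^\top$ and $b(w) := \grad_1 \fo(w,\lambda)$, so that by \eqref{eq:LS} we have $\vopt{w} = A(w)^{-1} b(w)$. First I would add and subtract $A(w(\lambda))^{-1} b(w)$ to obtain the decomposition
\[
\vopt{w(\lambda)} - \vopt{w} = A(w(\lambda))^{-1}\big(b(w(\lambda)) - b(w)\big) + \big(A(w(\lambda))^{-1} - A(w)^{-1}\big) b(w),
\]
which isolates the variation of the right-hand side of the linear system from the variation of the system matrix, and then bound each term by the triangle inequality.

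For the first term I would use the Neumann-series bound: since $\norm{\jac_1 \Phi(w,\lambda)^\top} = \norm{\jac_1 \Phi(w,\lambda)} \leq \q < 1$ by Assumption~\ref{ass:hypergrad}\ref{ass:contraction}, the operator $A(w)$ is invertible with $\norm{A(w)^{-1}} \leq 1/(1-\q)$ uniformly in $w$. Combined with the Lipschitz continuity of $\grad_1 \fo(\cdot,\lambda)$ with constant $\Low$ (Assumption~\ref{ass:hypergrad}\ref{ass:lo}), this yields $\norm{A(w(\lambda))^{-1}\big(b(w(\lambda)) - b(w)\big)} \leq \tfrac{\Low}{1-\q}\norm{w(\lambda) - w}$.

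For the second term I would invoke the resolvent identity $A(w(\lambda))^{-1} - A(w)^{-1} = A(w(\lambda))^{-1}\big(A(w) - A(w(\lambda))\big)A(w)^{-1}$, so that applying the inverse bound twice gives $\norm{A(w(\lambda))^{-1} - A(w)^{-1}} \leq \tfrac{1}{(1-\q)^2}\norm{A(w) - A(w(\lambda))}$. Since $A(w) - A(w(\lambda)) = \jac_1 \Phi(w(\lambda),\lambda)^\top - \jac_1 \Phi(w,\lambda)^\top$, the Lipschitz continuity of $\jac_1 \Phi(\cdot,\lambda)$ with constant $\rhow$ (Assumption~\ref{ass:hypergrad}\ref{ass:rho}), together with the invariance of the spectral norm under transposition, bounds this difference by $\rhow\norm{w(\lambda) - w}$. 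Finally I would bound $\norm{b(w)} = \norm{\grad_1 \fo(w,\lambda)} \leq \Bo$ using the fact that $\fo(\cdot,\lambda)$ is $\Bo$-Lipschitz (Assumption~\ref{ass:hypergrad}\ref{ass:lipE}), giving $\tfrac{\rhow\Bo}{(1-\q)^2}\norm{w(\lambda) - w}$ for the second term.

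Adding the two bounds produces exactly the claimed constant. None of the steps is genuinely difficult; the only points requiring care are justifying the uniform invertibility of $A(w)$ and the $1/(1-\q)$ bound on $\norm{A(w)^{-1}}$ via the Neumann series, and applying the resolvent identity in the correct order. The one conceptually important detail is that the uniform bound $\norm{\grad_1 \fo(w,\lambda)} \leq \Bo$ comes from the \emph{Lipschitz} assumption on $\fo$ rather than from smoothness, which is precisely what allows the bound on $\norm{b(w)}$ to hold for arbitrary $w$ without any control on the iterate itself.
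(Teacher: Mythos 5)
Your proposal is correct and follows essentially the same route as the paper: the same splitting of $\vopt{w(\lambda)}-\vopt{w}$ into an inverse-difference term (handled by the resolvent identity and the Neumann-series bound $\norm{(I-A)^{-1}}\le 1/(1-\q)$) and a gradient-difference term, with the same use of Assumptions~\ref{ass:hypergrad}\ref{ass:rho}, \ref{ass:lo} and \ref{ass:lipE}. The only cosmetic difference is that you attach $b(w)$ rather than $b(w(\lambda))$ to the inverse-difference term, which is immaterial since the Lipschitz bound $\norm{\grad_1\fo(\cdot,\lambda)}\le\Bo$ holds uniformly on $\R^d$.
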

\begin{proof}
Let $A_1 := (I- \jac_1 \Phi(w(\lambda), \lambda)^\top)$ and $A_2 = (I- \jac_1 \Phi(w, \lambda)^\top)$.
Then it follows from \Cref{lm:matrixinverse} that
\begin{align*}
    \norm{\vopt{w(\lambda)} -\vopt{w}}
    &\leq \norm{\grad_1 \fo(w(\lambda), \lambda)}\norm{A_1^{-1} - A_2^{-1}} + \Low \norm{A_2^{-1}}  \norm{w(\lambda) - w} \\
    &\leq \norm{\grad_1 \fo(w(\lambda), \lambda)}\norm{A_1^{-1}(A_2 -  A_1)A_2^{-1}} + \frac{ \Low}{1-\q} \norm{w(\lambda) - w} \\
    &\leq \left(\frac{\rhow}{(1-\q)^2}\norm{\grad_1 \fo(w(\lambda),\lambda)} + \frac{\Low}{1-\q} \right)\norm{w(\lambda) - w}.
\end{align*}
Moreover, Assumption~\ref{ass:hypergrad} yields that $\norm{\grad_1 \fo(w(\lambda),\lambda)} \leq \Bo$.
Hence, the statement follows.
\end{proof}

\begin{lemma}\label{lm:normv}
Let \Cref{ass:hypergrad} be satisfied. Then, for every $w \in \R^d$
\begin{equation}
\begin{aligned}
\norm{\vopt{w}} 
\leq \norm{(I - \jac_1\Phi(w, \lambda)^\top)^{-1}} \norm{\grad_1 \fo(w,\lambda)} 
\leq \frac{\Bo}{1 - \q}.
\end{aligned} 
\end{equation}
\end{lemma}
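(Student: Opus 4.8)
The plan is to chain two elementary estimates. First I would invoke the definition of $\vopt{w}$ in \eqref{eq:LS}, namely $\vopt{w} = (I - \jac_1 \Phi(w,\lambda)^\top)^{-1}\grad_1\fo(w,\lambda)$, and apply sub-multiplicativity of the spectral norm with respect to the matrix--vector product, $\norm{Ax}\leq\norm{A}\,\norm{x}$. This immediately yields the first inequality of the statement, with $A = (I - \jac_1\Phi(w,\lambda)^\top)^{-1}$ and $x = \grad_1\fo(w,\lambda)$.

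For the second inequality I would bound the two factors separately. To control the operator norm of the inverse, note that Assumption~\ref{ass:hypergrad}\ref{ass:contraction} gives $\norm{\jac_1\Phi(w,\lambda)}\leq\q<1$, and since the spectral norm is invariant under transposition, $\norm{\jac_1\Phi(w,\lambda)^\top}\leq\q$ as well. The standard Neumann-series argument then guarantees that $I-\jac_1\Phi(w,\lambda)^\top$ is invertible with $\norm{(I-\jac_1\Phi(w,\lambda)^\top)^{-1}}\leq 1/(1-\q)$. To control the gradient factor, I would use that $\fo(\cdot,\lambda)$ is Lipschitz continuous with constant $\Bo$ (Assumption~\ref{ass:hypergrad}\ref{ass:lipE}), which forces $\norm{\grad_1\fo(w,\lambda)}\leq\Bo$ at every point. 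Multiplying the two bounds gives the claimed constant $\Bo/(1-\q)$.

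There is no genuine obstacle here: every ingredient is a one-line consequence of Assumption~\ref{ass:hypergrad}, which is precisely why this result sits in the appendix as a supporting lemma rather than in the main text. If anything warrants a word of care, it is the Neumann bound on the inverse, where one must recall that the transpose preserves the spectral norm before applying the contraction hypothesis; everything else is routine. I note that this lemma supplies exactly the uniform bound on $\norm{\vopt{w}}$ that is implicitly needed elsewhere (e.g.\ it underlies the estimate $\norm{\grad_1\fo(w(\lambda),\lambda)}\leq\Bo$ invoked at the end of \Cref{lm:lipv}).
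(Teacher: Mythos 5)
Your proof is correct and follows essentially the same route as the paper, which simply cites the definition of $\vopt{w}$ together with Assumption~\ref{ass:hypergrad}\ref{ass:contraction} (via the Neumann-series bound of \Cref{lm:matrixinverse}) and Assumption~\ref{ass:hypergrad}\ref{ass:lipE}. You have merely spelled out the sub-multiplicativity and transpose-invariance steps that the paper leaves implicit.
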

\begin{proof}
It follows from the definition of $\vopt{w}$ and  Assumptions~\ref{ass:hypergrad}\ref{ass:contraction} and \ref{ass:hypergrad}\ref{ass:lipE}
\end{proof}

\section{Standard Lemmas}\label{se:standardlemams}
For completeness, in this section we state without proof some standard results used in the analysis. A proof can be found in \citep{grazzi2021convergence}.
\begin{lemma}
\label{lem:20200601a}
Let $X$ be a random vector with values in $\R^d$
and suppose that $\EE[\norm{X}^2]<+\infty$.
Then $\EE[X]$ exists in $\R^d$ and 
$\norm{\EE[X]}^2 \leq \EE[\norm{X}^2]$.
\end{lemma}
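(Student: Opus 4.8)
The plan is to first establish that the vector-valued expectation $\EE[X]$ is well-defined as an element of $\R^d$, and only then to derive the inequality by a coordinatewise convexity (Jensen-type) argument. Establishing existence before manipulating $\EE[X]$ is the one point that warrants care; the inequality itself is essentially a one-line consequence of the nonnegativity of the variance.

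For existence I would argue componentwise. Writing $X = (X_1,\dots,X_d)$, each coordinate satisfies $|X_i| \leq \norm{X}$, so $\EE[|X_i|] \leq \EE[\norm{X}]$. Since $\EE[\norm{X}] \leq \sqrt{\EE[\norm{X}^2]} < +\infty$ by the scalar Cauchy--Schwarz inequality (equivalently, Jensen applied to the concave map $t \mapsto \sqrt{t}$), each $X_i$ is integrable, and therefore $\EE[X] = (\EE[X_1],\dots,\EE[X_d])$ exists in $\R^d$.

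For the inequality I would invoke the elementary fact that for any scalar random variable $Y$ with $\EE[Y^2] < +\infty$ one has $(\EE[Y])^2 \leq \EE[Y^2]$, which is just the nonnegativity of $\operatorname{Var}(Y)$. Applying this to each coordinate and summing yields
\[
\norm{\EE[X]}^2 = \sum_{i=1}^d (\EE[X_i])^2 \leq \sum_{i=1}^d \EE[X_i^2] = \EE\Big[\sum_{i=1}^d X_i^2\Big] = \EE[\norm{X}^2],
\]
where the interchange of finite sum and expectation is justified by the integrability established above.

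There is no serious obstacle here beyond the logical ordering just described. An equally short coordinate-free route is available: setting $\mu := \EE[X]$, one writes $\norm{\mu}^2 = \mu^\top \EE[X] = \EE[\mu^\top X] \leq \norm{\mu}\,\EE[\norm{X}]$ by Cauchy--Schwarz and linearity, hence $\norm{\mu} \leq \EE[\norm{X}]$, and squaring together with $(\EE[\norm{X}])^2 \leq \EE[\norm{X}^2]$ closes the argument. I would most likely present the coordinatewise version, since it makes both the existence of $\EE[X]$ and the bound transparent in a single pass.
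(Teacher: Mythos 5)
Your proof is correct, and it takes a mildly different route from the paper's. The paper establishes existence by noting that $\EE[\norm{X}] \leq (\EE[\norm{X}^2])^{1/2} < +\infty$, concludes that $X$ is Bochner integrable, and then applies the vector-valued Jensen inequality $\norm{\EE[X]} \leq \EE[\norm{X}]$ followed by $(\EE[\norm{X}])^2 \leq \EE[\norm{X}^2]$; your closing ``coordinate-free route'' via Cauchy--Schwarz on $\mu^\top X$ is essentially a hands-on proof of that same Jensen step. Your main, coordinatewise argument is genuinely more elementary: it bypasses Bochner integration and the intermediate quantity $\EE[\norm{X}]$ entirely, reducing everything to the scalar fact $(\EE[X_i])^2 \leq \EE[X_i^2]$ and summing over coordinates. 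What the paper's approach buys is independence from the choice of coordinates and of the Euclidean structure (it works verbatim for any norm on a Banach space), whereas your version is tied to the $\ell^2$ decomposition $\norm{x}^2 = \sum_i x_i^2$; for the finite-dimensional Euclidean setting of this paper the two are interchangeable, and yours is arguably the more transparent. One small point in your favor: your existence step is stated more carefully than the paper's, which writes $\EE[\norm{X}] \leq \EE[\norm{X}^2]$ where the square root is clearly intended.
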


\ifstandardproofs
\begin{proof}
It follows from H\"older's inequality that
 $\EE[\norm{X}] \leq \EE[\norm{X}^2]$. Therefore,
$X$ is Bochner integrable with respect to $\PP$ and
$\norm{\EE[X]} \leq \EE[\norm{X}]$.
Hence, using Jensen's inequality we have
$\norm{\EE[X]}^2 \leq (\EE[\norm{X}])^2 \leq 
\EE[\norm{X}^2]$ and the statement follows. 
\end{proof}
\fi

\begin{definition}
Let $X$ be a random vector with value in $\R^d$ 
such that $\EE[\norm{X}^2]<+\infty$. Then
the variance of $X$ is
\begin{equation}\label{eq:variance}
    \Var{X} := \Exp{\norm{X - \Exp{X}}^2}
\end{equation}
\end{definition}

\begin{lemma}[Properties of the variance]
\label{lem:varprop}
Let $X$ and $Y$ be two independent random variables with values in $\R^d$
and let $A$ be a random matrix with values in $\R^{n\times d}$ which is independent on $X$. We also assume that $X,Y$, and $A$ have finite second moment. Then the following hold.
\begin{enumerate}[label={\rm (\roman*)}]
\item\label{lem:varprop_i} 
$\Var{X} = \EE[\norm{X}^2] - \norm{\EE[X]}^2$,
\item\label{lem:varprop_ii} 
$\EE[\norm{X- x}^2] = \Var{X} + \norm{\EE[X] - x}^2$ $ \forall x \in \R^d$. 
Hence, $\Var{X} = \min_{x \in \R^d} \EE[\norm{X- x}^2]$.
\item\label{lem:varprop_iii} 
$ \Var{X + Y} = \Var{X} + \Var{Y}$,
\item\label{lem:varprop_iv} 
$\Var{AX}  \leq \Var{A}\Var{X} + \norm{\Exp{A}}^2\Var{X} + \norm{\Exp{X}}^2\Var{A}$.
\end{enumerate}
\end{lemma}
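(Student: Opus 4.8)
The plan is to establish the four claims in turn, relying throughout on the bias--variance algebra obtained by expanding the squared norm and on the vanishing of cross terms under independence and centering. For part~\ref{lem:varprop_i} I would expand $\norm{X - \Exp{X}}^2 = \norm{X}^2 - 2\langle X, \Exp{X}\rangle + \norm{\Exp{X}}^2$ inside the expectation; linearity of $\EE$ collapses the middle term to $-2\norm{\Exp{X}}^2$, leaving $\Exp{\norm{X}^2} - \norm{\Exp{X}}^2$. Part~\ref{lem:varprop_ii} follows the same pattern: writing $X - x = (X - \Exp{X}) + (\Exp{X} - x)$ and expanding, the cross term is $2\langle \Exp{X - \Exp{X}}, \Exp{X} - x\rangle = 0$ since $\Exp{X - \Exp{X}} = 0$ and $\Exp{X} - x$ is deterministic, yielding $\Var{X} + \norm{\Exp{X} - x}^2$; the minimization claim is then immediate, the minimum being attained at $x = \Exp{X}$.

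For part~\ref{lem:varprop_iii} I would center both variables and expand $\Var{X + Y} = \Exp{\norm{(X - \Exp{X}) + (Y - \Exp{Y})}^2}$ into $\Var{X} + \Var{Y}$ plus a cross term $2\Exp{\langle X - \Exp{X}, Y - \Exp{Y}\rangle}$. Independence of $X$ and $Y$ lets the expectation of the inner product factor as $\langle \Exp{X - \Exp{X}}, \Exp{Y - \Exp{Y}}\rangle = 0$, so the cross term drops and the identity follows.

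The main work is part~\ref{lem:varprop_iv}. I would introduce the centered quantities $\tilde A := A - \Exp{A}$ and $\tilde X := X - \Exp{X}$ and expand $AX = (\Exp{A} + \tilde A)(\Exp{X} + \tilde X)$. Independence of $A$ and $X$ together with $\Exp{\tilde A} = 0$ and $\Exp{\tilde X} = 0$ gives $\Exp{AX} = \Exp{A}\Exp{X}$, so that $AX - \Exp{AX} = \Exp{A}\tilde X + \tilde A\Exp{X} + \tilde A\tilde X$. Squaring the norm produces three diagonal terms and three cross terms. The diagonal terms are controlled by sub-multiplicativity of the spectral norm: $\Exp{\norm{\Exp{A}\tilde X}^2} \leq \norm{\Exp{A}}^2\Var{X}$, $\Exp{\norm{\tilde A\Exp{X}}^2} \leq \Var{A}\norm{\Exp{X}}^2$, and $\Exp{\norm{\tilde A\tilde X}^2} \leq \Exp{\norm{\tilde A}^2}\Exp{\norm{\tilde X}^2} = \Var{A}\Var{X}$, the last equality using independence. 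Summing these three bounds gives exactly the claimed inequality, provided the cross terms vanish. The delicate step, and the main obstacle, is verifying precisely this: the term $\Exp{\langle \Exp{A}\tilde X, \tilde A\Exp{X}\rangle}$ factors over the independent $\tilde X$ and $\tilde A$ and is killed by $\Exp{\tilde X} = 0$, while the remaining two cross terms each contain one centered variable twice and the other once, and vanish by conditioning on the doubly occurring variable and using that the expectation of the singly occurring centered factor is zero. Keeping careful track of which centered variable appears once versus twice, and invoking independence and zero mean in the correct order, is the crux; it is also the only place where the inequality rather than equality enters, through spectral-norm sub-multiplicativity in the diagonal terms.
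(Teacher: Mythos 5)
Your proposal is correct and follows essentially the same route as the paper: parts (i)--(iii) via the standard expansion with cross terms killed by centering and independence, and part (iv) via the centered decomposition $AX-\EE[A]\EE[X]=\EE[A]\tilde X+\tilde A\,\EE[X]+\tilde A\tilde X$, with the three cross terms vanishing by independence and zero mean and the diagonal terms bounded by sub-multiplicativity. The only cosmetic difference is that the paper performs this decomposition in two stages (first $A(X-\bar X)+(A-\bar A)\bar X$, then splitting $A(X-\bar X)$ further), whereas you expand all three centered terms at once; the resulting three diagonal terms and the vanishing cross terms are identical.
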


\ifstandardproofs
\begin{proof}
\ref{lem:varprop_i}-\ref{lem:varprop_ii}:
Let $x \in \R^d$. Then, $\norm{X - x}^2 = \norm{X- \EE[X]}^2 + \norm{\EE[X] - x}^2
+ 2(X- \EE[X])^\top(\EE[X]- x)$. Hence, taking the expectation
we get $\EE[\norm{X- x}^2] = \Var{X} + \norm{\EE[X] - x}^2$.
Therefore, $\EE[\norm{X- x}^2] \geq \Var{X}$ and for $x=\EE[X]$ we get $\EE[\norm{X- x}^2] = \Var{X}$. Finally, for $x=0$ we get \ref{lem:varprop_i}.

\ref{lem:varprop_iii}:
Let $\bar X := \Exp{X}$ and $\bar Y := \Exp{Y}$, we have
\begin{align*}
    \Var{X + Y} &= \Exp{\norm{X - \bar X + Y - \bar Y}^2} \\
    &= \Exp{\norm{X- \bar X}^2} + \Exp{\norm{Y- \bar Y}^2} + 2 \Exp{X - \bar X}^\top\Exp{Y - \bar Y} \\
    &= \Exp{\norm{X- \bar X}^2} + \Exp{\norm{Y- \bar Y}^2}
\end{align*}
Recalling the definition of $\Var{X}$ the statement follows.

\ref{lem:varprop_iv}:
Let $\bar A := \Exp{A}$ and $\bar X := \Exp{X}$.
Then,
\begin{align*}
\Var{AX} &= \Exp{\norm{AX - \Exp{A}\Exp{X}}^2} \\
&= \Exp{\norm{AX -A \bar X + A \bar X - \bar A \bar X}^2} \\
&= \Exp{\norm{A(X - \bar X) + (A  - \bar A) \bar X}^2} \\
&= \Exp{\norm{A(X - \bar X)}^2} + \Exp{\norm{(A  - \bar A) \bar X}^2} \\
&\quad + 2\Exp{(X- \bar X)^\top A^\top(A - \bar A)\bar X}\\
&= \Exp{\norm{A(X - \bar X)}^2} + \Exp{\norm{(A  - \bar A) \bar X}^2} \\
&\quad +  2\Exp{(X- \bar X)^\top} \Exp{A^\top(A - \bar A)\bar X}\\
&=\Exp{\norm{(A -\bar A + \bar A)(X - \bar X)}^2} + \Exp{\norm{(A  - \bar A) \bar X}^2} \\
&=\Exp{\norm{(A -\bar A )(X - \bar X)}^2} + \Exp{\norm{(A  - \bar A) \bar X}^2} + \Exp{\norm{\bar A(X  - \bar X)}^2}  \\
&\quad+ 2\Exp{(X - \bar X)^\top (A - \bar A)^\top \bar A (X - \bar X)} \\
&=\Exp{\norm{(A -\bar A )(X - \bar X)}^2} + \Exp{\norm{(A  - \bar A) \bar X}^2} + \Exp{\norm{\bar A(X  - \bar X)}^2}  \\
&\quad+ 2\Exp{(X - \bar X)^\top \Exp{A - \bar A \given X}^\top \bar A (X - \bar X)} \\
&\leq \Exp{\norm{A -\bar A}^2}\Exp{\norm{X - \bar X}^2}  + \Exp{\norm{A  - \bar A}^2} \norm{\bar X}^2 + \norm{\bar A}^2 \Exp{\norm{X  - \bar X)}^2}
\end{align*}
In the above equalities we have used the independence of $A$ and $X$ in the formulas $\EE[AX] = \EE[A]\EE[X]$,
$\EE[(X - \bar{X})^\top A^\top(A - \bar{A} \bar{X})] 
= \EE[(X - \bar{X})^\top] \EE[A^\top(A - \bar{A} \bar{X})]$, and $\EE[(X - \bar{X})^\top (A - \bar{A})^\top \bar{A}(X - \bar{X}) \,\vert\, X] = 
(X - \bar{X})^\top \EE[(A - \bar{A})^\top  \,\vert\, X]\bar{A}(X - \bar{X})$.
\end{proof}
\fi

\begin{definition}\textit{(Conditional Variance)}.
Let $X$ be a random variable with values in $\R^d$ 
and $Y$ be a random variable with values in a measurable space $\mathcal{Y}$. We call \emph{conditional variance} of $X$ given $Y$ the quantity
\begin{equation*}
    \Var{X \given Y} := \Exp{\norm{X -\Exp{X \given Y}}^2 \given Y}.
\end{equation*}{}
\end{definition}{}

\begin{lemma}{(Law of total variance)}\label{lm:totvariance}
Let $X$ and $Y$ be two random variables, we can prove that 
\begin{equation}
    \Var{X} = \Exp{\Var{X \given Y}} + \Var{\Exp{X \given Y}}
\end{equation}
\end{lemma}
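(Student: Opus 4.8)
The plan is to set $\mu := \Exp{X}$ and write $m := \Exp{X \given Y}$ for the (random, $Y$-measurable) conditional mean, recalling the tower property $\Exp{m} = \mu$. The idea is to split the deviation $X - \mu$ into a ``within'' part $X - m$ and a ``between'' part $m - \mu$, expand the squared norm, and show that the resulting cross term integrates to zero. Concretely, I would start from the pointwise identity
\[
\norm{X - \mu}^2 = \norm{X - m}^2 + \norm{m - \mu}^2 + 2(X - m)^\top(m - \mu),
\]
and then take the total expectation of both sides.

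First I would handle the two square terms. Taking the expectation of $\norm{X - m}^2$ and inserting a conditioning on $Y$ via the tower property gives $\Exp{\Exp{\norm{X - m}^2 \given Y}}$, and the inner conditional expectation is exactly $\Var{X \given Y}$ by its definition; hence this term equals $\Exp{\Var{X \given Y}}$. For the second square term, since $\mu = \Exp{m}$, the quantity $\Exp{\norm{m - \mu}^2} = \Exp{\norm{m - \Exp{m}}^2}$ is by definition the variance of the random vector $m = \Exp{X \given Y}$, i.e.\ $\Var{\Exp{X \given Y}}$.

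The main obstacle---really the only nontrivial point---is showing that the cross term vanishes. Here I would condition on $Y$ and use that $m - \mu$ is $Y$-measurable, so it can be pulled outside the conditional expectation:
\[
\Exp{(X - m)^\top(m - \mu) \given Y} = \bigl(\Exp{X \given Y} - m\bigr)^\top (m - \mu).
\]
Since $\Exp{X \given Y} = m$ by the definition of $m$, the first factor is the zero vector, so the conditional expectation of the cross term is $0$ and therefore so is its total expectation. Combining the three contributions yields $\Var{X} = \Exp{\Var{X \given Y}} + \Var{\Exp{X \given Y}}$, as claimed. The only care needed is that all second moments are finite so that each expectation is well defined and the tower property applies, which is part of the standing hypotheses.
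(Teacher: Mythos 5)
Your proof is correct, but it takes a different route from the paper's. You use the orthogonal decomposition $X-\mu = (X-m) + (m-\mu)$ with $m = \Exp{X \given Y}$, expand the squared norm, and kill the cross term by conditioning on $Y$ and using that $m-\mu$ is $Y$-measurable while $\Exp{X - m \given Y} = 0$. The paper instead never touches a cross term: it applies the identity $\Var{Z} = \Exp{\norm{Z}^2} - \norm{\Exp{Z}}^2$ twice --- once unconditionally to write $\Var{X} = \Exp{\norm{X}^2} - \norm{\Exp{X}}^2$, and once conditionally to rewrite $\Exp{\norm{X}^2 \given Y}$ as $\Var{X\given Y} + \norm{\Exp{X\given Y}}^2$ --- then regroups the leftover terms into $\Var{\Exp{X\given Y}}$ via the tower property. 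The paper's computation is shorter and purely algebraic, reducing everything to the already-established variance identity (its Lemma on properties of the variance); yours makes the underlying geometry explicit --- it is the Pythagorean/projection argument, with $\Exp{\cdot\given Y}$ acting as an orthogonal projection in $L^2$ --- at the cost of having to verify that the cross term is integrable and vanishes. Your remark that finite second moments are needed for the cross term to be well defined (via Cauchy--Schwarz and conditional Jensen) is the right care to take, and with that the argument is complete.
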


\ifstandardproofs
\begin{proof}
\begin{align*}
    \Var{X} &= \Exp{\norm{X - \Exp{X}}^2} \\
    \text{(var. prop.)} \implies \quad 
    &= \Exp{\norm{X}^2} - \norm{\Exp{X}}^2 \\
    \text{(tot. expect.)} \implies \quad
    &= \Exp{\Exp{\norm{X}^2 \given Y}} - \norm{\Exp{\Exp{X \given Y}}}^2 \\
     \text{(var. prop.)} \implies \quad
     &= \Exp{\Var{X \given Y} + \norm{\Exp{X \given Y}}^2} - \norm{\Exp{\Exp{X \given Y}}}^2 \\
    &= \Exp{\Var{X \given Y}} + \left(\Exp{\norm{\Exp{X \given Y}}^2} - \norm{\Exp{\Exp{X \given Y}}}^2\right)
\end{align*}
recognizing that the term inside the parenthesis is the conditional variance of $\Exp{X \given Y}$ gives the result.
\end{proof}
\fi

\begin{lemma}
\label{lemA4}
Let $\zeta$ and $\eta$ be two independent random variables with values in $\CZ$ and $\mathcal{Y}$ respectively.
Let $\psi\colon \mathcal{Y} \to \R^{m\times n}, \phi\colon \CZ \to \R^{n\times p}$, and
$\varphi\colon \mathcal{Y} \to \R^{p\times q}$ matrix-valued measurable functions. Then
\begin{equation}
\EE[\psi(\eta) (\phi(\zeta) - \EE[\phi(\zeta)] )\varphi(\eta)] = 0
\end{equation}
\end{lemma}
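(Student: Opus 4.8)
The plan is to condition on $\eta$ and exploit the independence of $\zeta$ and $\eta$. The whole statement reduces to the observation that, once we fix $\eta$, the centered factor $\phi(\zeta) - \Exp{\phi(\zeta)}$ has conditional mean zero, so it annihilates the surrounding deterministic (given $\eta$) factors $\psi(\eta)$ and $\varphi(\eta)$.

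Concretely, I would first invoke the tower property to write
\begin{equation*}
\Exp{\psi(\eta)\,(\phi(\zeta) - \Exp{\phi(\zeta)})\,\varphi(\eta)}
= \Exp[\big]{\,\Exp{\psi(\eta)\,(\phi(\zeta) - \Exp{\phi(\zeta)})\,\varphi(\eta)\given \eta}\,}.
\end{equation*}
Since $\psi(\eta)$ and $\varphi(\eta)$ are measurable functions of $\eta$, they are constant given $\eta$ and can be pulled out of the inner conditional expectation, yielding
\begin{equation*}
\Exp{\psi(\eta)\,(\phi(\zeta) - \Exp{\phi(\zeta)})\,\varphi(\eta)\given \eta}
= \psi(\eta)\,\Exp{\phi(\zeta) - \Exp{\phi(\zeta)}\given \eta}\,\varphi(\eta).
\end{equation*}
Here I would be careful to justify that the matrix products distribute through the (entrywise) conditional expectation, which is immediate because each entry of the product is a finite sum of products of an $\eta$-measurable scalar with a scalar coming from $\phi(\zeta)$.

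The key step is then that independence of $\zeta$ and $\eta$ gives $\Exp{\phi(\zeta)\given \eta} = \Exp{\phi(\zeta)}$, so the inner centered factor vanishes:
\begin{equation*}
\Exp{\phi(\zeta) - \Exp{\phi(\zeta)}\given \eta} = \Exp{\phi(\zeta)} - \Exp{\phi(\zeta)} = 0.
\end{equation*}
Substituting this back makes the inner conditional expectation equal to the zero matrix for every value of $\eta$, and taking the outer expectation gives the claim. There is no substantive obstacle here; the only point requiring mild care is integrability, so that all conditional expectations are well defined and the factorizations are legitimate. I would dispatch this by noting that the matrix-valued functions are assumed to have the finite moments needed elsewhere in the analysis (they are the Jacobian, gradient, and solution quantities appearing in \Cref{th:varboundone}), so the products are integrable and the manipulations above are valid.
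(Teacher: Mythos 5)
Your proof is correct and follows essentially the same route as the paper: condition on $\eta$, pull out the $\eta$-measurable factors $\psi(\eta)$ and $\varphi(\eta)$ by linearity, use independence to reduce the inner conditional expectation to $\psi(\eta)\,\EE\big[\phi(\zeta)-\EE[\phi(\zeta)]\big]\,\varphi(\eta)=0$, and finish with the tower property. Your added remarks on entrywise linearity and integrability are fine but not a substantive departure.
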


\ifstandardproofs
\begin{proof}
Since, for every $y \in \mathcal{Y}$, $B \mapsto \psi(y) B \varphi(y)$ is linear and $\zeta$ and $\eta$ 
are independent, we have
\begin{equation*}
\EE[\psi(\eta) (\psi(\zeta) - \EE[\psi(\zeta)]) \varphi(\eta) \,\vert \eta] = \psi(\eta) \EE\big[\phi(\zeta) - \EE[\phi(\zeta)]\big] \varphi(\eta) =0.
\end{equation*}
Taking the expectation the statement follows.
\end{proof}
\fi

\begin{lemma}
\label{lm:matrixinverse}
Let $A$ be a square matrix such that $\norm{A}\leq q < 1$
Then, $I - A$ is invertible and
\begin{align*}
\norm{(I - A)^{-1}} \leq \frac{1}{1 - q}.
\end{align*}
\end{lemma}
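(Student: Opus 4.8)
The plan is to construct the inverse of $I-A$ explicitly through the Neumann series $\sum_{i=0}^{\infty} A^i$ and then bound the norm of this series by a geometric sum.

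First I would establish that the series converges in the space of $d\times d$ matrices equipped with the spectral norm. Since this norm is submultiplicative we have $\norm{A^i} \leq \norm{A}^i \leq q^i$, and because $q<1$ the scalar geometric series $\sum_{i=0}^{\infty} q^i$ converges. Hence the partial sums $S_n := \sum_{i=0}^{n} A^i$ form a Cauchy sequence, and as the space of matrices is complete, $S_n$ converges to some matrix $S$.

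Next I would verify that $S = (I-A)^{-1}$. The key computation is the telescoping identity $(I-A)S_n = I - A^{n+1}$, and symmetrically $S_n(I-A) = I - A^{n+1}$. Since $\norm{A^{n+1}} \leq q^{n+1} \to 0$, letting $n \to \infty$ and using the continuity of matrix multiplication with respect to the spectral norm gives $(I-A)S = S(I-A) = I$. Thus $I-A$ is invertible and its inverse is $S$. The norm bound then follows from the triangle inequality and the geometric sum:
\[
    \norm{(I-A)^{-1}} = \norm{S} \leq \sum_{i=0}^{\infty} \norm{A^i} \leq \sum_{i=0}^{\infty} q^i = \frac{1}{1-q}.
\]

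This argument is entirely standard, so there is no genuine obstacle; the only step deserving explicit justification is the passage to the limit, i.e.\ that one may interchange the limit of $S_n$ with left and right multiplication by $I-A$. This is immediate from the continuity of the (bilinear) matrix product, which in turn follows from submultiplicativity of the spectral norm.
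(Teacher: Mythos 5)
Your proposal is correct and follows essentially the same route as the paper: both construct the inverse as the Neumann series $\sum_{i=0}^\infty A^i$, justify convergence via $\norm{A^i}\le q^i$ and the geometric series, verify invertibility through the telescoping identity for the partial sums, and bound the norm by $\sum q^i = 1/(1-q)$. Your version merely spells out the Cauchy-sequence and continuity details a bit more explicitly than the paper does.
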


\ifstandardproofs
\begin{proof}
Since $\norm{A} \leq q < 1$,
\begin{equation*}
\sum_{k=0}^\infty \norm{A}^k \leq \sum_{k=0}^\infty q^k 
= \frac{1}{1-q}.
\end{equation*}
Thus, the series $\sum_{k=0}^\infty A^k$ is convergent, say to $B$, and
\begin{equation}
    (I - A) \sum_{\s=0}^k A^i = \sum_{\s=0}^k A^i(I - A) = \sum_{\s=0}^k A^i - \sum_{\s=0}^{k+1} A^i + I \to I,
\end{equation}
so that $(I-A)B = B(I-A)=I$. Therefore, $I-A$ is invertible
with inverse $B$ and hence $\norm{(I-A)^{-1}} \leq \sum_{k=0}^\infty\norm{A}^k \leq 1/(1-q)$.
\end{proof}
\fi

\bibliography{ref}

\end{document}